\title{On the Expressivity of Neural Networks for Deep Reinforcement Learning}
\def\blfootnote{\xdef\@thefnmark{}\@footnotetext}
\author[1]{Kefan Dong\thanks{indicates equal contribution}\thanks{dkf16@mails.tsinghua.edu.cn}}
\author[2]{Yuping Luo\normalfont\normalfont\textsuperscript{*}\thanks{yupingl@cs.princeton.edu}}
\author[3]{Tengyu Ma\thanks{tengyuma@stanford.edu}}
\affil[1]{\normalsize Institute for Interdisciplinary Information Sciences, Tsinghua University}
\affil[2]{\normalsize Computer Science Department, Princeton University}
\affil[3]{\normalsize Department of Computer Science and Statistics, Stanford University}
\newcommand{\E}{\mathbb{E}}
\newcommand{\R}{\mathbb{R}}
\newcommand{\defeq}{\overset{\text{\tiny def}}{=}}
\newcommand{\calA}{\mathcal{A}}
\newcommand{\calB}{\mathcal{B}}
\newcommand{\calS}{\mathcal{S}}
\newcommand{\calT}{\mathcal{T}}
\newcommand{\calK}{\mathcal{K}}
\newcommand{\calQ}{\mathcal{Q}}
 \newcommand{\dy}{f}   \newcommand{\ind}{\mathbb{I}} \newcommand{\floor}[1]{\lfloor #1 \rfloor}
\newcommand{\relu}[1]{\left[ #1 \right]_+}
\newcommand{\bit}[2]{{#1}^{(#2)}}
\newcommand{\pH}{H} \newcommand{\cstt}{\varepsilon} \newcommand{\ns}{\hat{s}} \newcommand{\fns}{\bar{s}} \newcommand{\MDP}{M} \newcommand{\oracle}{\textsc{Q-Oracle}} \newcommand{\optV}{V^\star}
\newcommand{\optQ}{Q^\star}
\newcommand{\optpi}{\pi^\star}
\newcommand{\bw}{\mathbf{w}}
\newcommand{\bc}{\mathbf{c}}
\newcommand{\clip}{\mathop{\textsc{clip}}}
\newcommand{\argmax}{\mathop{\arg\max}}
\newcommand{\bootspi}{\pi^{\textup{boots}}_{k, Q, \hat{f}}(s)}
\newcommand{\state}{s}
\newcommand{\action}{a}
\newcommand{\reward}{r}
\newcommand{\netM}{M_\theta}
\newcommand{\netQ}{Q_\varphi}
\newcommand{\netPi}{\pi_\beta}
\newcommand{\buffer}{\mathcal{B}}
\newcommand{\Ninit}{n_{\text{init}}}
\newcommand{\Npolicy}{n_{\text{policy}}}
\newcommand{\Nmodel}{n_{\text{model}}}
\newcommand{\Nreal}{n_{\text{real}}}
\newcommand{\Nstart}{n_{\text{start}}}
\newcommand{\Breal}{\buffer_{\text{real}}}
\newcommand{\Bstart}{\buffer_{\text{start}}}
\newcommand{\Bfake}{\buffer_{\text{fake}}}
\newcommand{\Niter}{n_{\text{iter}}}
\newcommand\numberthis{\addtocounter{equation}{1}\tag{\theequation}}
\newtheorem{theorem}{Theorem}[section]
\newtheorem{lemma}[theorem]{Lemma}
\newtheorem{definition}[theorem]{Definition}
\newtheorem{corollary}[theorem]{Corollary}
\newcommand{\bts}{\textup{BOOTS}}
\begin{document}

\maketitle

\begin{abstract}
	We compare the model-free reinforcement learning with the model-based approaches through the lens of the expressive power of neural networks for policies, $Q$-functions, and dynamics.  We show, theoretically and empirically, that even for one-dimensional continuous state space, there are many MDPs whose optimal $Q$-functions and policies are much more complex than the dynamics. We hypothesize many real-world MDPs also have a similar property. For these MDPs, model-based planning is a favorable algorithm, because the resulting policies can approximate the optimal policy significantly better than a neural network parameterization can, and model-free or model-based policy optimization rely on policy parameterization. Motivated by the theory, we apply a simple multi-step model-based bootstrapping planner ({\bts}) to bootstrap a weak $Q$-function into a stronger policy. Empirical results show that applying {\bts} on top of model-based or model-free policy optimization algorithms at the test time improves the performance on MuJoCo benchmark tasks.
\end{abstract}

\section{Introduction}

Model-based deep reinforcement learning (RL) algorithms offer a lot of potentials in achieving significantly better sample efficiency than the model-free algorithms for continuous control tasks. We can largely categorize the model-based deep RL algorithms into two types: 1. model-based policy optimization algorithms which learn policies or $Q$-functions, parameterized by neural networks, on the estimated dynamics, using off-the-shelf model-free algorithms or their variants~\citep{Luo2018AlgorithmicFF,Janner2019WhenTT,Kaiser2019ModelBasedRL,METRPO, MVE,STEVE}, and 2. model-based planning algorithms, which plan with the estimated dynamics~\cite{MBMF, PETS,POPLIN}.

A deeper theoretical understanding of the pros and cons of model-based and the model-free algorithms in the continuous state space case  will provide guiding principles for designing and applying new sample-efficient methods.
The prior work on the comparisons of model-based and model-free algorithms mostly focuses on their sample efficiency gap, in the case of tabular MDPs ~\citep{zanette2019tighter, jin2018q}, linear quadratic regulator~\citep{tu2018gap}, and contextual decision process with sparse reward~\citep{sun2019model}.

In this paper, we theoretically compare model-based RL and model-free RL in the continuous state space through the lens of \textit{approximability} by neural networks.
 What is the representation power of neural networks for expressing the $Q$-function, the policy, and the dynamics?

Our main finding is that even for the case of one-dimensional continuous state space, there can be a massive gap between the approximability of $Q$-function and the policy and that of the dynamics. The optimal $Q$-function and policy can require \textit{exponentially more neurons} to approximate by neural networks than the dynamics. 

We construct environments where the dynamics are simply piecewise linear functions with constant pieces, but the optimal $Q$-functions and the optimal policy require an exponential (in the horizon) number of linear pieces, or exponentially wide neural networks, to approximate.\footnote{
	In turn, the dynamics can also be much more complex than the $Q$-function. Consider the following situation: a subset of the coordinates of the state space can be arbitrarily difficult to express by neural networks, but the reward function can only depend on the rest of the coordinates and remain simple.
}
The approximability gap can also be observed empirically on (semi-) randomly generated piecewise linear dynamics with a decent chance. (See Figure~\ref{fig:truly_random} for two examples.) This indicates that the such MDPs are common in the sense that they do not form a degenerate set of measure zero.

We note that for tabular MDPs, it has long been known that for factored MDPs, the dynamics can be simple whereas the value function is not~\citep{koller1999computing}. This is to our knowledge the first theoretical study of the expressivity of \textit{neural networks} in the contexts of deep reinforcement learning. Moreover, it's perhaps somewhat surprising that an approximation power gap can occur even for one-dimensional state space with continuous dynamics.

\begin{figure}
~~~~~~~	\begin{minipage}[c]{0.60\textwidth}
		\centering
		\begin{subfigure}[b]{\textwidth}
			\includegraphics[width=0.49\textwidth]{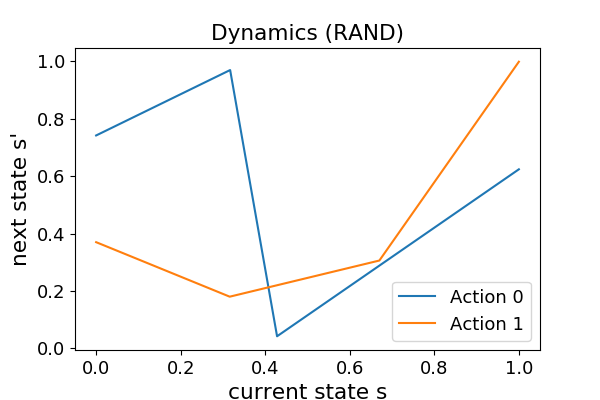}
			\includegraphics[width=0.48\textwidth]{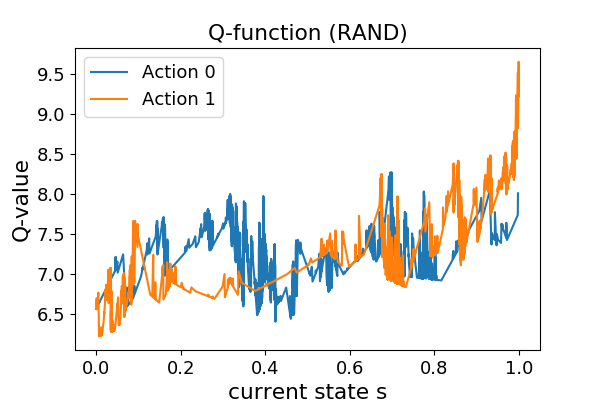}
								\end{subfigure}
		\\
		\begin{subfigure}[b]{\textwidth}
			\includegraphics[width=0.49\textwidth]{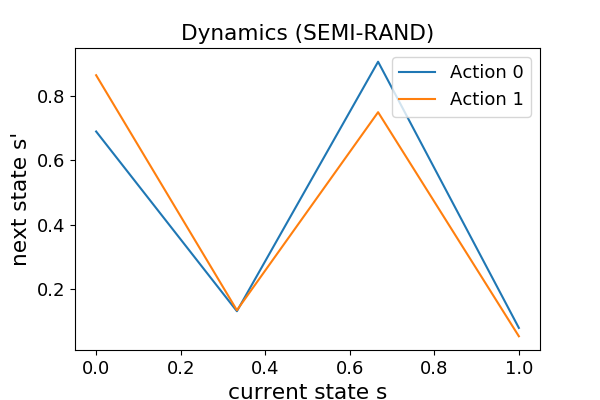}
			\includegraphics[width=0.48\textwidth]{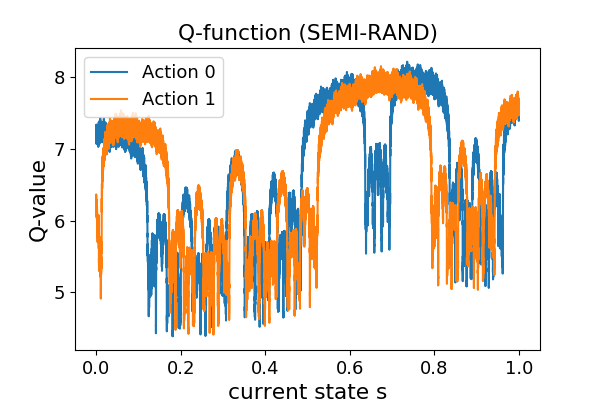}
								\end{subfigure}
	\end{minipage}
	\begin{minipage}[c]{0.30\textwidth}
		\caption{{\bf Left:} The dynamics of two randomly generated MDPs (from the RAND, and SEMI-RAND methods outlined in Section~\ref{sec:random_mdp} and detailed in Appendix~\ref{sec:pseudo-random}). {\bf Right:} The corresponding $Q$-functions which are more complex than the dynamics (more details in Section~\ref{sec:random_mdp}).
		}
		\label{fig:truly_random}
	\end{minipage}
\end{figure}

The theoretical construction shows a dichotomy between model-based planning algorithms vs  (model-based or model-free) policy optimization algorithms. When the approximability gap occurs, any deep RL algorithms with policies parameterized by neural networks will suffer from a sub-optimal performance.  These algorithms include both model-free algorithms such as DQN~\citep{Mnih2015HumanlevelCT} and SAC~\citep{haarnoja2018soft}, and model-based policy optimization algorithms such as SLBO~\citep{Luo2018AlgorithmicFF} and MBPO~\citep{Janner2019WhenTT}. To validate the intuition, we empirically apply these algorithms to the constructed or the randomly generated MDPs. Indeed, they fail to converge to the optimal rewards even with sufficient samples, which suggests that they suffer from the lack of expressivity.

On the other hand, model-based planning algorithms should not suffer from the lack of expressivity, because they only use the learned, parameterized dynamics, which are easy to express.
In fact, even a partial planner can help improve the expressivity of the policy. If we plan for $k$ steps and then resort to some $Q$-function for estimating the total reward of the remaining steps, we can obtain a policy with $2^k$ more pieces than what $Q$-function has. (Theorem~\ref{thm:planning})

%We also show that the expressivity gap between the $Q$-function and the dynamics can also occur with randomized MDPs, which indicates that the such MDPs does not form a degenerate set of measure zero.
%We hypothesize that the real-world continuous control tasks also have more complex optimal $Q$-function and the policy than the dynamics. The theoretical analysis of the synthetic dynamics suggests that a model-based few-steps planner on top of a parameterized $Q$-function will outperform the original $Q$-function because of the addtional expressivity introduced by the planning. We empirically verify the intuition on MuJoCo benchmark tasks. We show that applying a model-based planner on top of $Q$-functions learned from model-based or model-free policy optimization algorithms in the test time leads to significant gains over the original $Q$-function or policy.

In summary, our contributions are:
\begin{itemize}
	\item[1.] We construct continuous state space MDPs whose $Q$-functions and policies are proved to be more complex than the dynamics (Sections~\ref{sec:construction} and \ref{sec:approximability_of_Q}.)
	\item[2.] We empirically show that with a decent chance, (semi-) randomly generated piecewise linear MDPs also have complex $Q$-functions (Section~\ref{sec:random_mdp}.)
	\item[3.] We show theoretically and empirically that the model-free RL or model-based policy optimization algorithms suffer from the lack of expressivity for the constructed MDPs (Sections~\ref{sec:random_mdp}), whereas model-based planning solve the problem efficiently (Section~\ref{sec:mb_planning}.)
	\item[4.] Inspired by the theory, we propose a simple model-based bootstrapping planner (\bts), which can be applied on top of any model-free or model-based $Q$-learning algorithms at the test time. Empirical results show that {\bts} improves the performance on MuJoCo benchmark tasks, and outperforms previous state-of-the-art on MuJoCo Humanoid environment. (Section~\ref{sec:benchmark})

%	 Empirical results show that BOOTS achieves improvements, when being applied on top of SAC~\citep{haarnoja2018soft} or a model-based policy optimization algorithm, called MBSAC (which is very similar to MBPO~\cite{Janner2019WhenTT} but faster). BOOTS + MBSAC outperforms the prior state-of-the-art result in sample efficiency on several MuJuCo benchmark tasks.
\end{itemize}

\section{Related Work}\label{sec:related_work}

\paragraph{Comparisons with Prior Theoretical Work.}
Model-based RL has been extensively studied in the tabular case (see~\citet{zanette2019tighter,azar2017minimax} and the references therein), but much less so in the context of deep neural networks approximators and continuous state space. ~\citet{Luo2018AlgorithmicFF} give sample complexity and  guarantees (of converging to a local maximum) using principle of optimism in the face of uncertainty for non-linear dynamics. 
\citet{du2019good} proved an exponential sample complexity lower bound for approximately linear value function class with worst-case but small fitting error. On the other hand, when the dynamics is linear, there exists efficient algorithms with polynomial sample complexity \citep{yang2019sample, yang2019reinforcement, jin2019provably}.

Below we review several prior results regarding model-based versus model-free dichotomy in various settings. We note that our work focuses on the angle of expressivity, whereas the work below focuses on the sample efficiency. 
\begin{itemize}
	\item {\bf Tabular MDPs.}
The extensive study in tabular MDP setting leaves little gap in their sample complexity of model-based and model-free algorithms, whereas the space complexity seems to be the main difference~\citep{strehl2006pac}.
The best sample complexity bounds for model-based tabular RL~\citep{azar2017minimax,zanette2019tighter} and model-free tabular RL~\citep{jin2018q} only differ by a $\text{poly}(H)$ multiplicative factor (where $H$ is the horizon.)
	\item {\bf Linear Quadratic Regulator.}
\cite{sarah2018regret} and \cite{sarah2017ont} provided sample complexity bound for model-based LQR. Recently, \cite{tu2018gap} analyzed sample efficiency of the model-based and model-free problem in the setting of Linear Quadratic Regulator, and proved a $O(d)$ gap in sample complexity, where $d$ is the dimension of state space. Unlike tabular MDP case, the space complexity of model-based and model-free algorithms has little difference. 
The sample-efficiency gap mostly comes from that dynamics learning has $d$-dimensional supervisions, whereas $Q$-learning has only one-dimensional supervision.
	\item {\bf Contextual Decision Process (with function approximator).}
\citet{sun2019model} prove an exponential information-theoretical gap between mode-based and model-free algorithms in the factored MDP setting. Their definition of model-free algorithms requires an exact parameterization: the value-function hypothesis class should be exactly the family of optimal value-functions induced by the MDP family. This limits the application to deep reinforcement learning where over-parameterized neural networks are frequently used. Moreover, a crucial reason for the failure of the model-free algorithms is that the reward is designed to be sparse.
\end{itemize}

\paragraph{Related Empirical Work.}
A large family of model-based RL algorithms uses existing model-free algorithms of its variant on the learned dynamics. MBPO~\citep{Janner2019WhenTT}, STEVE~\citep{STEVE}, and MVE~\citep{MVE} are model-based $Q$-learning-based policy optimization algorithms, which can be viewed as modern extensions and improvements of the early model-based $Q$-learning framework, Dyna~\citep{Sutton1990DynaAI}. SLBO~\citep{Luo2018AlgorithmicFF} is a model-based policy optimization algorithm using TRPO as the algorithm in the learned environment.

Another way to exploit the dynamics is to use it to perform model-based planning. \citet{I2A}
and~\citet{du2019task} use the model to generated additional extra data to do
planning implicitly. \citet{PETS} study how to combine an ensemble of
probabilistic models and planning, which is followed by \citet{POPLIN}, which
introduces a policy network to distill knowledge from a planner and provides a
prior for the planner. \citet{piche2018probabilistic} uses methods in Sequential
Monte Carlo in the context of control as inference. \citet{VPN} trains a $Q$-function and then perform lookahead planning. \citep{MBMF} uses random shooting as the planning algorithm.

\citet{SVG} backprops through a stochastic computation graph with a stochastic gradient to optimize the policy under the learned dynamics. \citet{GPS} distills a policy from trajectory
optimization. \citet{EPOpt} trains a policy adversarially robust to the worst dynamics in the ensemble.
\citet{clavera2018model} reformulates the problem as a meta-learning problem and using meta-learning
algorithms. Predictron~\citep{Silver2016ThePE} learns a dynamics and value function and then use them to predict the future reward sequences.

Another line of work focus on how to improve the learned dynamics model. Many of
them use an ensemble of models \citep{METRPO,EPOpt,clavera2018model}, which are further
extended to an ensemble of probabilistic models \citep{PETS,POPLIN}.
\citet{Luo2018AlgorithmicFF} designs a discrepancy bound for learning the dynamics model.
\citet{talvitie2014model} augments the data for model training in a way
that the model can output a real observation from its own prediction.
\citet{malik2019calibrated} calibrates the model's uncertainty so that the model's output distribution should match the frequency of predicted states.
\citet{VPN} learns a representation of states by predicting rewards and future
returns using representation.

\section{Preliminaries}\label{sec:preliminary}

\paragraph{Markov Decision Process.}
A Markov Decision Process (MDP) is a tuple $\left<\calS, \calA, \dy,r,\gamma\right>$, where $\calS$ is the state space, $\calA$ the action space, $\dy: \calS\times \calA \rightarrow \Delta(\calS)$ the transition dynamics that maps a state action pair to a probability distribution of the next state, $\gamma$ the discount factor, and $r\in \R^{\calS\times \calA}$ the reward function. Throughout this paper, we will consider deterministic dynamics, which, with slight abuse of notation, will be denoted by $f : \calS \times \calA \rightarrow \calS$.

A deterministic policy  $\pi:\calS \rightarrow \calA$ maps a state to an action. The value function for the policy is defined as is defined
$V^{\pi}(s)\defeq \sum_{h=1}^{\infty}\gamma^{h-1}r(s_h,a_h).$
where  $a_h=\pi(s_h),s_1=s$ and $ s_{h+1} = \dy(s_h,a_h)$.

An RL agent aims to find a policy $\pi$ that maximizes the expected total reward defined as $$\eta(\pi)\defeq \E_{s_1\sim \mu}\left[V^{\pi}(s_1)\right],$$
where $\mu$ is the distribution of the initial state.

\paragraph{Bellman Equation.} %\tnote{use $\star$ or $*$ consistently. my personal taste is $^\star$}
Let $\optpi$ be the optimal policy, and $\optV$ the optimal value function (that is, the value function for policy $\optpi$). The value function $V^\pi$ for policy $\pi$ and optimal value function $V^\star$ satisfy the Bellman equation and Bellman optimality equation, respectively. Let $Q^{\pi}$ and $\optQ$ defines the state-action value function for policy $\pi$ and optimal state-action value function. Then, for a deterministic dynamics $f$, we have
\begin{align}
\begin{aligned}
&\begin{cases}
&V^{\pi}(s)=Q^{\pi}(s,\pi(s)),\\
&Q^{\pi}(s,a)=r(s,a)+\gamma V^{\pi}(\dy(s,a)),
\end{cases}\notag
&\begin{cases}
&\optV(s)=\max_{a\in \calA}\optQ(s,a),\\
&\optQ(s,a)=r(s,a)+\gamma \optV(\dy(s,a)).
\end{cases}
\end{aligned}
\numberthis
\label{equ:bellman_equation}
\end{align}
Denote the Bellman operator for dynamics $\dy$ by $\calB_{\dy}$: $\left(\calB_{\dy}[Q]\right)(s,a)=r(s,a)+\max_{a'}Q(\dy(s,a),a').$

\paragraph{Neural Networks.}
We focus on fully-connected neural networks with ReLU function as activations.
A one-dimensional input and one-dimensional output ReLU neural net represents a piecewise linear function. A two-layer ReLU neural net with $d$ hidden neurons
represents a piecewise linear function with at most $(d+1)$ pieces.
Similarly, a $H$-layer neural net with $d$ hidden neurons in each layer represents a piecewise linear function with at most  $(d+1)^{H}$ pieces \citep{pascanu2013number}.

\paragraph{Problem Setting and Notations.} In this paper, we focus on continuous state space, discrete action space MDPs with $\calS\subset \R.$ We assume the dynamics is deterministic (that is, $s_{t+1}=\dy(s_t, a_t)$), and the reward is known to the agent. Let $\lfloor x \rfloor$ denote the floor function of $x$, that is, the greatest integer less than or equal to $x$. We use $\ind[\cdot]$ to denote the indicator function.

\section{Approximability of $Q$-functions and Dynamics}\label{sec:gap}

We show that there exist MDPs in one-dimensional continuous state space that have simple dynamics but complex $Q$-functions and policies. Moreover, any polynomial-size neural networks function approximator of the $Q$-function or policy will result in a sub-optimal expected total reward, and learning $Q$-functions parameterized by neural networks requires fundamentally an exponential number of samples (Section~\ref{sec:approximability_of_Q}). In Section~\ref{sec:mb-planning-expressivity}, we show that the expressivity issue can be alleviated by model-based planning in the test time.

%We show that by model-based planning, we can bootstrap the expressivity of $Q$-functions and policies exponentially.
%Section~\ref{sec:random_mdp} illustrate the phenomena that $Q$-function is more complex than the dynamics occurring frequently and naturally even with random MDP, beyond the theoretical construction.

% Proofs in this section are deferred to Appendix~\ref{app:proofs_for_sec3}. \tnote{generally better to mention where the proof is right after the theorem or lemma. }

\subsection{A provable construction of MDPs with complex $Q$}\label{sec:construction}

Recall that we consider the infinite horizon case and $0< \gamma < 1$ is the discount factor. Let $H = (1-\gamma)^{-1}$ be the ``effective horizon'' --- the rewards after $\gg H$ steps becomes negligible due to the discount factor. For simplicity, we assume that $H > 3$ and it is an integer. (Otherwise we take just take $H = \lfloor (1-\gamma)^{-1}\rfloor$.) Throughout this section, we assume that the state space $\calS=[0,1)$ and the action space $\calA=\{0,1\}$. %\tnote{I think it might be better to call them $0,1$? $a_0$ seems to be easily confused with the action at step 0.}
% \tnote{rationale of re-organization: non-essential conditions dooe not have to show up in the definition environment}
\begin{definition} \label{def:MDP}
	Given the effective horizon $H = (1-\gamma)^{-1}$, we define an MDP $M_H$ as follows.  Let $\kappa = 2^{-H}$. The dynamics $f$ by the following piecewise linear functions with at most three pieces.
	\begin{align}
	&f(s,0) = \left\{\begin{array}{cc} 2s & \textup{ if } s < 1/2\\
	2s-1 & \textup{ if } s \ge 1/2\\
	\end{array}\right. \nonumber\\
	&f(s,1) = \left\{\begin{array}{cc} 2s + \kappa& \textup{ if } s < \frac{1-\kappa}{2}\\
	2s + \kappa -1 & \textup{ if } \frac{1-\kappa}{2} \le s \le \frac{2-\kappa}{2}\\
	2s + \kappa -2 & \textup{ otherwise.}
	\end{array}\right.\nonumber
	\end{align}
	%
	%\begin{align}
	%f(s,a_0) & =2s-\floor{2s} \nonumber\\
	%f(s,a_1) & = 2s+2^{-\pH}-\floor{2s+2^{-\pH}}\nonumber
	%\end{align}%f(s,a_0)=2s-\floor{2s},$ and $f(s,a_1)=2s+2^{-\pH}-\floor{2s+2^{-\pH}}.$
	The reward function is defined as
	\begin{align}
	r(s,0) & =\mathbb{I}[1/2\le s<1] \nonumber\\
	r(s,1) & =\mathbb{I}[1/2\le s<1]-2(\gamma^{H-1}-\gamma^{H}) \nonumber
	\end{align}
	%\tnote{define $\mathbb{I}$ in section 2. }
	The initial state distribution $\mu$ is uniform distribution over the state space $[0,1)$.
\end{definition}
The dynamics and the reward function for $H=4$ are visualized in Figures~\ref{fig:dynamics}, \ref{fig:reward}. %\tnote{give a pointer to the sub-figure}.
%Figures~\ref{fig:dynamics} illustrate the dynamics for $\pH=3.$
Note that by the definition, the transition function for a fixed action $a$ is a piecewise linear function with at most $3$ pieces. Our construction can be modified so that the dynamics is Lipschitz and the same conclusion holds (see Appendix~\ref{app:extension}).

Attentive readers may also realize that the dynamics can be also be written succinctly as $f(s,0) = 2s \mod 1$ and  $f(s,1) = 2s + \kappa \mod 1$\footnote{The mod function is defined as $x\mod 1\defeq x-\floor{x}.$}, which are key properties that we use in the proof of Theorem~\ref{thm:optimal_pi} below. %\tnote{define $\mod$ somewhere}

\begin{figure*}[t]
	\centering
	\begin{subfigure}[b]{0.30\textwidth}
		\includegraphics[width=\textwidth]{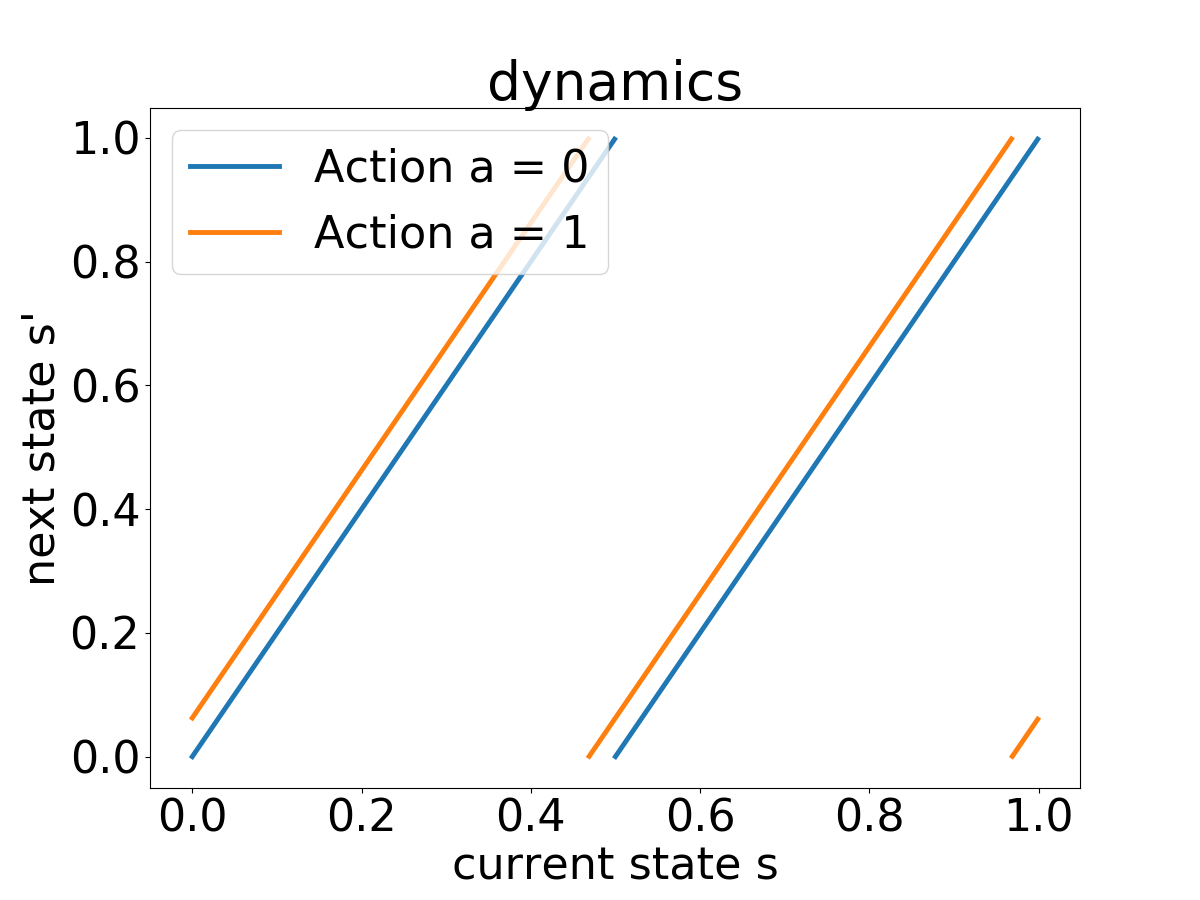}
		\caption{Visualization of dynamics for action $a=0,1$.}
		\label{fig:dynamics}
	\end{subfigure}
	\quad
	\begin{subfigure}[b]{0.30\textwidth}
		\includegraphics[width=\textwidth]{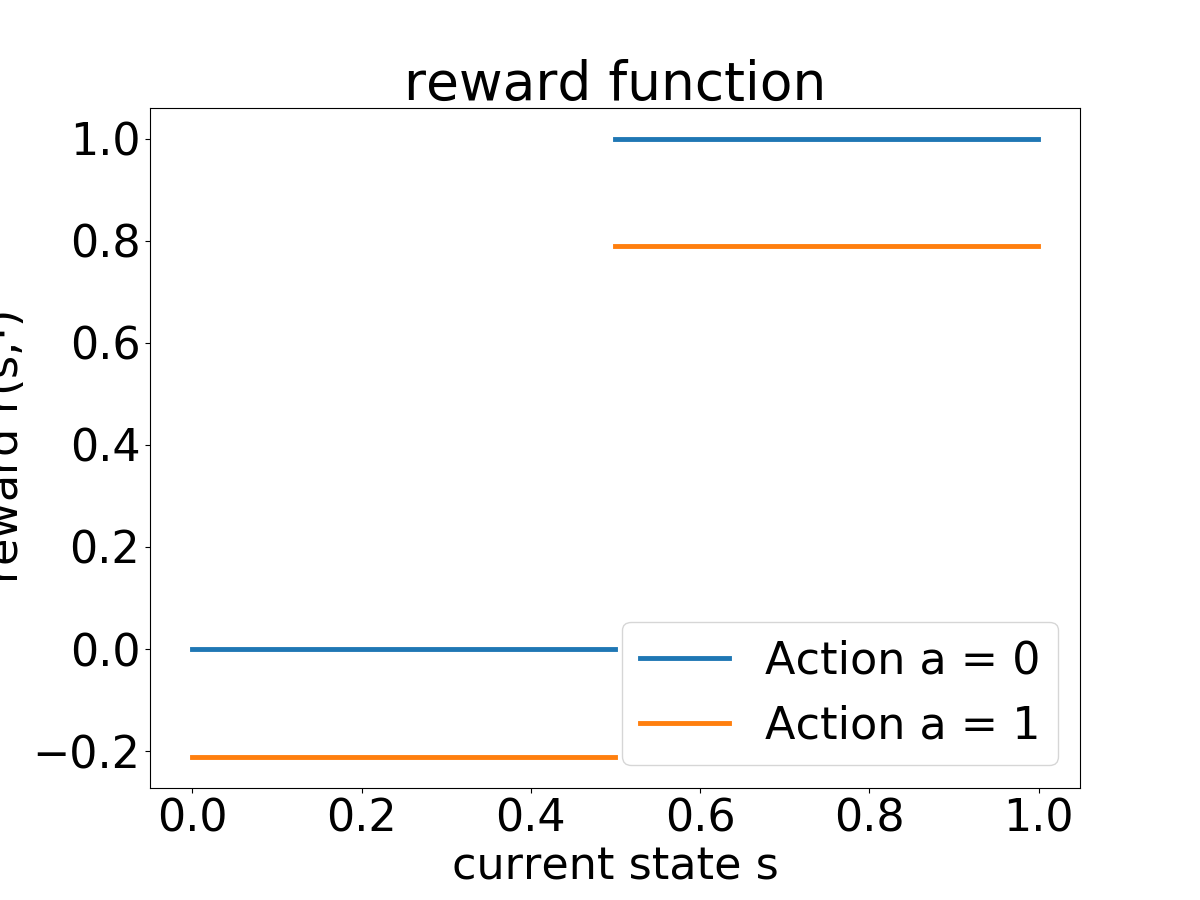}
		\caption{The reward function $r(s,0)$ and $r(s,1).$}
		\label{fig:reward}
	\end{subfigure}
	\quad
	\begin{subfigure}[b]{0.30\textwidth}
		\includegraphics[width=\textwidth]{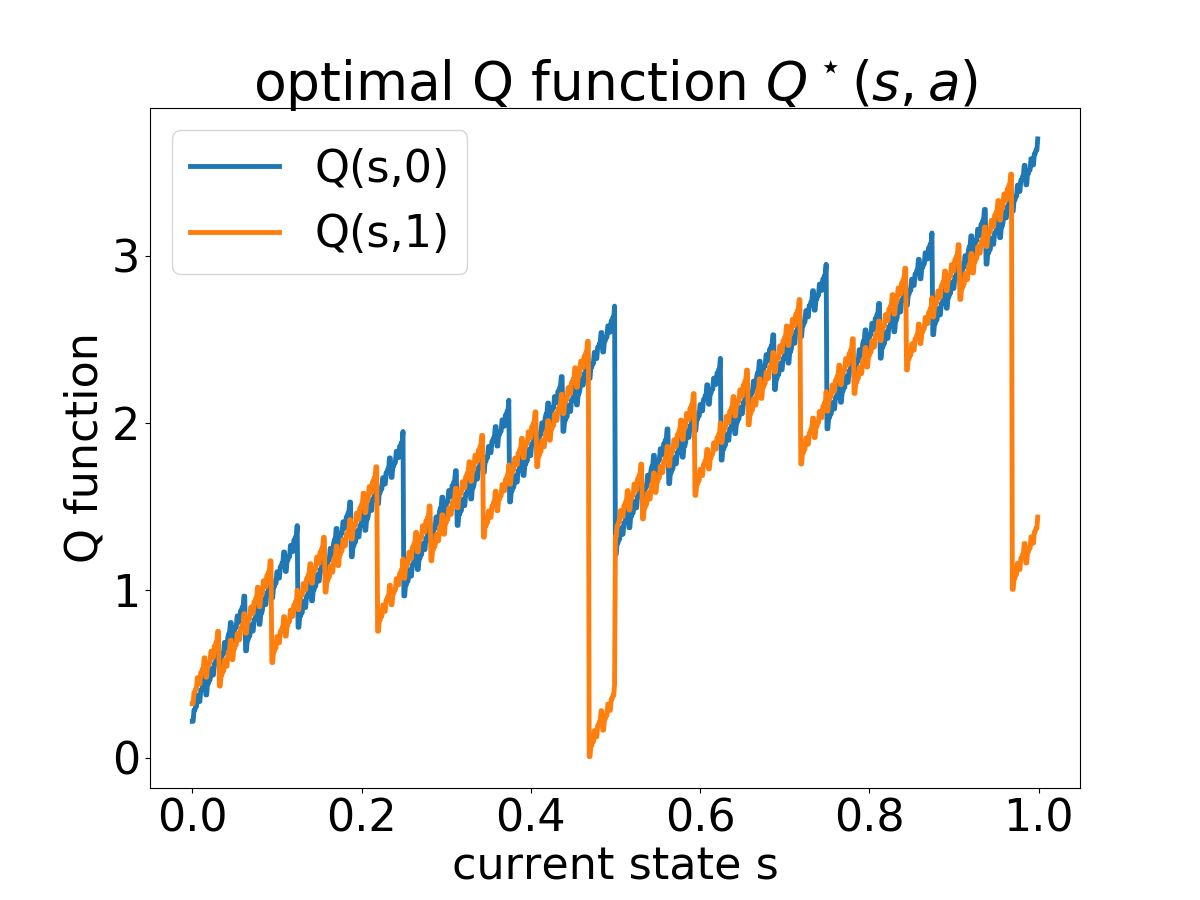}
		\caption{Approximation of optimal $Q$-function $Q^\star(s,a)$}
		\label{fig:q-function}
	\end{subfigure}
	\caption{A visualization of the dynamics, the reward function, and the approximated $Q$-function of the MDP defined in Definition~\ref{def:MDP}, and the approximation of its optimal $Q$-function for the effective horizon $\pH=4$. We can also construct slightly more involved construction with Lipschitz dynamics and very similar properties. Please see Appendix~\ref{app:extension}.\label{fig:figure1}}
\end{figure*}

\paragraph{Optimal $Q$-function $Q^\star$ and the optimal policy $\pi^\star$.  }
Even though the dynamics of the MDP constructed in Definition~\ref{def:MDP} has only a constant number of pieces, the $Q$-function and policy are very complex: (1) the policy is a piecewise linear function with exponentially number of pieces, (2) the optimal $Q$-function $Q^\star$ and the optimal value function $V^\star$ are actually \textit{fractals} that are not differentiable anywhere.  These are formalized in the theorem below.

\begin{theorem}\label{thm:optimal_pi}
	For $s\in [0,1)$, let $s^{(k)}$ denotes the $k$-th bit of $s$ in the binary representation.\footnote{Or more precisely, we define $\bit{s}{h}\triangleq\floor{2^h s}\mod 2$} The optimal policy $\pi^\star$ for the MDP defined in Definition~\ref{def:MDP} has $2^{H+1}$ number of pieces. In particular,
	\begin{equation}
	\pi^\star(s) = \ind[\bit{s}{H+1}=0].\label{equ:optimal_pi}
	\end{equation}
	And the optimal value function is a fractal with the expression:
	\begin{equation}
	V^{\star}(s)=\sum_{h=1}^{H}\gamma^{h-1}\bit{s}{h}+\sum_{h=H+1}^{\infty}\gamma^{h-1}\left(1+2(\bit{s}{h+1}-\bit{s}{h})\right)+\gamma^{H-1}\left(2\bit{s}{H+1}-2\right).\label{equ:optimal_v}
	\end{equation}
	The close-form expression of $Q^{\star}$ can be computed by $Q^{\star}(s,a)=r(s,a)+V^{\star}(\dy(s,a))$, which is also a fractal.
\end{theorem}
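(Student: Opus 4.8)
The plan is to use a guess-and-verify strategy driven by the Bellman optimality equation~\eqref{equ:bellman_equation}. Since $0<\gamma<1$ and the reward is bounded, the Bellman optimality operator $\calB_{\dy}$ is a $\gamma$-contraction in the sup-norm, so it has a unique bounded fixed point, which is exactly $\optQ$; equivalently $\optV$ is the unique bounded solution of $\optV(s)=\max_{a}\left[r(s,a)+\gamma\optV(\dy(s,a))\right]$. It therefore suffices to (i) check that the closed form in~\eqref{equ:optimal_v} is bounded and satisfies this fixed-point equation, and (ii) verify that the maximizing action is the one prescribed by~\eqref{equ:optimal_pi}. Boundedness of the candidate is immediate since every summand is bounded and the weights $\gamma^{h-1}$ are summable, and the final $\optQ$ formula then follows by one application of~\eqref{equ:bellman_equation}.

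First I would translate the dynamics into operations on the binary expansion $s=0.\bit{s}{1}\bit{s}{2}\cdots$. Using the succinct forms $\dy(s,0)=2s\bmod 1$ and $\dy(s,1)=2s+\kappa\bmod 1$ with $\kappa=2^{-\pH}$, I would show that $\dy(s,0)$ is the left shift, $(\dy(s,0))^{(k)}=\bit{s}{k+1}$, and that $\dy(s,1)$ is the same left shift followed by adding $1$ at bit position $\pH$ of the shifted string (i.e.\ to the bit $\bit{s}{\pH+1}$), with carries propagating toward the more significant positions $\pH-1,\pH-2,\dots$. The clean case is $\bit{s}{\pH+1}=0$: no carry occurs and $\dy(s,1)$ equals $\dy(s,0)$ with its $\pH$-th bit flipped from $0$ to $1$. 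The case $\bit{s}{\pH+1}=1$ triggers a carry chain and is the delicate one.

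Next I would substitute these bit-level descriptions into the candidate $\optV$ to compute $\optQ(s,0)=r(s,0)+\gamma\optV(\dy(s,0))$ and $\optQ(s,1)=r(s,1)+\gamma\optV(\dy(s,1))$, using $r(s,0)=\bit{s}{1}$ and $r(s,1)=\bit{s}{1}-2(\gamma^{\pH-1}-\gamma^{\pH})$. The crucial structural fact is that the bits $\bit{s}{1},\dots,\bit{s}{\pH+1}$ touched by the shift-and-carry appear in $\optV$ only through the finite sum $\sum_{h=1}^{\pH}$ (and the single correction term), never through the infinite tail $\sum_{h=\pH+1}^{\infty}$; so in the no-carry case flipping bit $\pH+1$ changes $\optV(\dy(s,\cdot))$ by exactly $\gamma^{\pH-1}$, and the difference collapses to $\optQ(s,1)-\optQ(s,0)=-2(\gamma^{\pH-1}-\gamma^{\pH})+\gamma^{\pH}=\gamma^{\pH-1}(3\gamma-2)$. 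This is positive precisely because $\pH>3$ forces $\gamma>2/3$, which is exactly why action $1$ is preferred when $\bit{s}{\pH+1}=0$ and pins down both $\max_a\optQ(s,a)=\optV(s)$ and $\optpi(s)=\ind[\bit{s}{\pH+1}=0]$. I would then read off $\optV$ from the maximizing branch and confirm it reproduces~\eqref{equ:optimal_v}.

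The main obstacle is the carry case $\bit{s}{\pH+1}=1$, where a run of consecutive $1$-bits at positions $\pH,\pH-1,\dots,j+1$ flips to $0$ and position $j$ flips to $1$ (or the carry overflows mod $1$). Here the value change is the geometric combination $\gamma^{j-1}-\sum_{i=j+1}^{\pH}\gamma^{i-1}$, and I must verify that, combined with the reward penalty, $\optQ(s,1)-\optQ(s,0)\le 0$ uniformly over all chain lengths $j$ so that action $0$ is correctly preferred; bounding this geometric sum against $2(\gamma^{\pH-1}-\gamma^{\pH})$ is the one genuinely careful computation. Finally, the piece count follows from~\eqref{equ:optimal_pi}: the map $s\mapsto\bit{s}{\pH+1}=\floor{2^{\pH+1}s}\bmod 2$ is constant and alternates value on the $2^{\pH+1}$ dyadic intervals of length $2^{-(\pH+1)}$ partitioning $[0,1)$, so no two adjacent pieces merge and $\optpi$ has exactly $2^{\pH+1}$ pieces; the nowhere-differentiable fractal character of $\optV$, and hence of $\optQ(s,a)=r(s,a)+\gamma\optV(\dy(s,a))$, is inherited from the non-telescoping infinite tail in~\eqref{equ:optimal_v}.
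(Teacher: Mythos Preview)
Your proposal is correct and follows essentially the same guess-and-verify route as the paper: both reduce the dynamics to bit-shift operations, plug the candidate $\optV$ into the two Bellman branches, and split the analysis into the clean case $\bit{s}{H+1}=0$ (where the $\gamma>2/3$ threshold appears) versus the carry case $\bit{s}{H+1}=1$. The paper's write-up streamlines the fixed-point check slightly by first noting that under $\optpi$ the next state always has $\bit{\hat s}{H}=1$ with all other bits merely shifted (so~\eqref{equ:bellman_1} is a single computation rather than two), and by rewriting~\eqref{equ:optimal_v} in the equivalent form $\optV(s)=\sum_{i=1}^{H}\gamma^{i-1}\bit{s}{i}+\sum_{i\ge H+1}\big(\gamma^{i-1}-2(\gamma^{i-2}-\gamma^{i-1})(1-\bit{s}{i})\big)$, which absorbs the correction term and makes the algebra for the carry case cleaner; but these are organizational conveniences, not a different argument.
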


We approximate the optimal $Q$-function by truncating the infinite sum to $2H$ terms, and visualize it in Figure~\ref{fig:q-function}. We discuss the main intuitions behind the construction in the following proof sketch of the Theorem.  A rigorous proof of Theorem~\ref{thm:optimal_pi}) is deferred to Appendix~\ref{sec:proof_of_theorem_3.1}.
\newcommand{\leftshift}{\textup{left-shift}}
\begin{proof}[Proof Sketch]
	The key observation is that the dynamics $f$ essentially shift the binary representation of the states with some addition. We can verify that the dynamics satisfies $f(s,0) = 2s \mod 1$ and  $f(s,1) = 2s + \kappa \mod 1$ where $\kappa = 2^{-H}$.  In other words, suppose $s = 0.s^{(1)}s^{(2)}\cdots $ is the binary representation of $s$, and let $\leftshift(s) = 0.s^{(2)}s^{(3)}\cdots$.
	\begin{align*}
	f(s, 0) & = \leftshift(s) \\
	f(s,1) & =     (\leftshift(s) + 2^{-H}) \mod 1
	\end{align*}
	%\tnote{this is $2^{-H}$? my typo perhaps}
	Moreover, the reward function is approximately equal to the first bit of the binary representation
	\begin{align*}
	r(s,0) =  s^{(1)}, ~~~r(s,1) \approx  s^{(1)}
	\end{align*}
	(Here the small negative drift of reward for action $a=1$, $-2(\gamma^{H-1}-\gamma^{H})$, is only mostly designed for the convenience of the proof, and casual readers can ignore it for simplicity.) Ignoring carries, the policy pretty much can only affect the $H$-th bit of the next state $s' = f(s,a)$: the $H$-th bit of $s'$ is either equal to $(H+1)$-th bit of $s$ when action is 0, or equal its flip when action is 1. Because the bits will eventually be shifted left and the reward is higher if the first bit of a future state is 1, towards getting higher future reward, the policy should aim to create more 1's. Therefore, the optimal policy should choose action 0 if the $(H+1)$-th bit of $s$ is already 1, and otherwise choose to flip the $(H+1)$-th bit by taking action 1.

	A more delicate calculation that addresses the carries properly would lead us to the form of the optimal policy (Equation~\eqref{equ:optimal_pi}.) Computing the total reward by executing the optimal policy will lead us to the form of the optimal value function (equation~\eqref{equ:optimal_v}.) (This step does require some elementary but sophisticated algebraic manipulation.)

	With the form of the $\optV$, a shortcut to a formal, rigorous proof would be to verify that it satisfies the Bellman equation, and verify $\pi^\star$ is consistent with it. We follow this route in the formal proof of Theorem~\ref{thm:optimal_pi}) in Appendix~\ref{sec:proof_of_theorem_3.1}.
\end{proof}

\subsection{The Approximability of $Q$-function}\label{sec:approximability_of_Q}

A priori, the complexity of $Q^\star$ or $\pi^\star$ does not rule out the possibility that there exists an approximation of them that do an equally good job in terms of maximizing the rewards. However, we show that in this section, indeed, there is no neural network approximation of $Q^\star$ or $\pi^{\star}$ with a polynomial width. We prove this by showing any piecewise linear function with a sub-exponential number of pieces cannot approximate either $Q^\star$ or $\pi^\star$ with a near-optimal total reward.

\begin{theorem}\label{thm:competitive_ratio}
	% For a parameter $\pH>15$,
	Let $\MDP_\pH$ be the MDP constructed in Definition \ref{def:MDP}. Suppose a piecewise linear policy $\pi$ has a near optimal reward in the sense that $\eta(\pi) \ge 0.92\cdot\eta(\pi^\star)$, then it has to have at least $\Omega\left(\exp(cH)/H\right)$ pieces for some universal constant $c>0$. As a corollary, no constant depth neural networks with polynomial width (in $H$) can approximate the optimal policy with near optimal rewards.
\end{theorem}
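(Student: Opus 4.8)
The plan is to prove the contrapositive: I will show that any piecewise-linear policy with a sub-exponential number of pieces must lose a constant fraction of the optimal reward. Since $\calA=\{0,1\}$, a piecewise-linear map into the action set is constant on each of its $N$ pieces, so $\pi$ is piecewise constant with $N-1$ breakpoints. I would control the loss through the performance difference lemma: writing $\Delta(s)\defeq \optV(s)-\optQ(s,\pi(s))\ge 0$ for the one-step regret of $\pi$ at $s$,
\[
\eta(\optpi)-\eta(\pi)=\sum_{t\ge 1}\gamma^{t-1}\,\E_{s_t}\big[\Delta(s_t)\big],
\]
where $s_1\sim\mu$ is uniform and $s_{t+1}=\dy(s_t,\pi(s_t))$. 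It then suffices to show (i) $\Delta(s)\ge c\,\gamma^{\pH-1}$ on average whenever $\pi$ disagrees with $\optpi$, and (ii) $\pi$ disagrees with $\optpi$ on a constant fraction of the law of $s_t$ for a constant fraction of the discounted horizon.

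For (ii) I would first record a purely combinatorial fact: by Theorem~\ref{thm:optimal_pi}, $\optpi(s)=\ind[\bit{s}{\pH+1}=0]$ is a square wave, constant on each of the $2^{\pH+1}$ dyadic intervals of length $2^{-(\pH+1)}$ and alternating between adjacent ones. A policy with only $N-1$ breakpoints is constant across long runs of these elementary intervals, and on any such run $\optpi$ flips value every interval, so $\pi$ agrees on at most half; a counting argument gives that $\{\pi\ne\optpi\}$ has Lebesgue measure at least $1/2-O(N2^{-\pH})$. To transfer this to the law of $s_t$, I would use that $s\mapsto 2s+\pi(s)2^{-\pH}\bmod 1$ has slope $2$ and is generically two-to-one, hence pushes the uniform distribution to the uniform distribution up to an error supported near its $O(N)$ discontinuities, of total variation $O(N2^{-\pH})$ per step. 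Iterating, $s_t$ is within total variation $O(tN2^{-\pH})$ of uniform, so for all $t\lesssim\pH$ the disagreement probability stays $\ge 1/2-o(1)$ provided $N=o(2^{\pH}/\pH^2)$.

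For (i) I would compute $\Delta$ directly from the closed forms in Theorem~\ref{thm:optimal_pi}. Since $\dy(s,1)=\dy(s,0)+2^{-\pH}\bmod 1$ perturbs only the $\pH$-th bit of the shifted state together with carries into the more significant (near-future reward) bits, the difference $\optV(\dy(s,0))-\optV(\dy(s,1))$ telescopes over the carry run via the first sum of the $\optV$ formula; combined with the reward drift $-2(\gamma^{\pH-1}-\gamma^{\pH})$ built into the construction, this gives $\Delta(s)=\Theta(\gamma^{\pH-1})$ in the carry-free case. Because $\gamma^{\pH-1}=(1-1/\pH)^{\pH-1}=\Theta(1)$, each clean mistake costs a constant.

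Assembling these, $\eta(\optpi)-\eta(\pi)\ge c\gamma^{\pH-1}\sum_{t\le\Theta(\pH)}\gamma^{t-1}(1/2-o(1))=\Omega(\pH)$, while a direct evaluation of the $\optV$ formula gives $\eta(\optpi)=\Theta(\pH)$; the loss is thus a fixed fraction of the optimum, contradicting $\eta(\pi)\ge 0.92\,\eta(\optpi)$ unless $N=\Omega(\exp(c\pH)/\pH)$. The neural-net corollary follows because a constant-depth, $\mathrm{poly}(\pH)$-width ReLU network realizes a piecewise-linear function with only $\mathrm{poly}(\pH)$ pieces. The main obstacle is step (i): carries make the per-state regret highly non-uniform, as small as $\Theta(\gamma^{\pH-1}/\pH)$ for a length-one carry, so the constant lower bound on $\E[\Delta\mid\pi\ne\optpi]$ survives only after averaging the geometrically distributed carry length over the low-order bits of $s_t$. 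This is exactly why the near-uniformity of the visitation law from (ii) must be used in tandem with the gap computation of (i), rather than as a separable step.
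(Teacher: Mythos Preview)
Your proposal follows essentially the same architecture as the paper's proof: performance-difference (advantage-decomposition) lemma, a near-uniformity statement for the visitation law $\mu_h^\pi$ when the number of pieces $N$ is sub-exponential, and a lower bound on the single-step regret $\Delta(s)=\optV(s)-\optQ(s,\pi(s))$ on the disagreement set. Your measure bound $|\{\pi\ne\optpi\}|\ge 1/2-O(N2^{-\pH})$ and your iterated TV control $\mathrm{TV}(\mu_h^\pi,\mu)=O(hN2^{-\pH})$ match the paper's Lemma~4.6 in spirit and in the quantitative conclusion.

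The one substantive difference is how the carry issue in step~(i) is handled. You correctly diagnose that when the wrong action is $a=1$ (so $\bit{s}{\pH+1}=1$) and the carry run has length one (i.e.\ $\bit{s}{\pH}=0$), the regret collapses to $\Theta(\gamma^{\pH-1}/\pH)$, and you propose to recover a constant \emph{average} regret by integrating over the geometrically distributed carry length using the near-uniformity of $\mu_h^\pi$. This works, but it couples (i) and (ii) and forces you to control the joint law of several consecutive bits of $s_t$ under $\mu_h^\pi$. The paper sidesteps this entirely by restricting attention from the outset to the half of the state space where $\bit{s}{\pH}=1$ (intervals $\ell_k$ with $k$ odd). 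On that set, whenever $\pi$ plays the wrong action the carry length is automatically at least two, and one checks directly that $\Delta(s)\ge \gamma^{\pH}-\cstt=\gamma^{\pH-1}(3\gamma-2)$, a uniform $\Theta(1)$ bound. This decouples the regret estimate from the distributional argument and yields the explicit constants $0.183$ and $0.92$ in the statement. Your averaging route would reach the same asymptotic conclusion but with messier bookkeeping; the $\bit{s}{\pH}=1$ restriction is the cleaner device you are missing.
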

%\tnote{is it easy to improve 0.99 to 0.9? It feels better if it's 0.9 for practitioners. If not easy, no worries.}

Consider a policy $\pi$ induced by a value function $Q$, that is, $\pi(s)=\arg\max_{a\in \calA}Q(s,a).$ Then,when there are two actions, the number of pieces of the policy is bounded by twice the number of pieces of $Q$. This observation and the theorem above implies the following inapproximability result of $Q^\star$.   %et $w_Q$ be the number of pieces of $Q$. Then, the number of pieces of $\pi$ is bounded by $2w_Q.$ As a corollary, theorem \ref{thm:competitive_ratio} implies that the policy induced by a $Q$-function with polynomial number of pieces is suboptimal.
\begin{corollary}\label{cor:Q}In the setting of Theorem~\ref{thm:competitive_ratio}, let $\pi$ be the policy induced by some $Q$. If $\pi$ is near-optimal in a sense that $\eta(\pi) \ge 0.92\cdot \eta(\optpi)$, then $Q$ has at least $\Omega\left(\exp(cH)/H\right)$ pieces for some universal constant $c>0$.
\end{corollary}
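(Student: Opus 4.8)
The plan is to obtain Corollary~\ref{cor:Q} as a direct consequence of Theorem~\ref{thm:competitive_ratio}, by controlling how the piecewise-linear complexity of the induced policy $\pi(s)=\argmax_{a\in\calA}Q(s,a)$ relates to that of $Q$ itself. Since $\calA=\{0,1\}$, I would first rewrite the induced policy using the gap function $g(s)\defeq Q(s,1)-Q(s,0)$, so that $\pi(s)=\ind[g(s)\ge 0]$ under any fixed tie-breaking convention. The whole corollary then reduces to a single piece-counting claim: the number of pieces of $\pi$ is bounded by a constant times the number of pieces of $Q$. Given that bound, the theorem's lower bound on the pieces of any near-optimal $\pi$ immediately transfers to $Q$.

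First I would bound the complexity of $g$. If the action-slices $Q(\cdot,0)$ and $Q(\cdot,1)$ are piecewise linear with $p_0$ and $p_1$ pieces, then $g$ is piecewise linear with at most $p_0+p_1-1$ pieces, because its breakpoints are contained in the union of the breakpoints of the two slices. Writing $p$ for the number of pieces of $Q$ (counting both action-slices), this gives that $g$ has $O(p)$ pieces. Next I would argue that $\pi(s)=\ind[g(s)\ge 0]$ is piecewise constant and can only change value where $g$ crosses zero. On any single linear piece of $g$, the affine function $g$ has at most one zero, so $\ind[g\ge 0]$ takes at most two constant values there; hence each linear piece of $g$ is split into at most two pieces of $\pi$, and the number of pieces of $\pi$ is at most twice that of $g$, i.e.\ $O(p)$. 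The only borderline case, a piece on which $g\equiv 0$, contributes a single constant value fixed by the tie-breaking rule and so does not affect the bound.

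Finally I would combine these estimates. Because $\pi$ is near-optimal, $\eta(\pi)\ge 0.92\cdot\eta(\optpi)$, Theorem~\ref{thm:competitive_ratio} forces $\pi$ to have $\Omega(\exp(cH)/H)$ pieces; together with the bound $\text{pieces}(\pi)=O(p)$ this yields $p=\Omega(\exp(cH)/H)$, which is exactly the claimed complexity lower bound for $Q$. I do not expect a genuine obstacle here, since essentially all of the difficulty lives in Theorem~\ref{thm:competitive_ratio}; the only place demanding mild care is the piece-counting bookkeeping — keeping the factor of two, handling ties, and fixing a consistent meaning of "number of pieces of $Q$" — so that the constant-factor losses do not erode the asymptotic bound $\Omega(\exp(cH)/H)$.
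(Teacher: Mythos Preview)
Your proposal is correct and follows essentially the same route as the paper: the paper simply observes that with two actions the induced policy has at most twice as many pieces as $Q$, and then invokes Theorem~\ref{thm:competitive_ratio}. Your argument via the gap function $g(s)=Q(s,1)-Q(s,0)$ is just a slightly more detailed justification of that same piece-counting bound.
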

% \tnote{$\eta(\pi)$ is a better notation for the reward of policy $\pi$}
The intuition behind the proof of Theorem~\ref{thm:competitive_ratio} is as follows. Recall that the optimal policy has the form $\optpi(s) = \ind[\bit{s}{H+1}=0]$. One can expect that any polynomial-pieces policy $\pi$ behaves suboptimally in most of the states, which leads to the suboptimality of $\pi$. Detailed proof of Theorem~\ref{thm:competitive_ratio} is deferred to Appendix~\ref{sec:proof_of_theorem_3.2}.

Beyond the expressivity lower bound, we also provide an exponential sample complexity lower bound for Q-learning algorithms parameterized with neural networks (see Appendix~\ref{app:sample_complexity}).
%Note that a two-layer neural net of width $d$ with ReLU activations is able to represent piecewise linear functions up to $d+1$ pieces. Therefore, the complexity of piecewise linear $Q$-functions (as well as policy functions), when using neural networks as a function approximator, is characterized by the number of pieces.

\subsection{Approximability of model-based planning}\label{sec:mb-planning-expressivity}
\newcommand{\bellman}{\mathcal{B}}
When the $Q$-function or the policy are too complex to be approximated by a reasonable size neural network, both model-free algorithms or model-based policy optimization algorithms will suffer from the lack of expressivity, and as a consequence, the sub-optimal rewards. However, model-based planning algorithms will not suffer from the lack of expressivity because the final policy is not represented by a neural network.

Given a function $Q$ that are potentially not expressive enough for approximating the optimal $Q$-function, we can simply apply the Bellman operator with a learned dynamics $\hat{f}$ for $k$ times to get a bootstrapped version of $Q$:
\begin{align}\label{equ:boots-q}
\bellman_{\hat{f}}^k [Q](s,a)  & = \max_{a_1,\cdots, a_{k}}\left( \sum_{h=0}^{k-1}r(s_h,a_h) + Q(s_k, a_k)\right)
\end{align}
where $s_0 = s, a_0 = a$ and $s_{h+1} = \hat{f}(s_h, a_h)$.

Given the bootstrapped $Q$, we can derive a greedy policy w.r.t it:
\begin{align}
\bootspi = \max_{a} \bellman_{\hat{f}}^k [Q] (s,a)
\end{align}

The following theorem shows that for the MDPs constructed in Section~\ref{sec:construction}, using $\bellman_{\hat{f}}^k[Q]$ to represent the optimal $Q$-function requires fewer pieces in $Q$ than representing the optimal $Q$-function with $Q$ directly.

\begin{theorem}\label{thm:planning}
	Consider the MDP $M_H$ defined in Definition~\ref{def:MDP}. There exists a constant-piece piecewise linear dynamics $\hat{f}$ and $2^{H-k+1}$-piece piecewise linear function $Q$, such that the bootstrapped policy $\bootspi$ achieves the optimal total rewards.
\end{theorem}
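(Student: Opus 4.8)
The plan is to reuse the structure of the optimal policy and value function established in Theorem~\ref{thm:optimal_pi}, and show that running a $k$-step lookahead with the \emph{true} dynamics (so $\hat f = f$, which indeed has a constant number of pieces) lets us replace the full optimal $Q^\star$ by a much simpler $Q$ that only needs to encode the value function \emph{after} the $k$ shifts have been applied. The key structural fact is that each application of the dynamics is essentially a left-shift of the binary expansion of the state (modulo the small additive term), so after planning $k$ steps the planner has effectively chosen the low-order bits $\bit{s}{H-k+2},\dots,\bit{s}{H+1}$ that the optimal policy cares about, and the residual $Q$ only has to be accurate on the shifted state, whose relevant bit-dependence has been reduced by a factor of $2^k$.

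Concretely, I would proceed as follows. First, set $\hat f = f$ and recall from the proof of Theorem~\ref{thm:optimal_pi} that $f(s,0)=2s\bmod 1$ and $f(s,1)=(2s+2^{-H})\bmod 1$, so that an action sequence $a_0,\dots,a_{k-1}$ applied to $s$ shifts the binary representation left by $k$ and injects the chosen bits into positions near the $H$-th bit. Second, I would define $Q$ to be a piecewise-linear function that agrees with $Q^\star$ only up to the precision determined by the \emph{high-order} bits of its argument — intuitively, a truncation/coarsening of the fractal $V^\star$ in Equation~\eqref{equ:optimal_v} in which the first few bits (those that the $k$ lookahead steps will have already resolved) are ignored. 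Because the optimal policy $\pi^\star(s)=\ind[\bit{s}{H+1}=0]$ depends only on a single bit, and the $k$-step bootstrap $\bellman_{\hat f}^k[Q]$ optimizes over the $2^k$ action sequences, the planner can reproduce the correct bit choices for the first $k$ decisions while relying on $Q$ only to rank the remaining tail. The target is then to verify that this $Q$ can be taken to have $2^{H-k+1}$ pieces: the factor $2^{H+1}$ pieces of the exact object is cut by $2^k$ precisely because $k$ bits' worth of structure is now handled by the explicit planning rather than by $Q$.

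The main verification step is to show that the greedy bootstrapped policy $\bootspi=\arg\max_a \bellman_{\hat f}^k[Q](s,a)$ coincides with $\pi^\star(s)$ on (almost) every state, so that by Theorem~\ref{thm:optimal_pi} it achieves the optimal total reward. I would argue this by expanding $\bellman_{\hat f}^k[Q](s,a)=\max_{a_1,\dots,a_k}\big(\sum_{h=0}^{k-1}\gamma^h r(s_h,a_h)+\gamma^k Q(s_k,a_k)\big)$, substituting the shift form of the dynamics so that $s_k$ is an explicit function of $s$ and the chosen bits, and checking that the maximizer over the first action $a_0$ matches $\ind[\bit{s}{H+1}=0]$. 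The discounting weights already baked into $r(s,1)$'s drift term $-2(\gamma^{H-1}-\gamma^H)$ should make the tie-breaking come out exactly, just as in the original optimality argument.

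The hard part will be controlling the \emph{carries} in the binary addition across the $k$ planning steps: the additive $2^{-H}$ term in $f(\cdot,1)$ can propagate a carry into higher bits, and I must ensure that the coarsened $Q$ still ranks the $2^k$ candidate continuations correctly despite these carries, and that the claimed piece count $2^{H-k+1}$ is not inflated by carry-induced extra breakpoints. I expect to handle this the same way Theorem~\ref{thm:optimal_pi}'s proof does — by writing $V^\star$ in closed form, defining $Q$ as the appropriate partial sum/shift of that expression, and then verifying the greedy bootstrap equals $\pi^\star$ directly rather than reasoning about carries informally — with the bulk of the work being the bookkeeping that the resulting $Q$ genuinely has only $2^{H-k+1}$ linear pieces.
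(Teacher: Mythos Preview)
Your high-level plan is sound: take $\hat f = f$, exploit the left-shift structure of the dynamics, and check that the $k$-step bootstrap with a simpler $Q$ still recovers $\pi^\star$. But the construction of $Q$ is where the proposal has a real gap, and this is precisely the step that drives the piece count $2^{H-k+1}$.

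You propose $Q$ to be a ``truncation/coarsening of $V^\star$'' that ignores some bits. The trouble is that a coarsened $V^\star$ with the right piece count does not rank the candidate action sequences correctly. If you truncate $V^\star(s_k)$ to depend only on bits $1,\dots,H-k$ of $s_k$, those bits are the same for every action sequence (they are bits $k+1,\dots,H$ of $s_0$, untouched by the actions modulo carries), so $Q$ cannot distinguish action sequences at all. If instead you keep the dependence on bits $H-k+1,\dots,H$ of $s_k$ --- the ones the actions actually set --- then a generic function of those $k$ bits, being periodic with period $2^{-(H-k)}$ and taking up to $2^k$ values per period, has on the order of $2^H$ pieces on $[0,1)$, not $2^{H-k+1}$. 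So neither natural reading of ``truncate $V^\star$'' yields both correctness and the claimed complexity.

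The paper's construction sidesteps this by not approximating $V^\star$ at all. It observes that under the optimal action sequence, and \emph{only} under that sequence, the state $s_k$ lands in the set $G=\{s:\bit{s}{H-k+1}=\cdots=\bit{s}{H}=1\}$, because each optimal step forces the $H$-th bit of the next state to $1$ and those bits then shift left. The paper then takes
\[
Q(s,a)=\ind[s\in G]\cdot \frac{2}{1-\gamma},
\]
a binary indicator scaled above any possible cumulative reward. One then checks $Q(s_k^\star,\cdot)>Q^\star(s_k^\star,\cdot)\ge Q^\star(s_k,\cdot)>Q(s_k,\cdot)=0$ for every non-optimal terminal state $s_k$, so the bootstrapped $\arg\max$ over $(a_0,\dots,a_k)$ agrees with the optimal sequence. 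The indicator of $G$ is a two-valued function, periodic with period $2^{-(H-k)}$, hence $2^{H-k+1}$ pieces --- that is exactly where the stated count comes from. Your ``coarsened $V^\star$'' route would not naturally produce a \emph{binary} $Q$, which is the trick that makes the piece count work; the carries you worry about are handled not by delicate bookkeeping in a value approximation but by the coarse membership test $s_k\in G$.
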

By contrast, recall that in Theorem~\ref{thm:competitive_ratio}, we show that approximating the optimal $Q$ function directly with a piecewise linear function, it requires $\approx 2^H$ piecewise. Thus we have a multiplicative factor of $2^k$ gain in the expressivity by using the $k$-step bootstrapped policy. Here the exponential gain is only magnificent enough when $k$ is close to $H$ because the gap of approximability is huge. However, in more realistic settings --- the randomly-generated MDPs and the MuJoCo environment --- the bootstrapping planner improve the performance significantly. Proof of Theorem~\ref{thm:planning} is deferred to Appendix~\ref{app:bootstrap}.

The model-based planning can also be viewed as an implicit parameterization of $Q$-function. In the grid world environment, \citet{Tamar2016ValueIN} parameterize the $Q$-function by the dynamics. For environments with larger state space, we can also use the dynamics in the parameterization of $Q$-function by model-based planning. A naive implementation of the bootstrapped policy (such as enumerating trajectories) would require $2^k$-times running time. However we can use approximate algorithms for the trajectory optimization step in Eq.~\eqref{equ:boots-q} such as Monte Carlo tree search (MCTS) and cross entropy method (CEM).

\section{Experiments}\label{sec:experiments}
In this section we provide empirical results that supports our theory. We validate our theory with randomly generated MDPs with one dimensional state space (Section~\ref{sec:random_mdp}). Sections~\ref{sec:mb_planning} and~\ref{sec:benchmark} shows that model-based planning indeed helps to improve the performance on both toy and real environments. Our code is available at \url{https://github.com/roosephu/boots}.

\subsection{The Approximability of $Q$-functions of randomly generated MDPs} \label{sec:random_mdp}

In this section, we show the phenomena that the $Q$-function not only occurs in the crafted cases as in the previous subsection, but also occurs more robustly with a decent chance for (semi-) randomly generated MDPs. (Mathematically, this says that the family of MDPs with such a property is not a degenerate measure-zero set.)

It is challenging and perhaps requires deep math to characterize the fractal structure of $Q$-functions for random dynamics, which is beyond the scope of this paper. Instead, we take an empirical approach here. We generate random piecewise linear and Lipschitz dynamics, and compute their $Q$-functions for the finite horizon, and then visualize the $Q$-functions or count the number of pieces in the $Q$-functions. We also use DQN algorithm~\citep{Mnih2015HumanlevelCT} with a finite-size neural network to learn the $Q$-function.

We set horizon $H=10$ for simplicity and computational feasibility. The state and action space are $[0,1)$ and $\{0,1\}$ respectively. We design two methods to generate random or semi-random piecewise dynamics with at most four pieces. First, we have a uniformly random method, called RAND, where we independently generate two piecewise linear functions for $f(s,0)$ and $f(s,1)$, by generating random positions for the kinks, generating random outputs for the kinks, and connecting the kinks by linear lines (See Appendix~\ref{sec:pseudo-random} for a detailed description.)

In the second method, called SEMI-RAND,  we introduce a bit more structure in the generation process, towards increasing the chance to see the phenomenon. The functions $f(s,0)$ and $f(s,1)$ have 3 pieces with shared kinks.  We also design the generating process of the outputs at the kinks so that the functions have more fluctuations.  The reward for both of the two methods is $r(s,a)=s,\forall a\in \calA$. (See Appendix~\ref{sec:pseudo-random} for a detailed description.)

Figure~\ref{fig:truly_random} illustrates the dynamics of the generated MDPs from SEMI-RAND.
More details of empirical settings can be found in Appendix~\ref{sec:pseudo-random}. 

\paragraph{The optimal policy and $Q$ can have a large number of pieces.} 
Because the state space has one dimension, and the horizon is 10, we can compute the exact $Q$-functions. Empirically, the piecewise linear Q-function is represented by the kinks and the corresponding values at the kinks. We iteratively apply the Bellman operator to update the kinks and their values: composing the current $Q$ with the dynamics $f$ (which is also a piecewise linear function), and point-wise maximum.  To count the number of pieces, we first deduplicate consecutive pieces with the sample slope, and then count the number of kinks.

We found that, $8.6\%$ fraction of the 1000 MDPs independently generated from the RAND method has policies with more than $100$ pieces, much larger than the number of pieces in the dynamics (which is 4).  Using the SEMI-RAND method, a $68.7\%$ fraction of the MDPs has polices with more than $10^3$ pieces. In Section~\ref{sec:pseudo-random}, we plot the histogram of the number of pieces of the $Q$-functions. Figure~\ref{fig:truly_random}
visualize the $Q$-functions and dynamics of two MDPs generated from RAND and SEMI-RAND method. These results suggest that the phenomenon that $Q$-function is more complex than dynamics is degenerate phenomenon and can occur with non-zero measure. For more empirical results, see Appendix~\ref{app:random_generated_mdps}.

\paragraph{Model-based policy optimization methods also suffer from a lack of expressivity. }\footnote{We use the term “policy optimization” when referring to methods that directly optimize over a parameterized policy class (with or without the help of learning Q-functions.). Therefore, REINFORCE and SAC are model-free policy optimization algorithms, and SLBO, MBPO, and STEVE are model-based policy optimization algorithms. By contrast, PETS and POPLIN are model-based planning algorithms because they directly optimize the future \emph{actions} (instead of the policy parameters) in the virtual environments.} As an implication of our theory in the previous section, when the $Q$-function or the policy are too complex to be approximated by a reasonable size neural network, both model-free algorithms or model-based policy optimization algorithms will suffer from the lack of expressivity, and as a consequence, the sub-optimal rewards. We verify this claim on the randomly generated MDPs discussed in Section~\ref{sec:random_mdp}, by running DQN~\citep{Mnih2015HumanlevelCT}, SLBO~\citep{Luo2018AlgorithmicFF},  and MBPO~\citep{Janner2019WhenTT} with various architecture size.

For the ease of exposition, we use the MDP visualized in the bottom half of Figure~\ref{fig:truly_random}.
The optimal policy for this specific MDP has 765 pieces, and the optimal $Q$-function has about $4\times 10^4$ number of pieces, and we can compute the optimal total rewards.

\begin{figure}[t]
		\centering
		\begin{subfigure}[b]{\textwidth}
			\includegraphics[width=.48\textwidth]{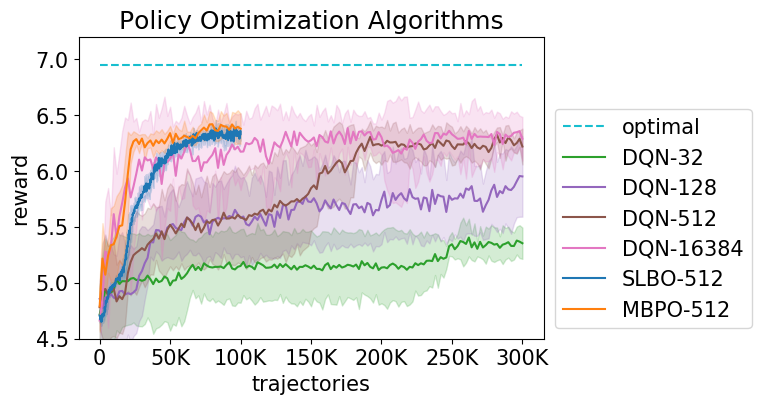}
			~
			\includegraphics[width=.45\textwidth]{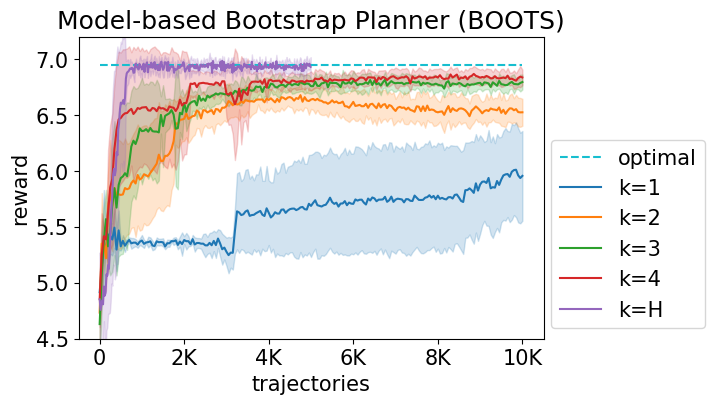}
		\end{subfigure}
		\caption{{\bf Left: }The performance of DQN, SLBO, and MBPO on the bottom dynamics in Figure~\ref{fig:truly_random}. The number after the acronym is the width of the neural network used in the parameterization of $Q$. We see that even with sufficiently large neural networks and sufficiently many steps, these algorithms still suffers from bad approximability and cannot achieve optimal reward. {\bf Right: }Performance of BOOTS-DQN with various planning steps. A near-optimal reward is achieved with even $k=3$, indicating that the bootstrapping with the learned dynamics improves the expressivity of the policy significantly.
		}
	\label{fig:random-MDP}
\end{figure}

First, we apply DQN to this environment by using a two-layer neural network with various widths to parameterize the $Q$-function. The training curve is shown in Figure~\ref{fig:random-MDP}(Left). Model-free algorithms can not find near-optimal policy even with $2^{14}$ hidden neurons and 1M trajectories, which suggests that there is a fundamental approximation issue. This result is consistent with \citet{Fu2019DiagnosingBI}, in a sense that enlarging Q-network improves the performance of DQN algorithm at convergence.

Second, we apply SLBO and MBPO in the same environment. Because the policy network and $Q$-function in SLBO and MBPO cannot approximate the optimal policy and value function, we see that they fail to achieve near-optimal rewards, as shown in Figure~\ref{fig:random-MDP}(Left).

\subsection{Model-based planning on randomly generated MDPs}\label{sec:mb_planning}
\begin{figure*}[t]
	\centering
	\begin{subfigure}[b]{0.32\textwidth}
		\includegraphics[width=\textwidth]{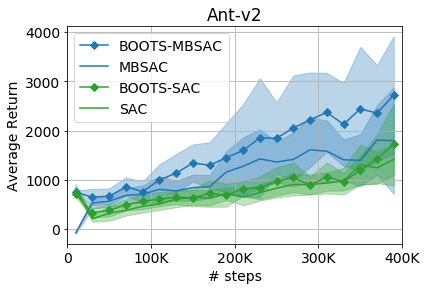}
	\end{subfigure}
	\quad
	\begin{subfigure}[b]{0.32\textwidth}
		\includegraphics[width=\textwidth]{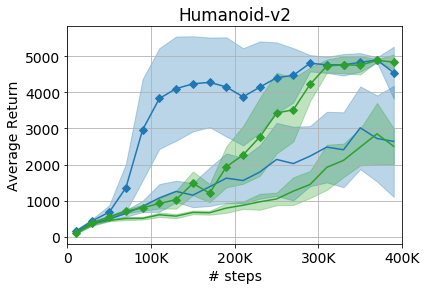}
	\end{subfigure}
	\begin{subfigure}[b]{0.32\textwidth}
		\includegraphics[width=\textwidth]{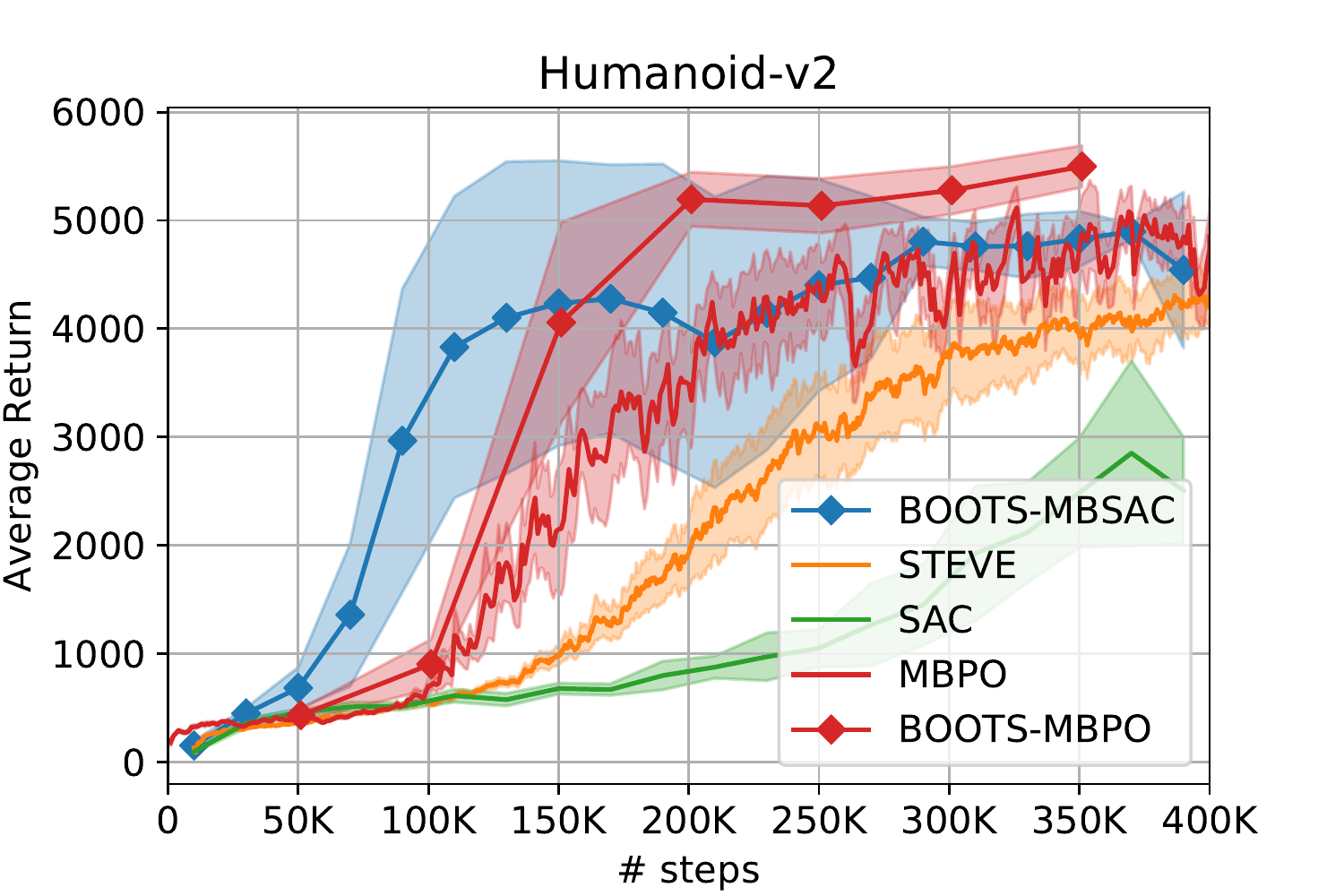}
	\end{subfigure}
	\caption{{\bf Left two: }Comparison of {\bts}-MBSAC vs MBSAC and {\bts}-SAC vs SAC on Ant and Humanoid environments. Particularly on the Humanoid environment, {\bts} improves the performance significantly. The test policy for MBSAC and SAC are the deterministic policy that takes the mean of the output of the policy network. {\bf Right: }{\bts}-MBSAC significantly outperforms previous state-of-the-art algorithms on Humanoid.
	} \label{fig:comparison}\label{fig:humanoid_c}
\end{figure*}

We implement, that planning with the learned dynamics (with an exponential-time algorithm which enumerates all the possible future sequence of actions), as well as bootstrapping with partial planner with varying planning horizon. A simple $k$-step model-based bootstrapping planner is applied on top of existing $Q$-functions (trained from either model-based or model-free approach). The bootstrapping planner is reminiscent of MCTS using in alphago~\citep{silver2016mastering, Silver2018general}. However, here, we use the learned dynamics and deal with continuous state space. Algorithm \ref{alg:boots}, called BOOTS, summarizes how to apply the planner on top of any RL algorithm with a $Q$-function (straightforwardly).

\begin{algorithm}[t]\caption{Model-based Bootstrapping Planner ({\bts}) + RL Algorithm X}
	\label{alg:boots}
	\begin{algorithmic}[1]
		\State {\bf training: } run Algorithm X, store the all samples in the set $R$, store the learned $Q$-function $Q$, and the learned dynamics $\hat{f}$ if it is available in Algorithm X.
		\State {\bf testing: }
		\State ~~~~ if $\hat{f}$ is not available, learn $\hat{f}$ from the data in $R$
		\State ~~~~ execute the policy {\bts}(s) at every state $s$
		\\

	\end{algorithmic}

	\begin{algorithmic}[1]
		\Function{{\bts}}{s}
		\State {~~~~ \bf Given:} query oracle for function $Q$ and $\hat{f}$

		\State ~~~~ Compute
		\begin{equation}
		\bootspi =  \argmax_a \max_{a_1,\cdots, a_{k}} r(s,a) + \cdots + r(s_{k-1}, a_{k-1}) + Q(s_k, a_k)
		\end{equation} using a zero-th order optimization algorithm (which only requires oracle query of the function value) such as cross-entropy method or random shooting
		\EndFunction
	\end{algorithmic}
\end{algorithm}

Note that the planner is only used \emph{in the test time} for a fair comparison. The dynamics used by the planner is learned using the data collected when training the $Q$-function. 
As shown in Figure~\ref{fig:random-MDP}(Right), the model-based planning algorithm not only has the bests sample-efficiency, but also achieves the optimal reward. In the meantime, even a partial planner helps to improve both the sample-efficiency and performance.
More details of this experiment are deferred to Appendix~\ref{app:implement_toy}.

\subsection{Model-based planning on MuJoCo environments}\label{sec:benchmark}
We work with the OpenAI Gym environments \citep{Gym}
based on the MuJoCo simulator \citep{todorov2012mujoco}. We apply {\bts} on top of three algorithms: (a) {\bf SAC}~\citep{haarnoja2018soft}, the state-of-the-art model-free RL algorithm; (b) a computationally efficient variant of MBPO~\citep{Janner2019WhenTT} that we developed using ideas from SLBO~\citep{Luo2018AlgorithmicFF}, which is called {\bf MBSAC}, see~Appendix~\ref{app:CEM} for details; (c) MBPO~\citep{Janner2019WhenTT}, the previous state-of-the-art model-based RL algorithm.

We use $k=4$ steps of planning throughout the experiments in this section. We  mainly compare {\bts}-SAC with SAC, and {\bts}-MBSAC with MBSAC, to demonstrate that {\bts} can be used on top of existing strong baselines. See Figure~\ref{fig:comparison} for the comparison on Gym Ant and Humanoid environments. We also found that {\bts} has little help for other simpler environments as observed in \citep{MAAC}, and we suspect that those environments have much less complex $Q$-functions so that our theory and intuitions do not apply.

We also compare {\bts}-MBSAC with other other model-based and model-free
algorithms on the Humanoid environment (Figure \ref{fig:humanoid_c}).
We see a strong performance surpassing the previous state-of-the-art MBPO.
For Ant environment, because our implementation MBSAC is significantly weaker than MBPO, even with the boost from {\bts}, still {\bts}-MBSAC is far behind MBPO.
\footnote{For STEVE, we use the official code at \url{https://github.com/tensorflow/models/tree/master/research/steve}}

\section{Discussions and Conclusion}\label{sec:conclusion}

Our study suggests that there exists a significant representation power gap of neural networks between for expressing $Q$-function, the policy, and the dynamics in both constructed examples and empirical benchmarking environments. We show that our model-based bootstrapping planner {\bts}  helps to overcome the approximation issue and improves the performance in synthetic settings and in the difficult MuJoCo environments.

We also raise some other interesting open questions.
\begin{itemize}
	\item[1.] Can we theoretically generalize our results to high-dimensional state space, or continuous actions space? Can we theoretically analyze the number of pieces of the optimal $Q$-function of a stochastic dynamics?
	\item[2.] In this paper, we measure the complexity by the size of the neural networks. It's conceivable that for real-life problems, the complexity of a neural network can be better measured by its weights norm or other complexity measures. Could we build a more realistic theory with another measure of complexity?
	\item[3.] Are there any other architectures or parameterizations that can approximate the $Q$-functions better than neural networks do? Does the expressivity also comes at the expense of harder optimization? Recall theorem \ref{thm:competitive_ratio} states that the $Q$-function and policy cannot be approximated by the polynomial-width constant-depth neural networks. However, we simulate the iterative applications of the Bellman operator by some non-linear parameterization (similar to the idea in VIN \citep{Tamar2016ValueIN}), then the Q-function can be represented by a deep and complex model (with $H$ layers). We have empirically investigated such parameterization and found that it’s hard to optimize even in the toy environment in Figure~\ref{fig:truly_random}. It’s an interesting open question to design other non-linear function classes that allow both strong expressivity and efficient optimization.
	\item[4.] Could we confirm whether the realistic environments have the more complex $Q$-functions than the dynamics? Empirically, we haven't found environments with an optimal $Q$-function that are not representable by a sufficiently large neural networks. We have considered goal-conditioned version of Humanoid environment in OpenAI Gym,  the in-hand cube reorientation environment in~\citep{nagabandi2019deep}, the SawyerReachPushPickPlaceEnv in MetaWorld~\citep{yu2019metaworld}, and the goal-conditioned environments benchmarks that are often used in testing meta RL~\citep{finn2017model, landolfi2019model}.
	 We found, by running SAC with a sufficiently large  number of samples, that there exist $Q$-functions  and induced policies $\pi$, parameterized by neural networks,  which can give the best known performance for these environments. \bts{} + SAC does improve the sample efficiency for these environments, but does not improve after SAC convergences with sufficient number of samples. These results indicate that neural networks could possibly have fundamentally sufficient capacity to represent the optimal $Q$ functions. They also suggest that we should use more nuanced and informative complexity measures such as the norm of the weights, instead of the width of the neural networks. Moreover, it worths studying the change of the complexity of neural networks during the training process (instead of only at the end of the training), because the complexity of the neural networks affects the generalization of the neural networks, which in turns affects the sample efficiency of the algorithms.
	\item[5.] The {\bts} planner comes with a cost of longer test time. How do we efficiently plan in high-dimensional dynamics with a long planning horizon?
	\item[6.] The dynamics can also be more complex (perhaps in another sense) than the $Q$-function in certain cases. How do we efficiently identify the complexity of the optimal $Q$-function, policy, and the dynamics, and how do we deploy the best algorithms for problems with different characteristics?
\end{itemize}

%\knote{revision update} We also tried to build environments similar to Mucojo tasks where the expressivity gap is huge (such as the environments used in Figure~\ref{fig:random-MDP}). Ideally, we want to build a environment that (a) the performance of model-free algorithms (such as SAC) is neither too good (in this case we won't see any expressivity gap), nor too bad (in this case the computed policy cannot achieve high rewards either even with \bts~ planning), and (b) \bts~ outperforms its back-end model-free algorithm significantly. We failed to find such ideal environments, and our attempts are listed below.
%\begin{itemize}
%	\item We added more randomness to initial state in Mujoco environments, and found that the SAC algorithm can still achieve high rewards.
%	\item We limited the norm of policy networks in SAC algorithm, which limits the representation power of policy networks further. However, we found that the policy was either completely destroyed, or the policy could achieve high rewards.
%	\item We considered goal-conditioned MDPs \citep{landolfi2019model}, in which we believe the policy need to be more complex to perform near-optimally. We tried rewards that conditioned on a goal-position, or a goal-velocity, and found that our implementation of SAC algorithm performs well on those environments.
%	\item We also tried harder environments such as CubeEnv\citep{nagabandi2019deep} and SawyerReachPushPickPlaceEnv in MetaWorld\citep{yu2019metaworld}. We also found that our implementation of SAC algorithm performs well.
%\end{itemize}

\subsection*{Acknowledgments}
We thank Chelsea Finn, Tianhe Yu for their help with goal-conditioned experiments, and Yuanhao Wang, Zhizhou Ren for helpful comments on the earlier version of this paper. Toyota Research Institute ("TRI")  provided funds to assist the authors with their research but this article solely reflects the opinions and conclusions of its authors and not TRI or any other Toyota entity.
The work is in part supported by SDSI and SAIL. T. M is also supported in part by Lam Research and Google Faculty Award.  Kefan Dong was supported in part by the Tsinghua Academic Fund Undergraduate Overseas Studies.
Y. L is supported by NSF, ONR, Simons Foundation, Schmidt Foundation, Amazon Research, DARPA and SRC.

\bibliography{refs}
\bibliographystyle{iclr2020_conference}

\clearpage

\appendix
\onecolumn

\section{Experiment Details in Section~\ref{sec:mb_planning}}\label{app:CEM}

\subsection{Model-based SAC (MBSAC)}\label{sec:mbsac}

Here we describe our MBSAC algorithm in Algorithm~\ref{alg:mbsac}, which is a model-based policy optimization and is used in BOOTS-MBSAC. As mentioned in Section~\ref{sec:mb_planning}, the main difference from MBPO and other works such as~\citep{POPLIN,METRPO} is that we don't use model ensemble. Instead, we occasionally optimize the dynamics by one step of Adam to introduce stochasticity in the dynamics, following the technique in SLBO~\citep{Luo2018AlgorithmicFF}. As argued in
\citep{Luo2018AlgorithmicFF}, the stochasticity in the dynamics can play a similar role as the model ensemble. Our algorithm is a few times faster than MBPO in wall-clock time. It performs similarlty to MBPO on Humanoid, but a bit worse than MBPO in other environments.
In MBSAC, we use SAC to optimize the policy $\netPi$ and the $Q$-function $\netQ$. We
choose SAC due to its sample-efficiency, simplicity and off-policy nature.
We mix the real data from the environment and the virtual data which
are always fresh and
are generated by our learned dynamics model $\netM$.\footnote{In the paper of MBPO~\citep{Janner2019WhenTT}, the authors don't explicitly state their usage of real data in SAC; the released code seems to make such use of real data, though.}

\begin{algorithm}[ht]
	\caption{MBSAC}
	\begin{algorithmic}[1]
		\State Parameterize the policy $\pi_\beta$, dynamics $\hat{f}_{\theta}$, and the $Q$-function $\netQ$ by neural networks. Initialize replay buffer $\buffer$ with $\Ninit$ steps of interactions with the environments by
		a random policy, and pretrain the dynamics on the data in the replay buffer.
		\State $t\leftarrow 0$, and sample $s_0$ from the initial state distribution.
		\For{$\Niter$ iterations}
		\State Perform action $\action_t \sim \netPi(\cdot | \state_t)$ in the environment, obtain $s'$ as the next state from the environment.
		\State $\state_{t+1} \leftarrow s'$, and add the transition $(\state_t, \action_t, s_{t+1},
		\reward_t)$ to $\buffer$.
		\State $t\leftarrow t+1$. If $t=T$ or the trajectory is done, reset to $t=0$ and sample $s_0$ from the initial state distribution.

		\For{$\Npolicy$ iterations} 		\For{$\Nmodel$ iterations} \label{stochasticity}
		\State Optimize $\netM$ with a mini-batch of data from $\buffer$
		by one step of Adam.
		\EndFor
		\State Sample $\Nreal$ data $\Breal$ and $\Nstart$ data
		$\Bstart$ from $\buffer$.
		\State Perform $q$ steps of {\bf virtual }rollouts using $\netM$ and policy
		$\netPi$ starting from states in $\Bstart$; obtain $\Bfake$.
		\State Update $\netPi$ and $\netQ$ using the mini-batch of data in $\Breal \cup
		\Bfake$ by SAC.
		\EndFor
		\EndFor
	\end{algorithmic}
	\label{alg:mbsac}
\end{algorithm}

For Ant, we modify the environment by adding the $x$ and $y$ axis to the
observation space  to make it possible
to compute the reward from observations and actions. For Humanoid, we add the position of
center of mass. We don't have any other modifications. All environments have
maximum horizon 1000.

For the policy network, we use an MLP with ReLU
activation function and two hidden layers, each of which contains 256 hidden
units. For the dynamics model, we use a network with 2 Fixup blocks
\citep{fixup}, with convolution layers replaced by a fully connected layer. We
found out that with similar number of parameters, fixup blocks leads to a more
accurate model in terms of validation loss.
Each fixup block has 500 hidden units.
We follow the model training algorithm
in \citet{Luo2018AlgorithmicFF} in which non-squared $\ell_2$ loss is used instead of the
standard MSE loss.

\subsection{Ablation Study}\label{sec:abalation}

\begin{figure}[t]
	\centering
	\includegraphics[width=0.23 \textwidth]{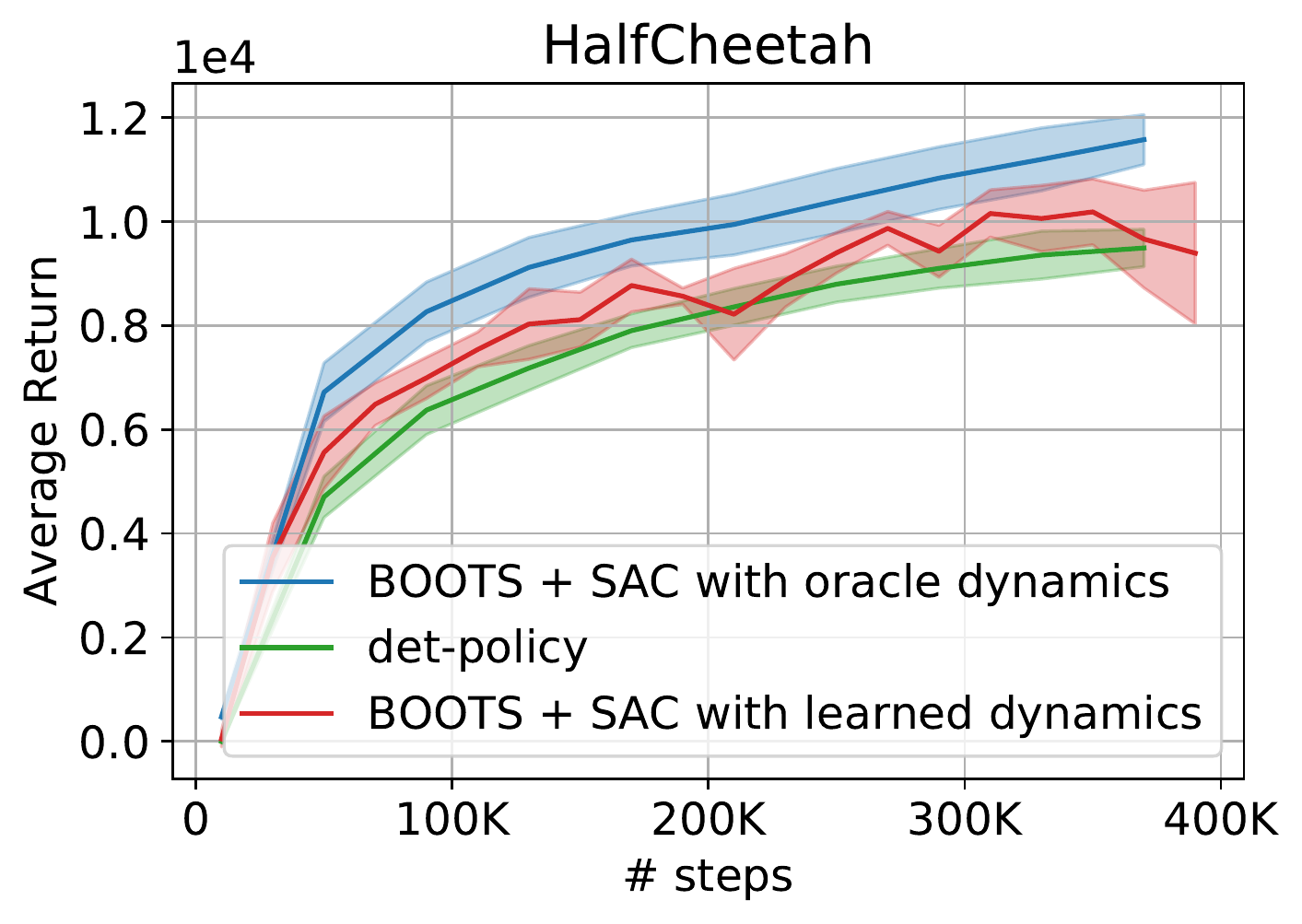}
	\includegraphics[width=0.23 \textwidth]{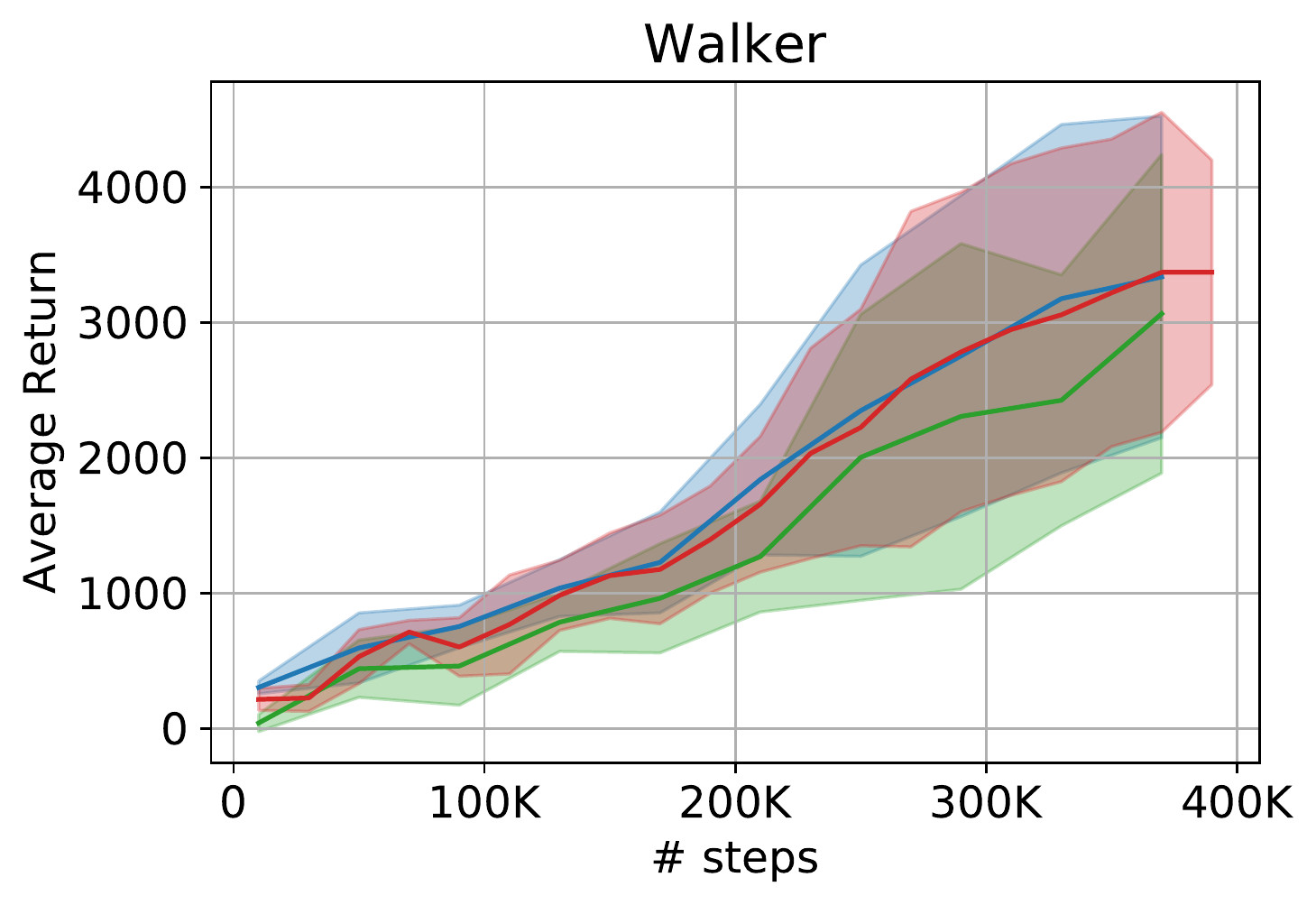}
	\includegraphics[width=0.23 \textwidth]{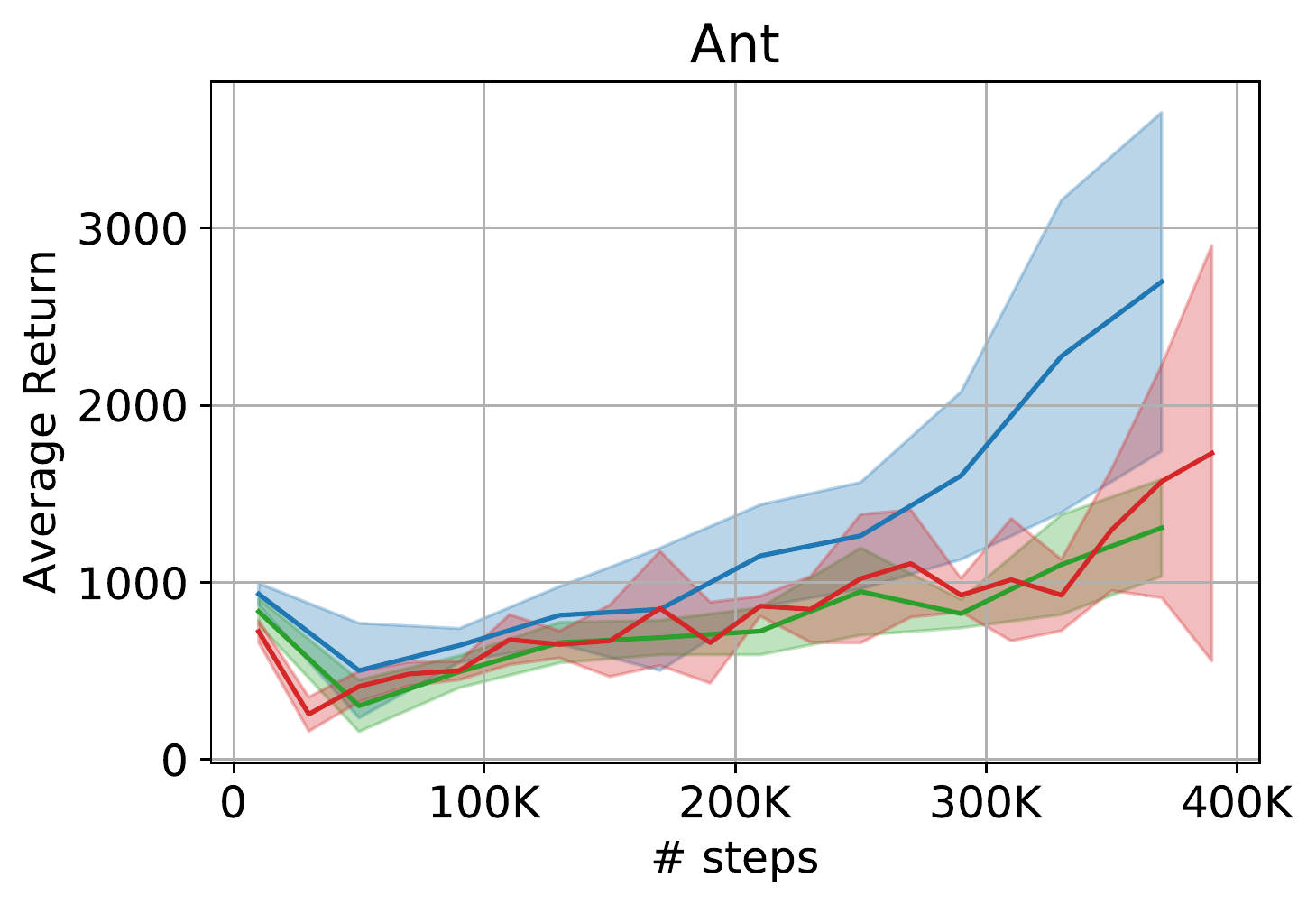}
	\includegraphics[width=0.23 \textwidth]{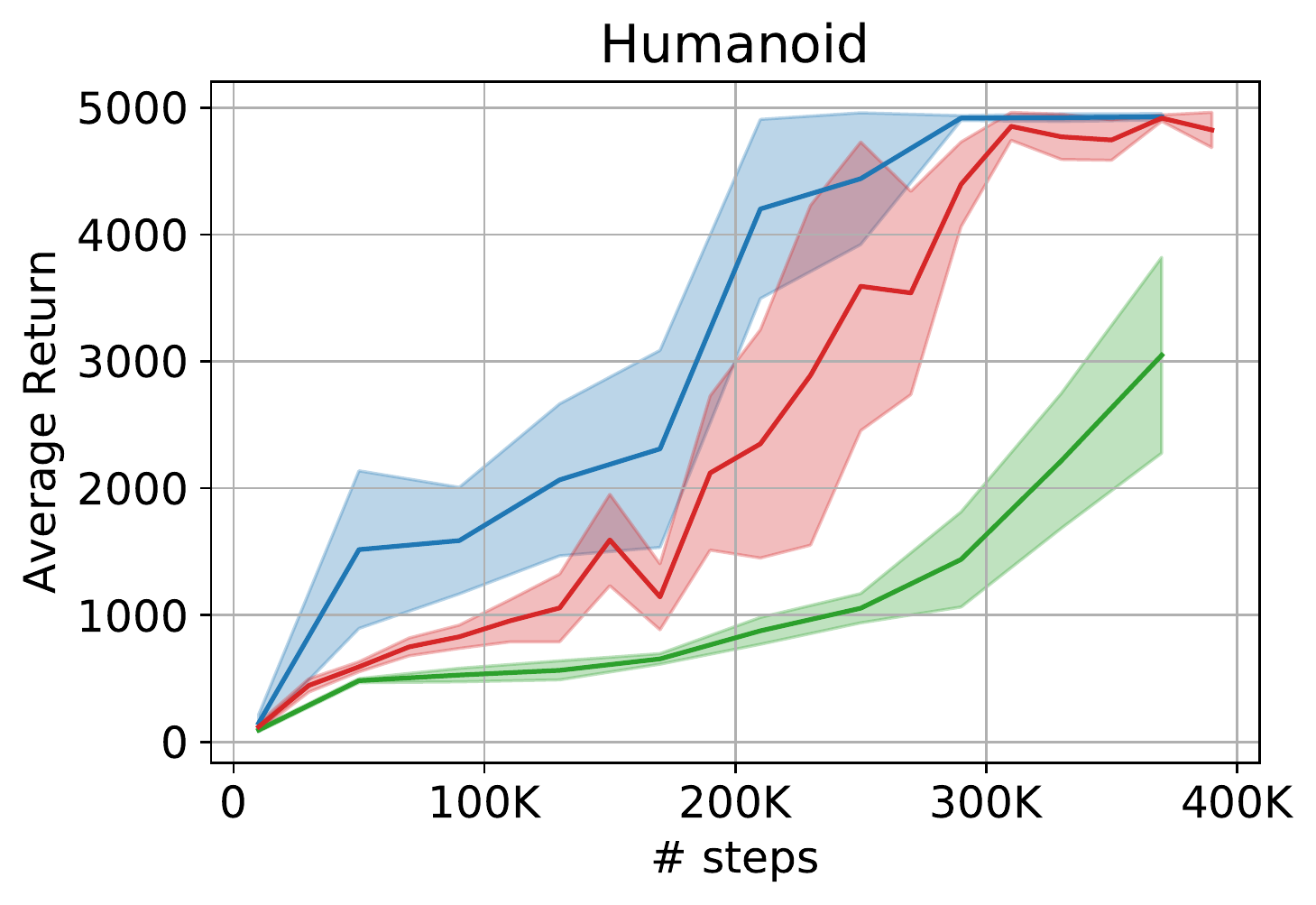} \\
	\includegraphics[width=0.23 \textwidth]{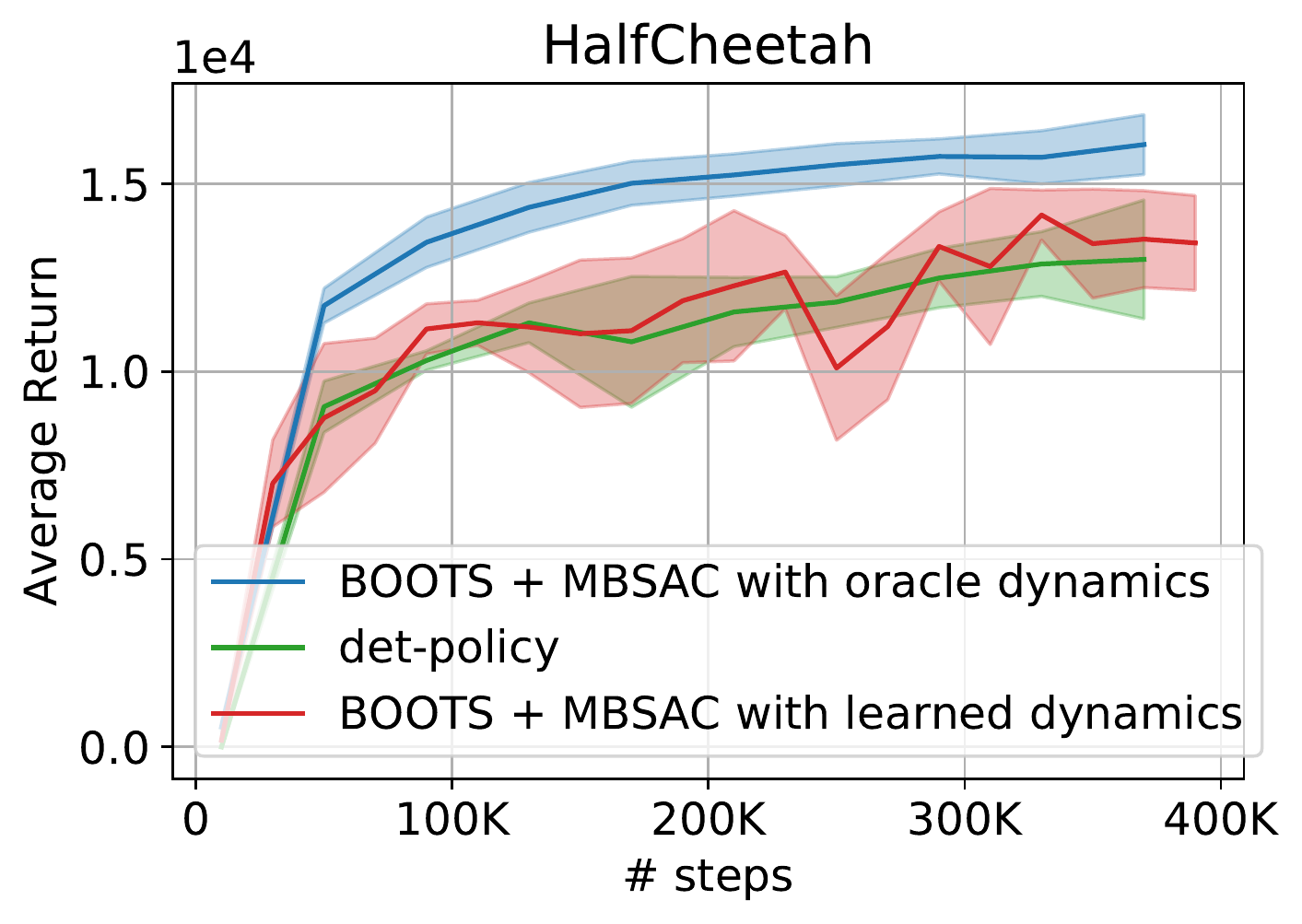}
	\includegraphics[width=0.23 \textwidth]{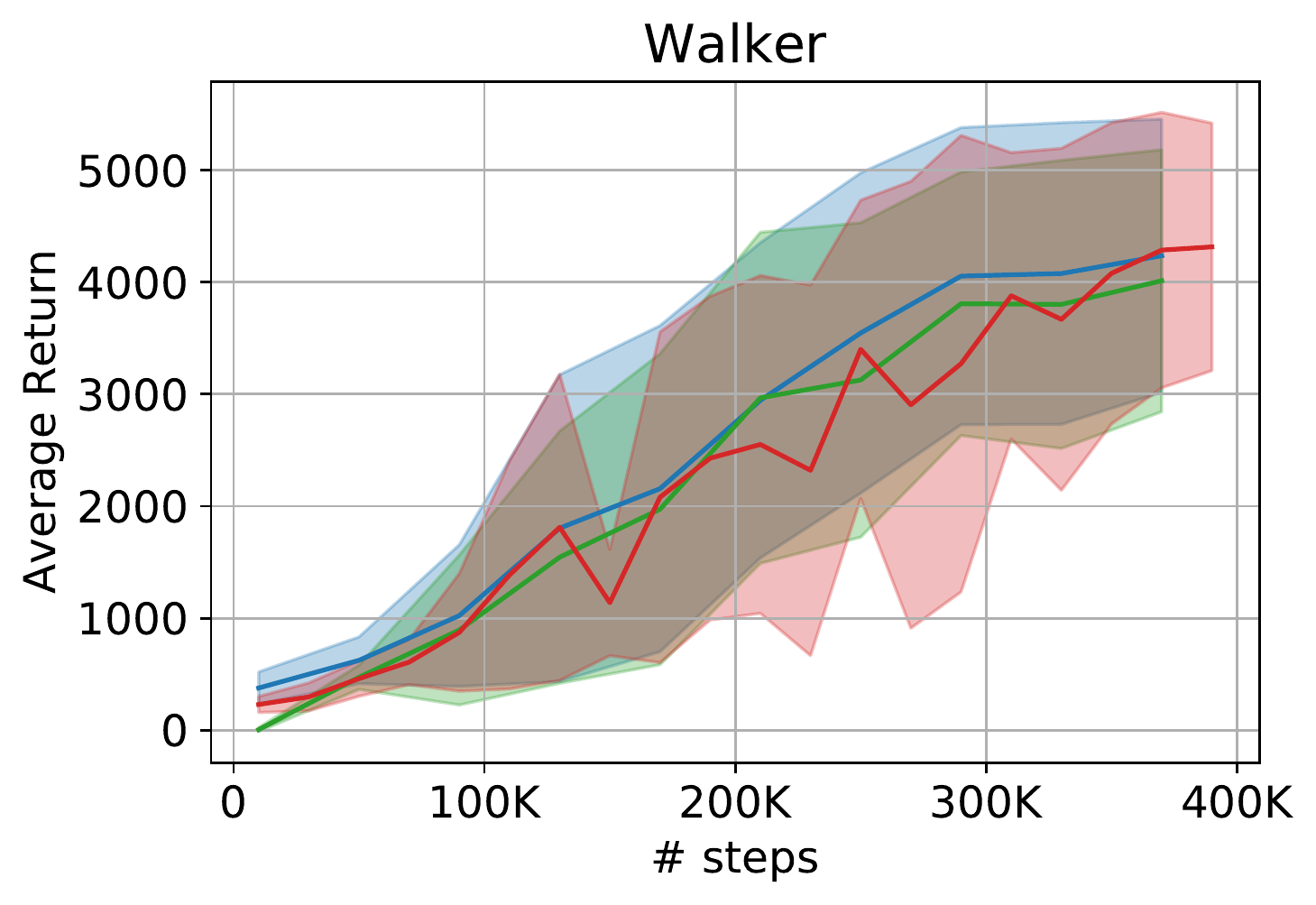}
	\includegraphics[width=0.23 \textwidth]{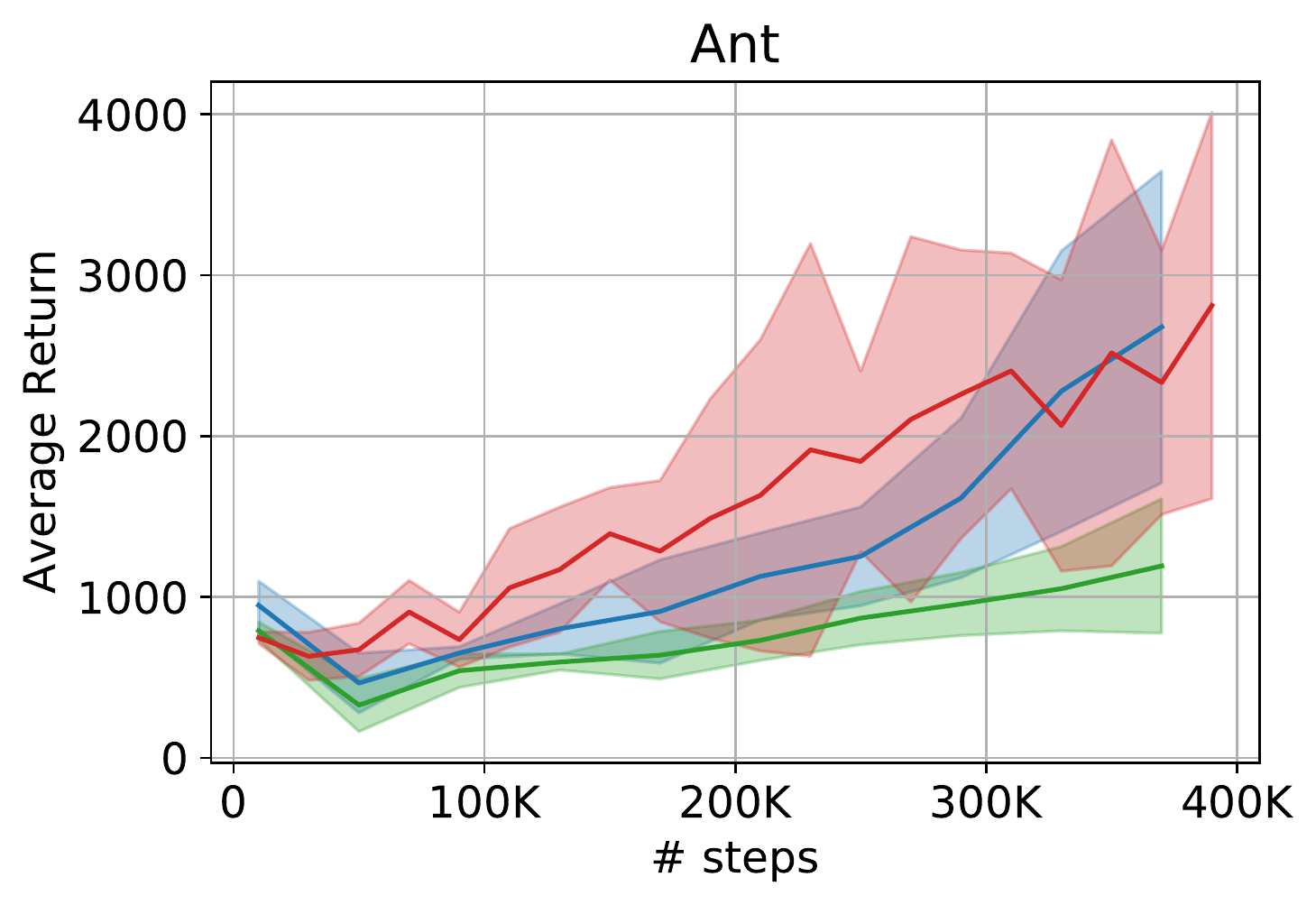}
	\includegraphics[width=0.23 \textwidth]{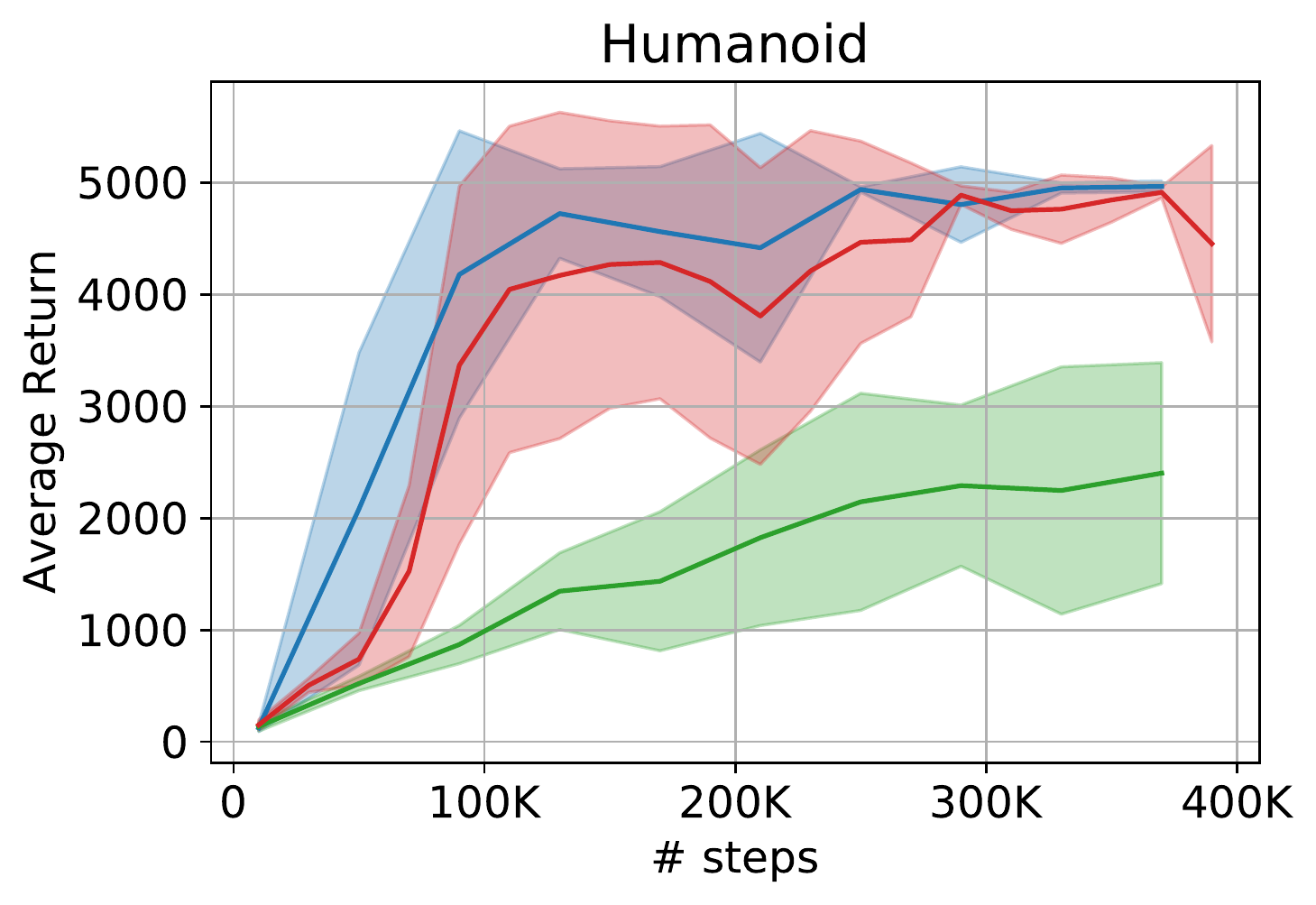}
	\caption{{\bts} with oracle dynamics on top of SAC (top) and MBSAC (bottom) on HalfCheetah, Walker, Ant and Humanoid. The solid lines are average over 5 runs, and the shadow areas indicate the standard deviation.
	}
	\label{fig:oracle-planning}
\end{figure}

{\bf Planning with oracle dynamics and more environments.} We found that {\bts} has smaller improvements on top of MBSAC and SAC for the environment Cheetah and Walker. To diagnose the issue, we also plan with an oracle dynamics (the true dynamics). This tells us whether the lack of improvement comes from inaccurate learned dynamics.
The results are presented in two ways in Figure \ref{fig:oracle-planning} and Figure~\ref{fig:oracle-planning-gain}. In Figure~\ref{fig:oracle-planning}, we plot the mean rewards and the standard deviation of various methods across the randomness of multiple seeds. However, the randomness from the seeds somewhat obscures the gains of {\bts} on each individual run. Therefore, for completeness, we also plot the relative gain of {\bts} on top of MBSAC and SAC, and the standard deviation of the gains in Figure~\ref{fig:oracle-planning-gain}.

From Figure~\ref{fig:oracle-planning-gain} we can see planning with the oracle dynamics improves the performance in most of the cases (but with various amount of improvements). However, the learned dynamics sometimes not always can give an  improvement similar to the oracle dynamics. This suggests the learned dynamics is not perfect, but oftentimes can lead to good planning.
This suggests the expressivity of the $Q$-functions varies depending on the particular environment. How and when to learn and use a learned dynamics for planning is a very interesting future open question.

\begin{figure}[t]
	\centering
	\includegraphics[width=0.23 \textwidth]{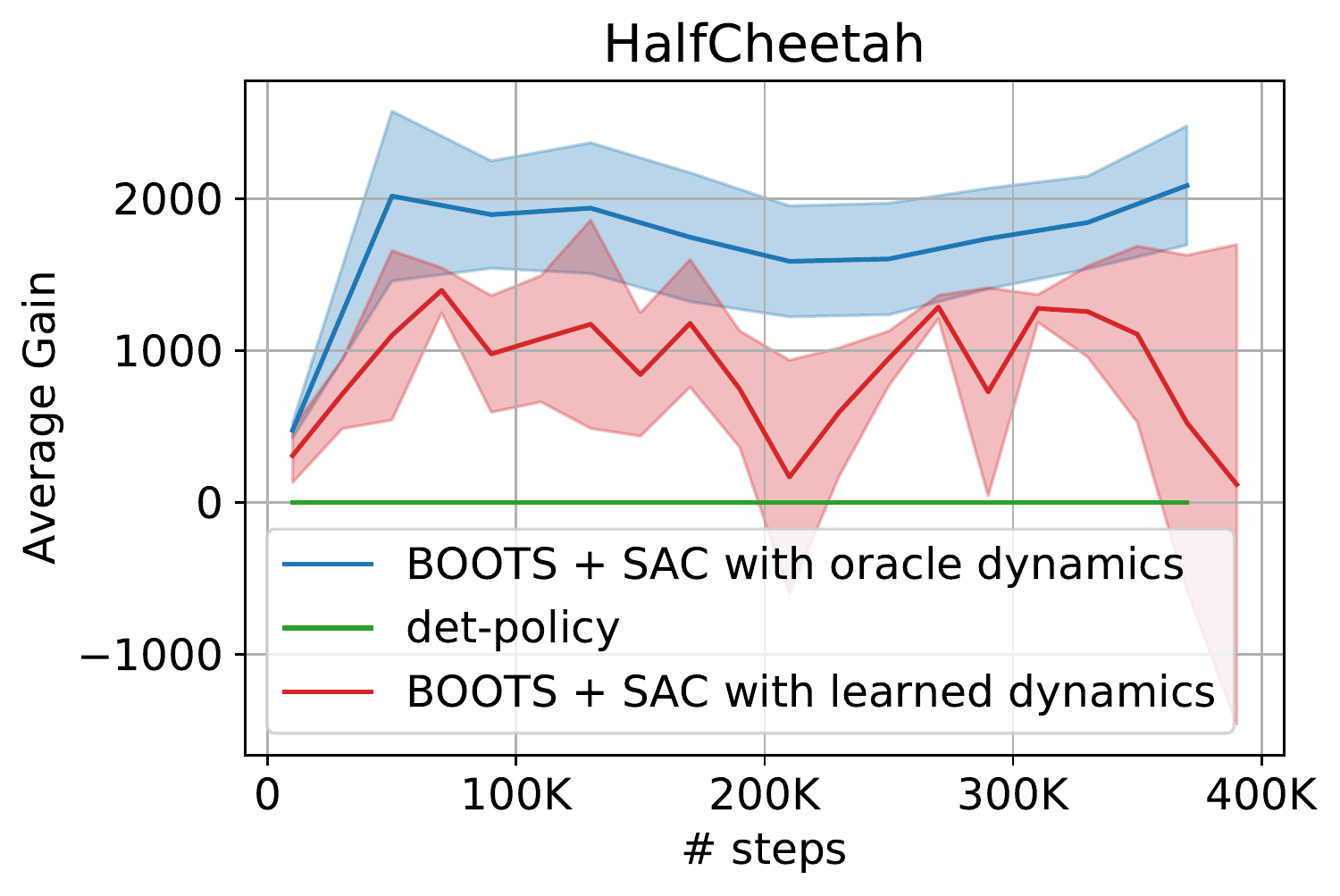}
	\includegraphics[width=0.23 \textwidth]{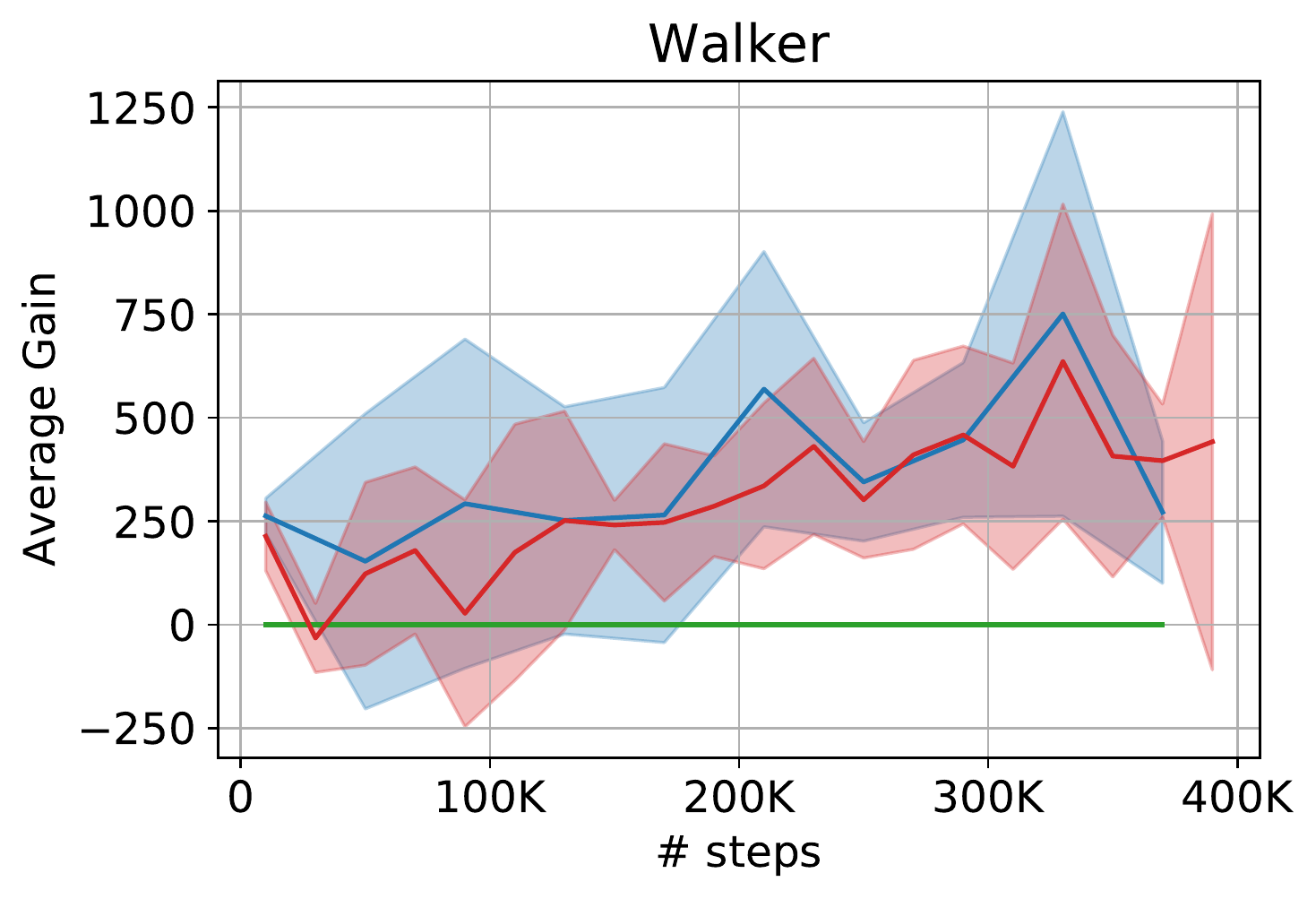}
	\includegraphics[width=0.23 \textwidth]{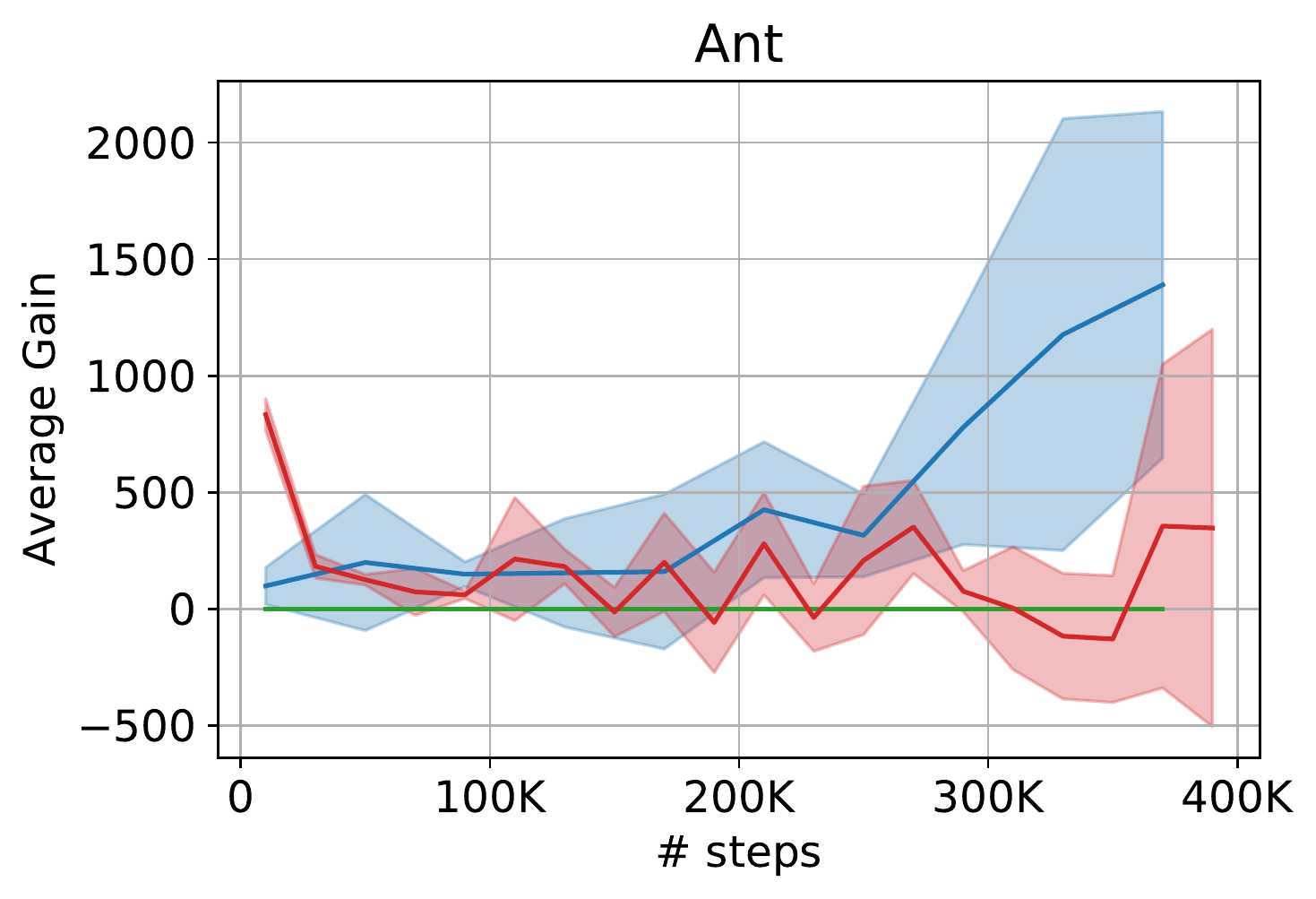}
	\includegraphics[width=0.23 \textwidth]{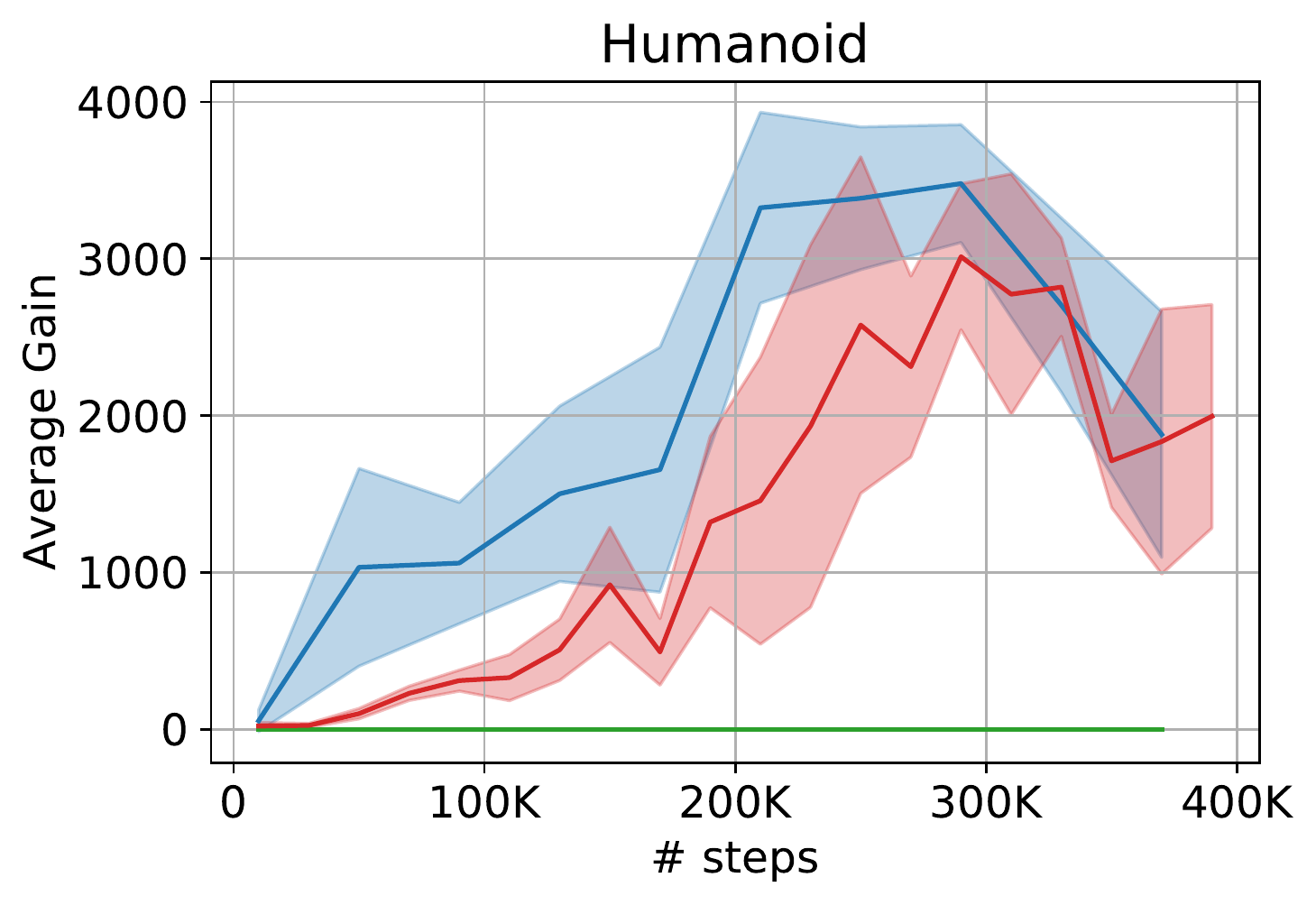} \\
	\includegraphics[width=0.23 \textwidth]{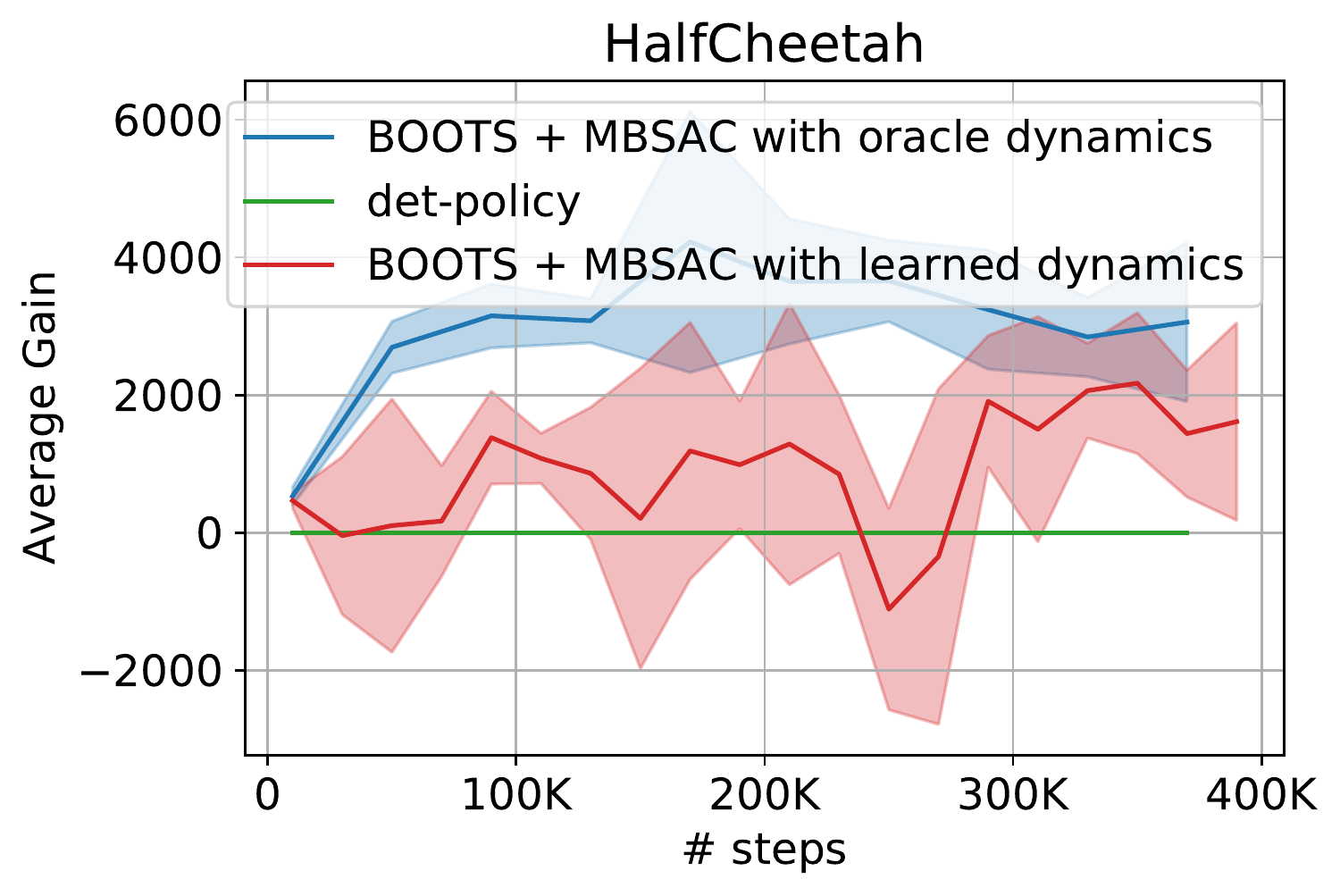}
	\includegraphics[width=0.23 \textwidth]{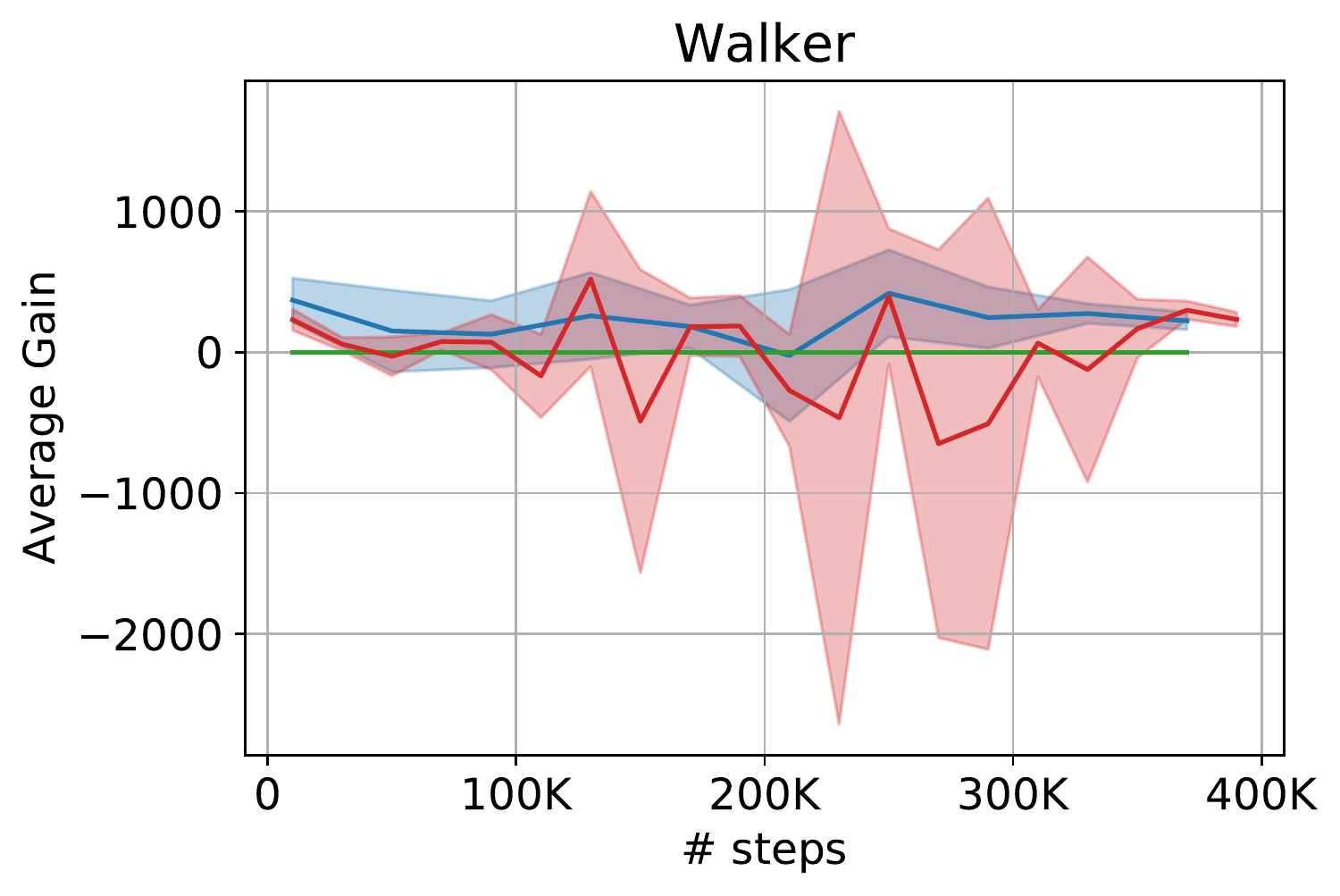}
	\includegraphics[width=0.23 \textwidth]{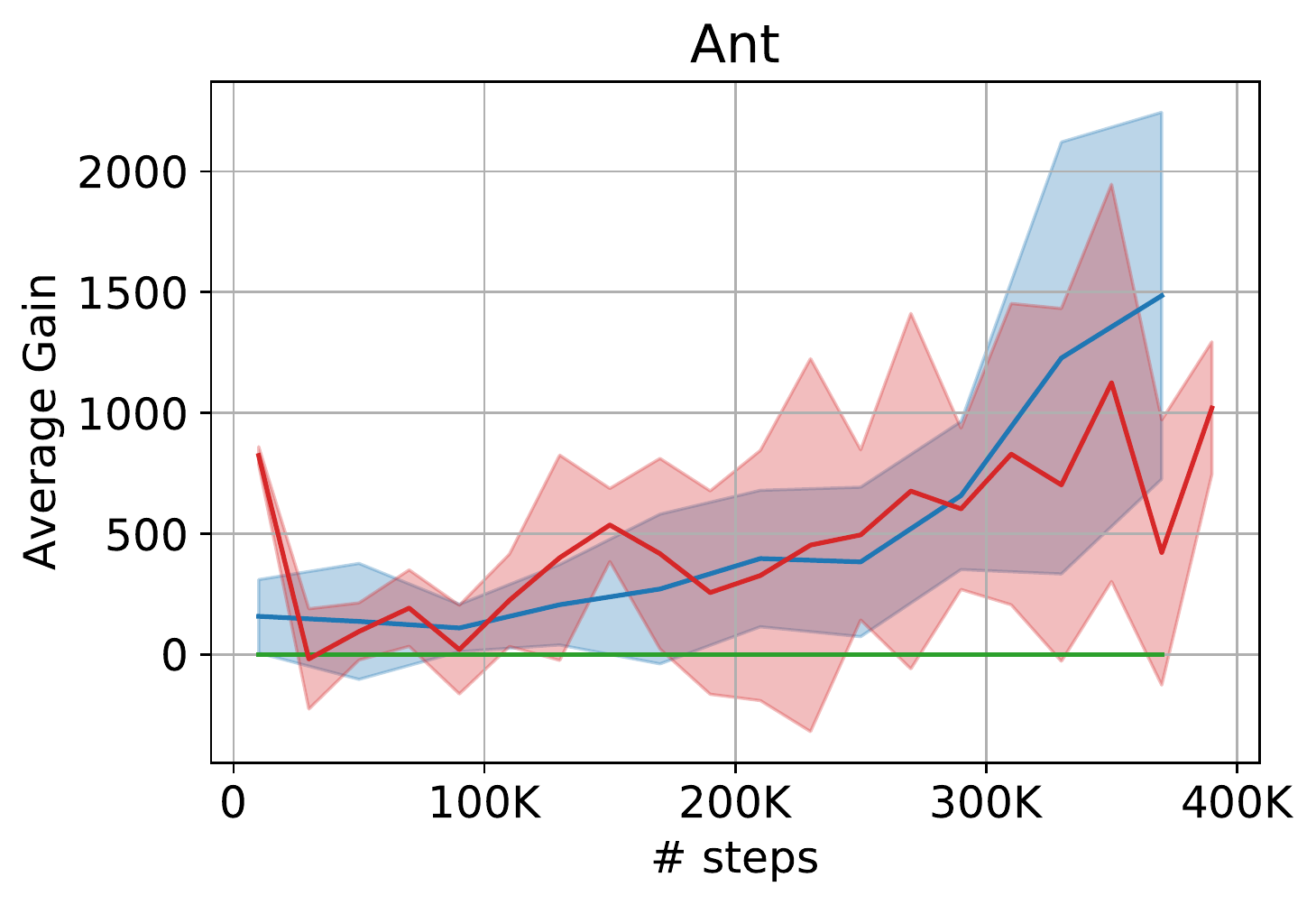}
	\includegraphics[width=0.23 \textwidth]{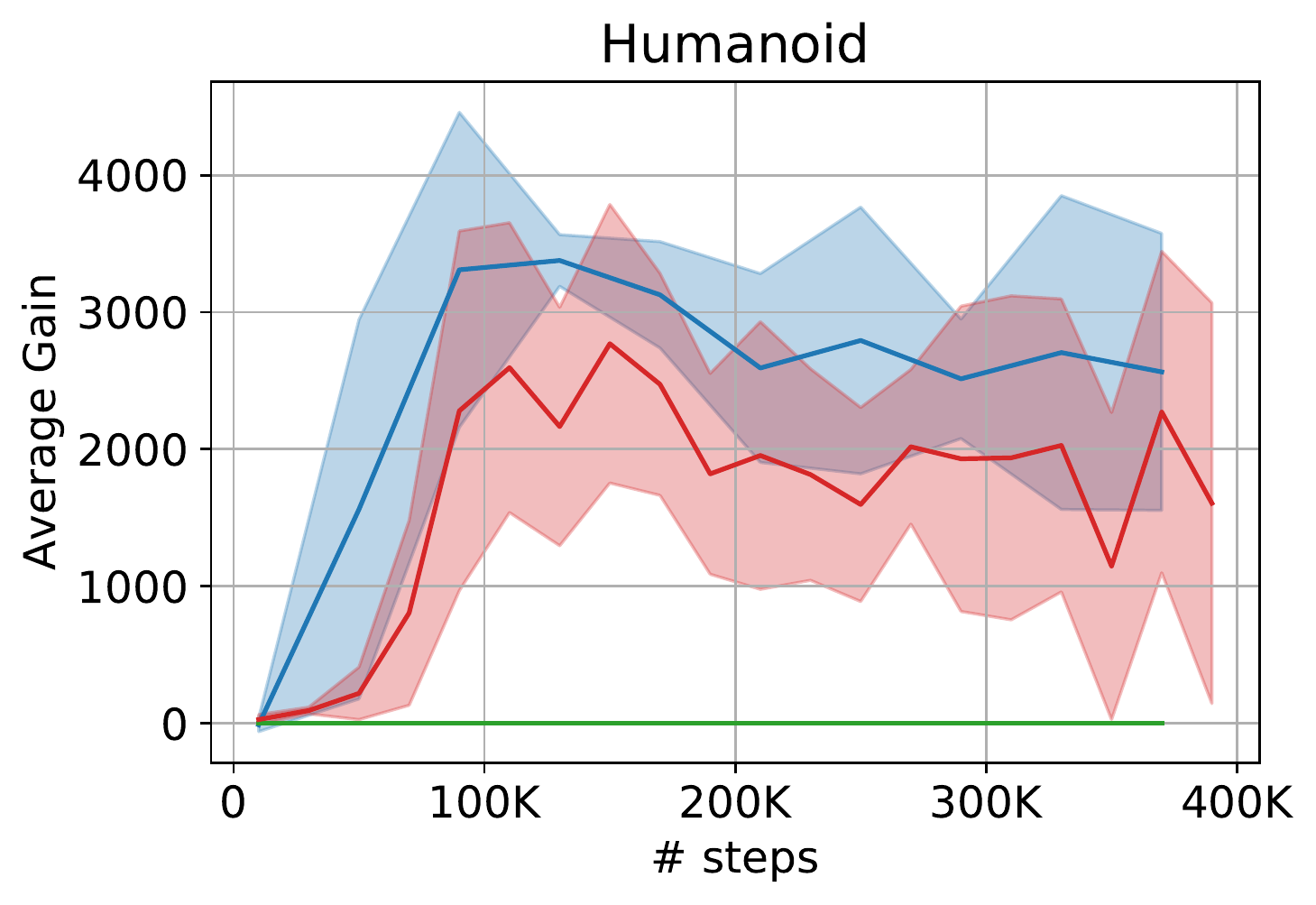}
	\caption{The relative gains of BOOTS over SAC (top) and MBSAC (bottom) on HalfCheetah, Walker, Ant and Humanoid. The solid lines are average over 5 runs, and the shadow areas indicate the standard deviation.
	}
	\label{fig:oracle-planning-gain}
\end{figure}

\begin{figure}[t]
	\centering
	\includegraphics[width=0.33 \textwidth]{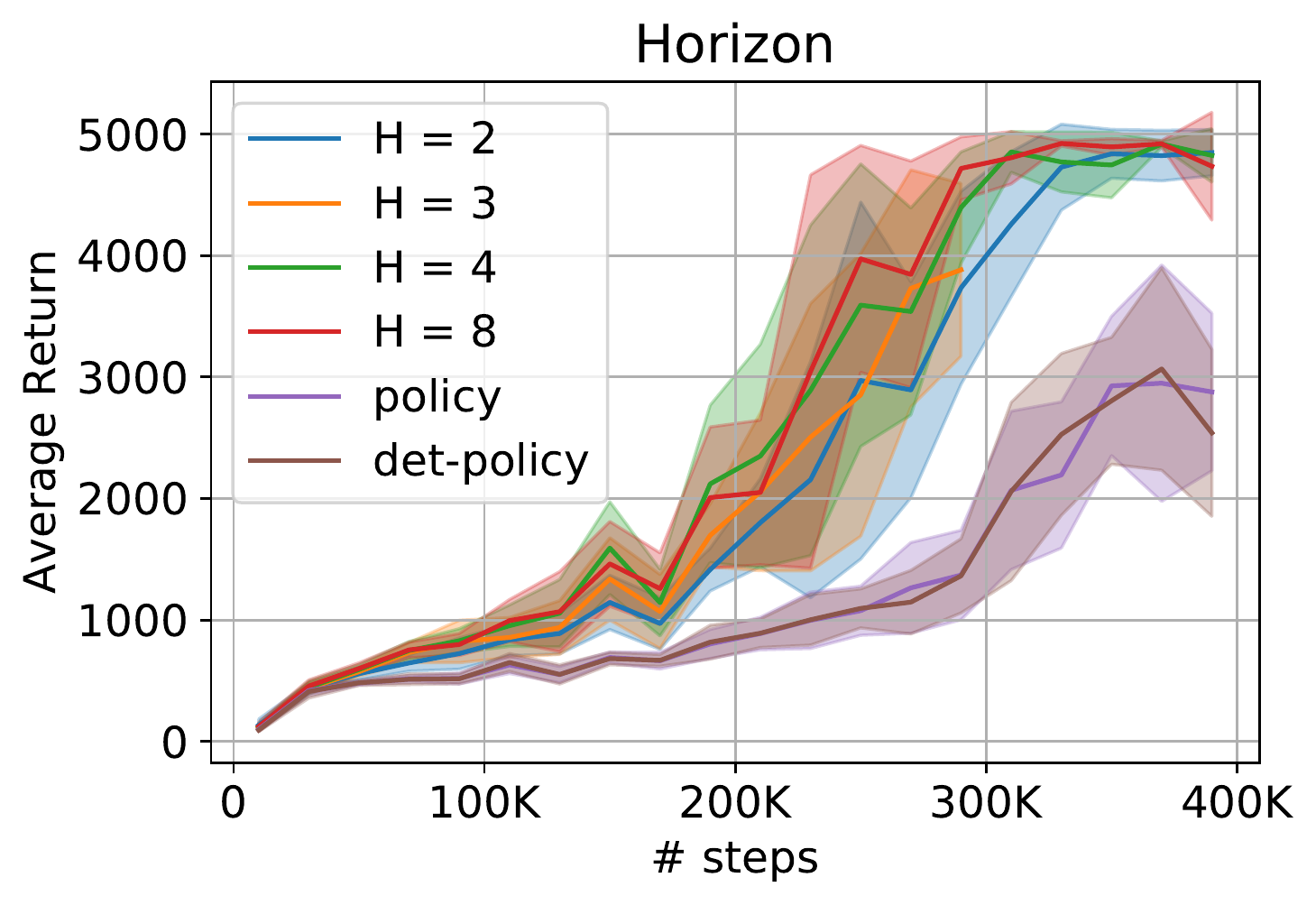}
	\includegraphics[width=0.33 \textwidth]{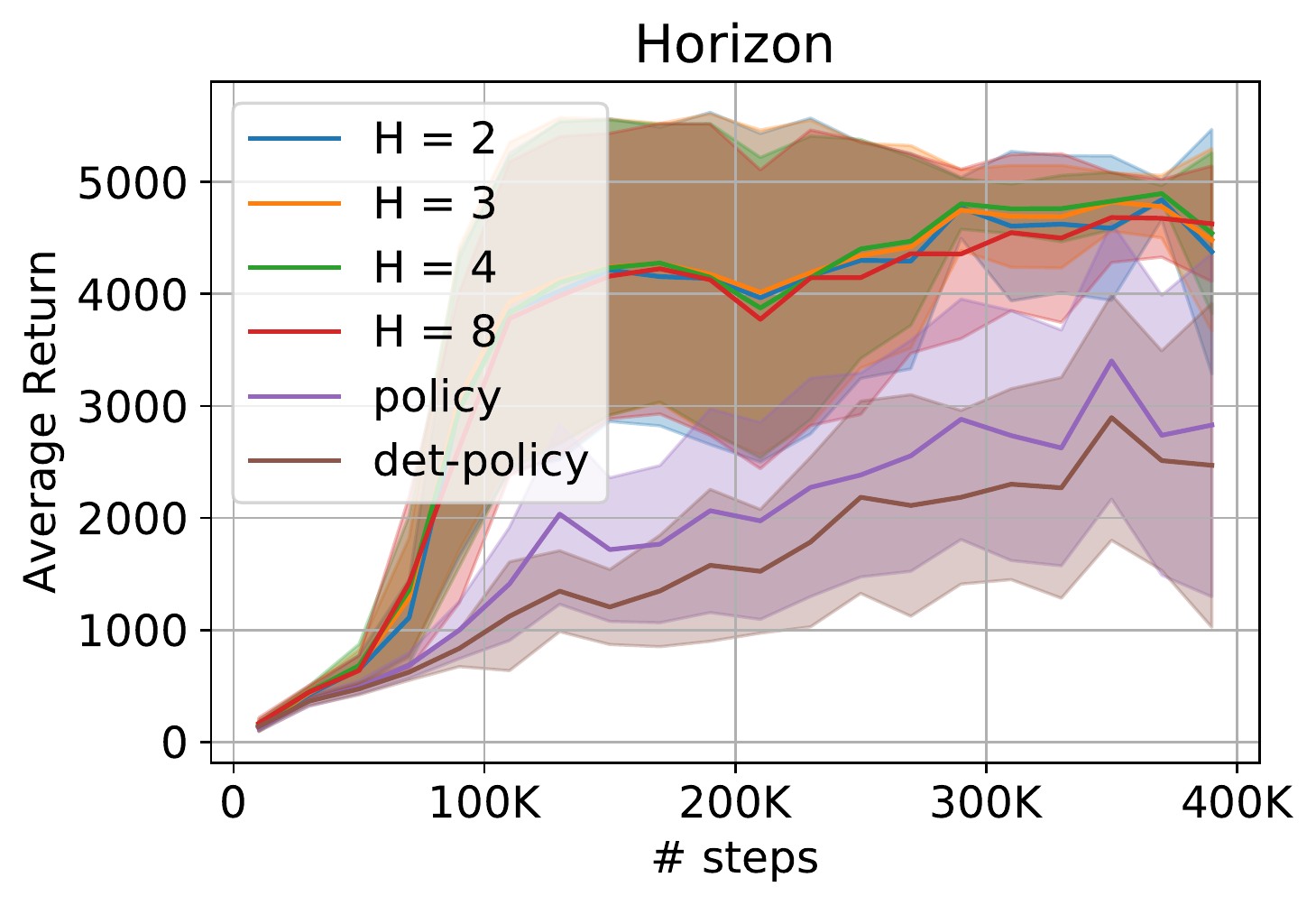}
	\caption{Different \bts{} planning horizon $k$ on top of SAC (left) and MBSAC (right) on Humanoid. The solid lines are average over 5 runs, and the shadow areas indicate the standard deviation.
	}
	\label{fig:ablation-horizon}
\end{figure}

\noindent{\bf The effect of planning horizon.  }We experimented with different planning horizons in Figure~\ref{fig:ablation-horizon}. By planning with a longer horizon, we can earn slightly higher total rewards for both MBSAC and SAC. Planning horizon $k=16$, however, does not work well. We suspect that it's caused by the compounding effect of the errors in the dynamics. 

\section{Omitted Proofs in Section~\ref{sec:gap}}\label{app:proofs_for_sec3}

In this section we provide the proofs omitted in Section~\ref{sec:gap}.

\subsection{Proof of Theorem~\ref{thm:optimal_pi}}\label{sec:proof_of_theorem_3.1}

\begin{proof}[Proof of Theorem~\ref{thm:optimal_pi}] Since the solution to Bellman optimal equations is unique, we only need to verify that $V^\star$ and $\pi^\star$ defined in equation~\eqref{equ:bellman_equation} satisfy the following,
\begin{align}
V^\star(s)&=r(s,\pi^\star(s))+\gamma V^\star(\dy(s,\pi^\star(s))),\label{equ:bellman_1}\\
V^\star(s)&\ge r(s,a)+\gamma V^\star(\dy(s,a)),\quad\forall a\neq \pi^\star(s).\label{equ:bellman_2}
\end{align}
Recall that $\bit{s}{i}$ is the $i$-th bit in the binary representation of $s$, that is, $\bit{s}{i}=\lfloor 2^{i}s\rfloor\mod 2.$
Let $\ns=\dy(s,\pi^\star(s)).$ Since $\pi^\star(s)=\ind[\bit{s}{H+1}=0],$ which ensures the $H$-bit of the next state is 1, we have \begin{equation}\bit{\ns}{i}=\begin{cases}\bit{s}{i+1},&i\neq H,\\1,&i=H.\end{cases}
\end{equation}

For simplicity, define $\cstt=2(\gamma^{H-1}-\gamma^{H})$. The definition of $r(s,a)$ implies that $$r(s,\pi^\star(s))=\mathbb{I}[1/2\le s<1]-\ind[\pi^\star(s)=1]\cstt=\bit{s}{1}-\left(1-\bit{s}{H+1}\right)\cstt.$$

By elementary manipulation, Eq.~\eqref{equ:optimal_v} is equivalent to
\begin{equation}
V^{\star}(s)=\sum_{i=1}^{H}\gamma^{i-1}\bit{s}{i}+\sum_{i=H+1}^{\infty}\left(\gamma^{i-1}-2(\gamma^{i-2}-\gamma^{i-1})\left(1-\bit{s}{i}\right)\right),\label{equ:optimal_v_2}
\end{equation}
Now, we verify Eq.~\eqref{equ:bellman_1} by plugging in the proposed solution (namely, Eq.~\eqref{equ:optimal_v_2}). As a result,
\begin{align*}
&r(s,\pi^\star(s))+\gamma V^{\star}(\ns)\\
=\;&\bit{s}{1}-\left(1-\bit{s}{H+1}\right)\cstt+\gamma\sum_{i=1}^{H}\gamma^{i-1}\mathbb{I}[\bit{\ns}{i}=1]+\gamma\sum_{i=H+1}^{\infty}\left(\gamma^{i-1}-\left(1-\bit{\ns}{i}\right)2(\gamma^{i-2}-\gamma^{i-1})\right)\\
=\;&\bit{s}{1}-\left(1-\bit{s}{H+1}\right)\cstt+\sum_{i=2}^{H}\gamma^{i-1}\bit{s}{i}+\gamma^{H}+\sum_{i=H+2}^{\infty}\left(\gamma^{i-1}-\left(1-\bit{s}{i}\right)2(\gamma^{i-2}-\gamma^{i-1})\right)\\
=\;&\sum_{i=1}^{H}\gamma^{i-1}\bit{s}{i}+\sum_{i=H+1}^{\infty}\left(\gamma^{i-1}-\left(1-\bit{s}{i}\right)2(\gamma^{i-2}-\gamma^{i-1})\right)\\
=\;&V^\star(s),
\end{align*}
which verifies Eq.~\eqref{equ:bellman_1}.

In the following we verify Eq.~\eqref{equ:bellman_2}. Consider any $a\neq \pi^\star(s)$. Let $\fns=f(s,a)$ for shorthand. Note that $\bit{\fns}{i}=\bit{s}{i+1}$ for $i>H$. As a result,
\begin{align*}
&V^\star(s)-\gamma V^\star(\fns)\\
=\;&\sum_{i=1}^{H}\gamma^{i-1}\bit{s}{i}+\sum_{i=H+1}^{\infty}\left(\gamma^{i-1}-\left(1-\bit{s}{i}\right)2(\gamma^{i-2}-\gamma^{i-1})\right)\\
&-\sum_{i=1}^{H}\gamma^{i-1}\bit{\fns}{i}-\sum_{i=H+1}^{\infty}\left(\gamma^{i-1}-\left(1-\bit{\fns}{i}\right)2(\gamma^{i-2}-\gamma^{i-1})\right)\\
=&\bit{s}{1}+\sum_{i=1}^{H-1}\gamma^{i}\left(\bit{s}{i+1}-\bit{\fns}{i}\right)-\gamma^H\bit{\fns}{H}+\gamma^H-2\left(1-\bit{s}{H+1}\right)\left(\gamma^{H-1}-\gamma^{H}\right)\\
\end{align*}

For the case where $\bit{s}{H+1}=0$, we have $\pi^\star(s)=1$. For $a=0$, $\bit{\fns}{i}=\bit{s}{i+1}$ for all $i\ge 1.$ Consequently,
$$V^\star(s)-\gamma V^\star(\fns)=\bit{s}{1}+\gamma^{H}-\cstt>\bit{s}{1}=r(s,0),$$ where the last inequality holds when $\gamma^{H}-\cstt>0,$ or equivalently, $\gamma>2/3.$

For the case where $\bit{s}{H+1}=1$, we have $\pi^\star(s)=0$. For $a=1$, we have $\bit{s}{H+1}=1$ and $\bit{\fns}{H}=0$. Let $p=\max \{i\le H:\bit{s}{i}=0\},$ where we define the max of an empty set is $0$. The dynamics $\dy(s,1)$ implies that
$$\bit{\fns}{i}=\begin{cases}\bit{s}{i+1},&i+1<p\text{ or }i>H,\\1,&i+1=p,\\0,&p<i+1\le H+1. \end{cases}$$
Therefore, $$V^\star(s)-\gamma V^\star(\fns)=\bit{s}{1}+\gamma^{H}+\sum_{i=1}^{H-1}\gamma^{i}\left(\bit{s}{i+1}-\bit{\fns}{i}\right)>\bit{s}{1}-\cstt=r(s,1).$$

In both cases, we have $V^\star-\gamma V^\star(\fns)>r(s,a)$ for $a\neq \pi^\star(s),$ which proves Eq.~\eqref{equ:bellman_2}.
\end{proof}

\subsection{Proof of Theorem~\ref{thm:competitive_ratio}}\label{sec:proof_of_theorem_3.2}
For a fixed parameter $\pH$, let $z(\pi)$ be the number of pieces in $\pi$. For a policy $\pi$, define the state distribution when acting policy $\pi$ at step $h$ as $\mu^{\pi}_h$.

In order to prove Theorem~\ref{thm:competitive_ratio}, we show that if $1/2-2\pH z(\pi)/2^{\pH}<0.3$, then $\eta(\pi)<0.92\eta(\pi^\star).$ The proof is based on the advantage decomposition lemma. \begin{lemma}[Advantage Decomposition Lemma \citep{schulman2015trust, kakade2002approximately}] Define $A^\pi(s,a)=r(s,a)+\gamma V^\pi(\dy(s,a))-V^{\pi}(s)=Q^{\pi}(s,a)-V^{\pi}(s).$ Given policies $\pi$ and $\tilde{\pi}$, we have
\begin{equation}
\eta(\pi)=\eta(\tilde{\pi})+\sum_{h=1}^{\infty}\gamma^{h-1}\E_{s\sim \mu_h^{\pi}}\left[A^{\tilde{\pi}}(s,\pi(s))\right].
\end{equation}
\end{lemma}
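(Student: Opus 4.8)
The plan is to prove this identity—often called the performance-difference or advantage-decomposition lemma—by a telescoping argument along the trajectory generated by $\pi$, treating the comparison policy's value function $V^{\tilde\pi}$ as a potential function. First I would fix an initial state $s_1$ and consider the \emph{deterministic} trajectory $s_1, a_1, s_2, a_2, \dots$ induced by $\pi$, where $a_h = \pi(s_h)$ and $s_{h+1} = \dy(s_h, a_h)$. Along this single trajectory I would expand the discounted sum of advantages using $A^{\tilde\pi}(s_h, a_h) = r(s_h, a_h) + \gamma V^{\tilde\pi}(s_{h+1}) - V^{\tilde\pi}(s_h)$. This expansion crucially uses that the dynamics is deterministic, so the next state appearing inside $Q^{\tilde\pi}$ is exactly the realized $s_{h+1}$ rather than an expectation over next states.

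The key step is that the two terms carrying $V^{\tilde\pi}$ telescope. Writing $\sum_{h=1}^{\infty}\gamma^{h-1}A^{\tilde\pi}(s_h,a_h) = \sum_{h=1}^{\infty}\gamma^{h-1}r(s_h,a_h) + \sum_{h=1}^{\infty}\gamma^{h} V^{\tilde\pi}(s_{h+1}) - \sum_{h=1}^{\infty}\gamma^{h-1}V^{\tilde\pi}(s_h)$, the last two sums differ only by a shift of index, so they collapse to $-V^{\tilde\pi}(s_1)$. This leaves the pathwise identity $\sum_{h=1}^{\infty}\gamma^{h-1}A^{\tilde\pi}(s_h,a_h) = \sum_{h=1}^{\infty}\gamma^{h-1}r(s_h,a_h) - V^{\tilde\pi}(s_1)$, where I recognize the first term on the right as $V^{\pi}(s_1)$, the value accumulated by $\pi$ starting from $s_1$.

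Finally I would take the expectation over $s_1 \sim \mu$. The right-hand side becomes $\eta(\pi) - \eta(\tilde\pi)$ directly from the definition of $\eta$, and on the left-hand side I use that, under $\pi$, the state $s_h$ is distributed as $\mu_h^{\pi}$ with $a_h = \pi(s_h)$, turning the trajectory expectation into $\sum_{h=1}^{\infty}\gamma^{h-1}\E_{s\sim\mu_h^{\pi}}[A^{\tilde\pi}(s,\pi(s))]$; rearranging gives the stated formula. The only genuine obstacle is justifying convergence of the telescoping series and the interchange of expectation with the infinite sum. Both follow from the rewards being bounded (hence $V^{\tilde\pi}$ and $V^{\pi}$ are bounded by $O((1-\gamma)^{-1})$) together with $0<\gamma<1$: this makes each series absolutely convergent and forces the boundary term $\gamma^{h}V^{\tilde\pi}(s_{h+1}) \to 0$, so the telescoping is valid and Fubini/dominated convergence permits swapping the sum and the expectation.
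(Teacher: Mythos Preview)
Your telescoping argument is correct and is the standard proof of the performance-difference lemma. The paper does not supply its own proof of this statement---it is cited from \citet{schulman2015trust} and \citet{kakade2002approximately} as a known result---so there is nothing to compare against; your derivation is exactly the classical one.
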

\begin{corollary} \label{cor:advantage}
For any policy $\pi$, we have
\begin{equation}\label{equ:advantage}
\eta(\pi^\star)-\eta(\pi)=\sum_{h=1}^{\infty}\gamma^{h-1}\E_{s\sim \mu_h^{\pi}}\left[V^\star(s)-Q^\star(s,\pi(s))\right].
\end{equation}
\end{corollary}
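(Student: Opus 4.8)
The plan is to obtain the identity as an immediate specialization of the Advantage Decomposition Lemma stated just above. First I would apply that lemma with the baseline policy $\tilde{\pi}$ set to the optimal policy $\pi^\star$ and the acting policy left as the arbitrary $\pi$ in the statement. With this choice the lemma's right-hand side becomes $\eta(\pi^\star) + \sum_{h=1}^{\infty}\gamma^{h-1}\,\E_{s\sim\mu_h^{\pi}}\!\left[A^{\pi^\star}(s,\pi(s))\right]$, equated to $\eta(\pi)$.

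The one substantive step is to simplify the advantage term. Since $\pi^\star$ is the optimal policy, its value and state-action value functions are exactly the optimal ones, $V^{\pi^\star}=V^\star$ and $Q^{\pi^\star}=Q^\star$ (these are the unique solutions of the Bellman optimality equations~\eqref{equ:bellman_equation}). Hence $A^{\pi^\star}(s,a)=Q^{\pi^\star}(s,a)-V^{\pi^\star}(s)=Q^\star(s,a)-V^\star(s)$, and evaluating at $a=\pi(s)$ turns the per-term integrand into $Q^\star(s,\pi(s))-V^\star(s)$.

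Finally I would rearrange: moving $\eta(\pi^\star)$ to the left and negating each summand flips the integrand to $V^\star(s)-Q^\star(s,\pi(s))$, which is precisely~\eqref{equ:advantage}. There is no real obstacle here—the whole argument is a direct substitution into the cited lemma followed by a sign flip. The only point worth flagging is that every expectation is taken under the occupancy measure $\mu_h^{\pi}$ induced by the \emph{suboptimal} policy $\pi$, not by $\pi^\star$; this is exactly the feature that makes the identity useful for the later suboptimality analysis, since it expresses the gap $\eta(\pi^\star)-\eta(\pi)$ as a discounted average of the nonnegative per-state gaps $V^\star(s)-Q^\star(s,\pi(s))$ over the states actually visited by $\pi$.
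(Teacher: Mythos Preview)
Your proposal is correct and matches the paper's intent: the corollary is stated without proof precisely because it follows by instantiating the Advantage Decomposition Lemma with $\tilde{\pi}=\pi^\star$, using $V^{\pi^\star}=V^\star$, $Q^{\pi^\star}=Q^\star$, and then rearranging signs. There is nothing to add.
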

Intuitively speaking, since $\optpi=\ind[\bit{s}{H+1}=0]$, the a policy $\pi$ with polynomial pieces behaves suboptimally in most of the states. Lemma~\ref{lem:regret_in_interval} shows that the single-step suboptimality gap  $V^\star(s)-Q^\star(s,\pi(s))$ is large for a constant portion of the states. On the other hand, Lemma~\ref{lem:near_uniform} proves that the state distribution $\mu_{h}^{\pi}$ is near uniform, which means that suboptimal states can not be avoided. Combining with Corollary~\ref{cor:advantage}, the suboptimal gap of policy $\pi$ is large.

The next lemma shows that, if $\pi$ does not change its action for states from a certain interval, the average advantage term $V^\star(s)-Q^\star(s,\pi(s))$ in this interval is large. Proof of this lemma is deferred of Section~\ref{sec:proofs_of_lemmas}.

\begin{lemma}\label{lem:regret_in_interval}
Let $\ell_k=[k/2^{\pH},(k+1)/2^{\pH}),$ and $\calK=\{0\le k< 2^\pH:k\mod 2=1\}$. Then for $k\in \calK$, if policy $\pi$ does not change its action at interval $\ell_k$ (that is, $\left|\{\pi(s):s\in \ell_k\}\right|=1$), we have
\begin{equation}
\frac{1}{|\ell_k|}\int_{s\in \ell_k}(\optV(s)-\optQ(s,\pi(s)))\;d s\ge 0.183
\end{equation}
for $H>500.$
\end{lemma}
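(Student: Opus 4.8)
The plan is to reduce the claimed integral bound to a pointwise single-step suboptimality gap that turns out to be \emph{constant} on the relevant sub-interval, and then to estimate that constant carefully. First I would record the elementary fact that, since $\optV(s)=\optQ(s,\optpi(s))$, the integrand $\optV(s)-\optQ(s,\pi(s))$ vanishes wherever $\pi(s)=\optpi(s)$ and is strictly positive otherwise. The crucial structural observation is that on $\ell_k$ with $k$ odd, the first $H$ bits $\bit{s}{1},\dots,\bit{s}{H}$ are frozen to the binary expansion of $k$ (in particular $\bit{s}{H}=1$), and only the bits $\bit{s}{H+1},\bit{s}{H+2},\dots$ vary as $s$ sweeps $\ell_k$. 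Splitting $\ell_k=\ell_k^0\cup\ell_k^1$ according to whether $\bit{s}{H+1}=0$ or $1$, each half has measure $|\ell_k|/2$, and by $\optpi(s)=\ind[\bit{s}{H+1}=0]$ we have $\optpi\equiv 1$ on $\ell_k^0$ and $\optpi\equiv 0$ on $\ell_k^1$. Since $\pi$ takes a single action $a$ on $\ell_k$, it is suboptimal on exactly one of the two halves.

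Next I would compute the single-step gap on the bad half by directly reusing the per-state identities already established in the proof of Theorem~\ref{thm:optimal_pi}. If $a=0$ (bad on $\ell_k^0$), that computation gives $\optV(s)-\optQ(s,0)=V^\star(s)-r(s,0)-\gamma V^\star(\dy(s,0))=\gamma^H-\cstt$, a \emph{constant} independent of $s$. If $a=1$ (bad on $\ell_k^1$), the same source yields $\optV(s)-\optQ(s,1)=\gamma^H+\cstt+\sum_{i=1}^{H-1}\gamma^i(\bit{s}{i+1}-\bit{\fns}{i})$ with $\fns=\dy(s,1)$; since the first $H$ bits are frozen, the carry position $p=\max\{i\le H:\bit{s}{i}=0\}$ is constant on $\ell_k$, so this gap is also constant on $\ell_k^1$. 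Using the bit formula for $\bit{\fns}{i}$ in terms of $p$, the sum telescopes to $-\gamma^{p-1}+H(\gamma^p-\gamma^H)$, giving the closed form $g(p)=\gamma^H(1-H)+\cstt+\gamma^p(H-\gamma^{-1})$.

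Because the gap is constant on the bad half and zero on the good half, the average over $\ell_k$ is exactly one half of that constant. For $a=0$ this equals $\tfrac12(\gamma^H-\cstt)=\tfrac12\gamma^{H-1}(3\gamma-2)$. For $a=1$, since $H-\gamma^{-1}>0$, the function $g(p)$ is decreasing in $p$, so it is minimized at the largest feasible value $p=H-1$ (feasible because $\bit{s}{H}=1$ forces $p\le H-1$), giving $\tfrac12\,g(H-1)=\tfrac12\gamma^{H-1}(3-\gamma-\gamma^{-1})$. A short comparison ($4\gamma+\gamma^{-1}-5\le 0$ for $1/2<\gamma<1$) shows the $a=0$ bound is the smaller of the two, so it is the binding case.

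The remaining and most delicate step is the numerical estimate showing $\tfrac12\gamma^{H-1}(3\gamma-2)\ge 0.183$ for all $H>500$. Writing $\gamma=1-1/H$ and $3\gamma-2=1-3/H$, I would use $\ln(1-1/H)=-1/H-1/(2H^2)-\cdots$ to get $\gamma^{H-1}=e^{-1}\bigl(1+\tfrac{1}{2H}+O(H^{-2})\bigr)$, hence $\gamma^{H-1}(1-3/H)=e^{-1}\bigl(1-\tfrac{5}{2H}+O(H^{-2})\bigr)$, which is increasing in $H$ and already exceeds $0.366$ at $H=500$. This is where the constant $0.183$ and the hypothesis $H>500$ originate. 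The main obstacle is thus twofold: the carry bookkeeping needed to pin down $\bit{\fns}{i}$ for the $a=1$ case, which I would import wholesale from the proof of Theorem~\ref{thm:optimal_pi}, and the tight asymptotic estimate of $(1-1/H)^{H-1}$ near $e^{-1}$ that must be controlled to three decimal places, since the target bound sits only barely below the achievable value.
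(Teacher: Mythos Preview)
Your proposal is correct and follows essentially the same approach as the paper: split $\ell_k$ by the value of $\bit{s}{H+1}$, observe that a constant policy is suboptimal on exactly one half, lower-bound the pointwise gap there by $\gamma^{H}-\cstt$, and halve. The paper's proof simply asserts this pointwise lower bound uniformly for both actions (relying implicitly on the Bellman-verification computations in the proof of Theorem~\ref{thm:optimal_pi}), whereas you carry out the $a=1$ case explicitly via the carry position $p$ and verify that the minimum over $p$ still dominates the $a=0$ value $\gamma^{H}-\cstt$; your treatment is thus more detailed but not different in substance.
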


Next lemma shows that when the number of pieces in $\pi$ is not too large, the distribution $\mu_h^{\pi}$ is close to uniform distribution for step $1\le h\le \pH.$ Proof of this lemma is deferred of Section~\ref{sec:proofs_of_lemmas}

\begin{lemma}\label{lem:near_uniform}
Let $z(\pi)$ be the number of pieces of policy $\pi$. For $k\in [2^{\pH}]$, define interval $\ell_k=[k/2^{\pH},(k+1)/2^{\pH}).$ Let $\nu_h(k)=\inf_{s\in \ell_k}\mu_{h}^{\pi}(s)$,
If the initial state distribution $\mu$ is uniform distribution, then for any $h\ge 1,$
\begin{equation}
\sum_{0\le k< 2^{\pH}}2^{-\pH}\cdot \nu_h(k)\ge 1-2h\frac{z(\pi)}{2^\pH}.
\end{equation}
\end{lemma}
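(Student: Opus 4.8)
The plan is to pass from the measures $\mu_h^\pi$ to their densities and reduce the whole statement to controlling a single scalar: the total oscillation of the state density across the cells $\ell_k$. Write $\rho_h$ for the density of $\mu_h^\pi$ with respect to Lebesgue measure on $[0,1)$, so $\rho_1\equiv 1$ since $\mu$ is uniform. Recall from the proof sketch of Theorem~\ref{thm:optimal_pi} that the dynamics are the two dyadic-aligned expanding maps $\dy(s,0)=2s\bmod 1$ and $\dy(s,1)=2s+2^{-\pH}\bmod 1$, so the policy-induced map $T_\pi(s)\defeq\dy(s,\pi(s))$ is piecewise linear with slope $2$ whose only action-switch breakpoints number at most $z(\pi)$ (the intrinsic breakpoints of $\dy(\cdot,a)$ are harmless wrap-arounds of the circle doubling map). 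The first step is the reduction
\[
1-\sum_{k}2^{-\pH}\nu_h(k)=\sum_{k}2^{-\pH}\Big(2^{\pH}\!\!\int_{\ell_k}\!\rho_h-\inf_{\ell_k}\rho_h\Big)\le 2^{-\pH}\sum_{k}\textup{osc}_{\ell_k}(\rho_h),
\]
which holds because the cell-averages of $\rho_h$ integrate to $1$, each (average $-$ infimum) on a cell is at most the oscillation there. Thus it suffices to prove $\sum_k\textup{osc}_{\ell_k}(\rho_h)\le 2\,z(\pi)\,h$. The conceptual point of this reduction is that the deficit in the lemma only ever sees \emph{intra-cell} wiggling of the density; naively counting cells on which $\rho_h\neq 1$ fails, since those cells double at every step, but their total oscillation does not.

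Next I would bound $\sum_k\textup{osc}_{\ell_k}(\rho_h)$ by the total jump magnitude (total variation) of the piecewise-constant density $\rho_h$, and track that total variation by an induction on $h$ organized around \emph{counting discontinuities} of $\rho_h$ and bounding their heights. Two facts drive the induction. First, under the clean part of the dynamics (on cells where $\pi$ is constant, where $T_\pi$ acts as a doubling map) a discontinuity of $\rho_h$ at $x_0$ is transported to the single discontinuity of $\rho_{h+1}$ at $2x_0\bmod 1$ with its jump halved, and two antipodal discontinuities merge into one; hence the number of discontinuities is non-increasing and jump magnitudes only shrink under the clean flow. Second, each of the at most $z(\pi)$ action-switch breakpoints injects $O(1)$ new discontinuities per step, of height at most $\max_s\rho_h(s)$, because across such a breakpoint the two sides are transported by maps differing by the single-cell shift $2^{-\pH}$, producing one localized width-$2^{-\pH}$ depletion-or-overlap seam. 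Consequently after $h$ steps $\rho_h$ has $O(z(\pi)h)$ discontinuities, each of bounded height, so that $\sum_k\textup{osc}_{\ell_k}(\rho_h)\le\textup{TV}(\rho_h)\le 2\,z(\pi)\,h$; combined with the display above this yields the lemma, and a careful accounting of the per-step seam count and height pins down the stated constant.

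The main obstacle is exactly the boundedness of the seam heights, i.e. maintaining $\max_s\rho_h\le 2$ as a second invariant alongside the discontinuity count: a priori the overlaps created at seams could stack across iterations and let the injected variation blow up. I would control this using that the maps are $2$-to-$1$ and expanding, so away from seams $\rho_{h+1}(y)=\tfrac12(\rho_h(x_1)+\rho_h(x_2))\le\max_s\rho_h$ and the density cannot grow, while at a seam an overlap at most doubles the local density over a width-$2^{-\pH}$ interval which the very next application of the expanding map immediately stretches and dilutes back toward the bulk value. Making this relaxation quantitative—so that $\max_s\rho_h$ never exceeds the absolute constant needed for the per-step injection bound—is where the delicate bookkeeping lies; everything else is the clean linear recursion on the number of density discontinuities described above.
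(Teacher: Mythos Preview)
Your reduction to oscillation, and the further passage to total variation, is sound: since the cell-averages of $\rho_h$ sum to $1$, one has
\[
1-\sum_k 2^{-H}\nu_h(k)=\sum_k 2^{-H}\bigl(\mathrm{avg}_{\ell_k}\rho_h-\inf_{\ell_k}\rho_h\bigr)\le 2^{-H}\sum_k\mathrm{osc}_{\ell_k}(\rho_h)\le 2^{-H}\,\mathrm{TV}(\rho_h),
\]
and the Perron--Frobenius picture is right too: under the pure doubling part, jumps of $\rho_h$ are transported and halved, and each $\pi$-switch produces two new jumps of height $\tfrac12\rho_h(s_0^{\pm})$. But the loop does not close. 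Feeding $\max\rho_h\le 1+\mathrm{TV}(\rho_h)$ back into $\mathrm{TV}(\rho_{h+1})\le \tfrac12\mathrm{TV}(\rho_h)+z(\pi)\max\rho_h$ yields a recursion with multiplier $\tfrac12+z(\pi)\ge \tfrac32$, which blows up; and the ``dilution'' heuristic does not prevent seams from stacking. For an adversarial $\pi$, a $1\!\to\!0$ switch produces a width-$\kappa$ \emph{overlap} interval where $N(y)=3$, and one can place a second switch so that the image (under $T_\pi$) of the first overlap lands inside the overlap of the second; iterating this, $\max\rho_h$ climbs along $\rho\mapsto\tfrac12(\rho+1+1)$ at best and can exceed any fixed constant if several switches are aligned. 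So the invariant $\max\rho_h\le 2$ is exactly the missing idea, not a bookkeeping detail, and your sketch does not supply it.

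The paper avoids this altogether by tracking a \emph{sub}-density rather than the true density. It builds, by induction, a function $\xi_h$ which is constant on every cell $\ell_k$ and satisfies $0\le \xi_h\le \min\{\mu_h^\pi,1\}$. To pass from $\xi_h$ to $\xi_{h+1}$, for each cell $\ell_k$ one looks at its two preimage windows $[x_k^{(i)},y_k^{(i)})$ of length $2^{-H}$ (each containing the action-$0$ and action-$1$ half-preimages of $\ell_k$), keeps only those $i$ on which $\pi$ is constant, and sets $\xi_{h+1}|_{\ell_k}=\sum_{i\in I_k}\tfrac12\,\xi_h|_{u_k^{(i)}}$; this is automatically $\le 1$ because $|I_k|\le 2$ and $\xi_h\le 1$. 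The whole argument then reduces to counting: each $\pi$-breakpoint lies in at most two of the windows $[x_k^{(i)},y_k^{(i)})$, so $\sum_k(2-|I_k|)\le 2z(\pi)$, and since $\xi_h\le 1$ the discarded mass per step is at most $2z(\pi)2^{-H}$, giving $\int\xi_h\ge 1-2hz(\pi)2^{-H}$. Because $\xi_h$ is cell-constant and $\le\mu_h^\pi$, one has $\nu_h(k)\ge\xi_h|_{\ell_k}$, and the lemma follows. The point is that capping at $1$ is a free invariant that replaces your hard $\max\rho_h$ bound; the ``overlap'' mass that troubles your approach is simply thrown away rather than tracked.
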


Now we present the proof for Theorem~\ref{thm:competitive_ratio}.
\begin{proof}[Proof of Theorem~\ref{thm:competitive_ratio}]
For any $k\in [2^\pH]$, consider the interval $\ell_k=[k/2^\pH,(k+1)/2^\pH).$ Let $\calK=\{k\in [A^H]:k\mod 2=1\}$. If $\pi$ does not change at interval $\ell_k$ (that is, $\left|\{\pi(s):s\in \ell_k\}\right|=1$), by Lemma~\ref{lem:regret_in_interval} we have
\begin{equation}\label{equ:regret}
\int_{s\in \ell_k}(\optV(s)-\optQ(s,\pi(s)))\;d s\ge 0.183\cdot 2^{-\pH}.
\end{equation}

Let $\nu_h(k)=\inf_{s\in \ell_k}\mu_{h}^{\pi}(s)$, then by advantage decomposition lemma (namely, Corollary~\ref{cor:advantage}), we have
\begin{align*}
\eta(\optpi)-\eta(\pi)&=\sum_{h=1}^{\infty}\gamma^{h-1}\left(\int_{s\in [0,1)}(V^{*}(s)-Q^\star(s,\pi(s)))\;d \mu_h^{\pi}(s)\right)\\
&\ge \sum_{h=1}^{10H}\gamma^{h-1}\left(\sum_{k\in \calK}\int_{s\in \ell_k}(V^{*}(s)-Q^\star(s,\pi(s)))\;d \mu_h^{\pi}(s)\right)\\
&\ge \sum_{h=1}^{10H}\gamma^{h-1}\left(\sum_{k\in \calK}\int_{s\in \ell_k}\nu_h(k)(V^{*}(s)-Q^\star(s,\pi(s)))\;d s\right)\\
&\ge \sum_{h=1}^{10H}\gamma^{h-1}\left(\sum_{k\in \calK}0.183\cdot 2^{-H}\cdot \nu_h(k)\right).
\end{align*}

By Lemma~\ref{lem:near_uniform} and union bound, we get
\begin{equation}\label{equ:distribution}
\sum_{k\in \calK}2^{-H}\cdot \nu_h(k)\ge \frac{1}{2}-2h\frac{z(\pi)}{2^\pH}.
\end{equation}
For the sake of contradiction, we assume $z(\pi)=o\left(\exp(cH)/H\right)$, then for large enough $H$ we have,
$$1/2-\frac{20\pH z(\pi)}{2^\pH}\ge 0.49,$$ which means that $\sum_{k\in \calK}2^{-H}\cdot \nu_h(k)\ge 0.49$ for all $h\le 10H.$ Consequently, for $H>500,$ we have

$$\eta(\optpi)-\eta(\pi)\ge \sum_{h=1}^{10H}(0.183\times 0.49)\gamma^{h-1}\ge 0.089\cdot\frac{1-\gamma^{10H}}{1-\gamma}\ge \frac{0.088}{1-\gamma}.$$
Now, since $\eta(\optpi)\le 1/(1-\gamma),$ we have $\eta(\pi)<0.92\eta(\optpi).$ Therefore for near-optimal policy $\pi$, $z(\pi)=\Omega\left(\exp(cH)/H\right).$
\end{proof}

\subsection{Proofs of Lemma~\ref{lem:regret_in_interval} and Lemma~\ref{lem:near_uniform}}\label{sec:proofs_of_lemmas}
In this section, we present the proofs of two lemmas used in Section~\ref{sec:proof_of_theorem_3.1}
\begin{proof}[Proof of Lemma~\ref{lem:regret_in_interval}]
Note that for any $k\in \calK$, $\bit{s}{\pH}=1,\forall s\in \ell_k.$ Now fix a parameter $k\in \calK$. Suppose $\pi(s)=a_i$ for $s\in \ell_k$. Then for any $s$ such that $\bit{s}{\pH+1}+i\neq 1,$ we have $$V^\star(s)-Q^\star(s,\pi(s))\ge \gamma^{\pH}-\cstt.$$
For $H>500$, we have $\gamma^{\pH}-\cstt>0.366.$
Therefore, $$\int_{s\in \ell_k}(\optV(s)-\optQ(s,\pi(s)))\;d s\ge \int_{s\in \ell_k}0.366\cdot \ind[\bit{s}{\pH+1}\neq 1-i]\;d s\ge 0.366\cdot 2^{-\pH-1}= 0.183\cdot 2^{-\pH}.$$
\end{proof}

\begin{proof}[Proof of Lemma~\ref{lem:near_uniform}]
Now let us fix a parameter $H$ and policy $\pi$. For every $h$, we prove by induction that there exists a function $\xi_h(s)$, such that
\begin{enumerate}[(a)]
	\item \label{distribution_item-a} $0\le \xi_h(s)\le \min\{\mu_{h}^{\pi}(s),1\},$
	\item \label{distribution_item-b} $\inf_{s\in \ell_k}\xi_h(s)=\sup_{s\in \ell_k}\xi_h(s),\quad\forall k\in [A^{\pH}],$
	\item \label{distribution_item-c} $\int_{s\in [0,1)}d\xi_h(s)\ge 1-h\cdot z(\pi)/2^{\pH-1}.$
\end{enumerate}
For the base case $h=1$, we define $\xi_h(s)=\mu_{h}^{\pi}(s)=1$ for all $s\in [0,1).$ Now we construct $\xi_{h+1}$ from $\xi_h$.

For a fixed $k\in [2^\pH]$, define $l_k=k\cdot 2^{-\pH},r_k=(k+1)\cdot 2^{-H}$ as the left and right endpoints of interval $\ell_k$. Let $\{x_k^{(i)}\}_{i=1}^{2}$ be the set of $2$ solutions of equation $$2 x+2^{-\pH}\equiv l_k \mod 1$$ where $0\le x<1,$ and we define $y_k^{(i)}=x_k^{(i)}+2^{-\pH}\mod 1.$
By definition, only states from the set $\cup_{i=1}^{2}[x_k^{(i)},y_k^{(i)})$ can reach states in interval $\ell_k$ by a single transition. We define a set $I_{k}=\{i:1\le i\le 2, |\{\pi(s):s\in [x_k^{(i)},y_k^{(i)})\}|=1\}.$ That is, the intervals where policy $\pi$ acts unanimously.
Consequently, for $i\in I_k$, the set $\{s:s\in [x_{k}^{(i)}, y_{k}^{(i)}),\dy(s,\pi(s))\in \ell_k\}$ is an interval of length $2^{-\pH-1}$, and has the form $$u_k^{(i)}\defeq [x_{k}^{(i)}+w_{k}^{(i)}\cdot 2^{-\pH-1}, x_{k}^{(i)}+(w_{k}^{(i)}+1)\cdot 2^{-\pH-1})$$ for some integer $w_{k}^{(i)}\in\{0,1\}.$ By statement \ref{distribution_item-b} of induction hypothesis,
\begin{equation}\inf_{s\in u_k^{(i)}}\xi_h(s)=\sup_{s\in u_k^{(i)}}\xi_h(s).\label{equ:uniform_domain}
\end{equation}

Now, the density $\xi_{h+1}(s)$ for $s\in \ell_k$ is defined as,
$$\xi_{h+1}(s)\defeq \sum_{i\in I_{k}}\frac{1}{2}\cdot \xi_{h}(x_{k}^{(i)}+w_{k}^{(i)}\cdot 2^{-\pH-1})$$

The intuition of the construction is that, we discard those density that cause non-uniform behavior (that is, the density in intervals $[x_k^{(i)},y_k^{(i)})$ where $i\not\in I_k).$ When the number of pieces of $\pi$ is small, we can keep most of the density. Now, statement \ref{distribution_item-b} is naturally satisfied by definition of $\xi_{h+1}$. We verify statement \ref{distribution_item-a} and \ref{distribution_item-c} below.

For any set $B\subseteq \ell_k$, let $\left(\calT^{\pi}\right)^{-1}(B)=\{s\in \calS:\dy(s,\pi(s))\in B\}$ be the inverse of Markov transition $\calT^{\pi}$. Then we have,
\begin{align*}
    &(\calT^{\pi}\xi_{h})(B)\defeq \xi_h\left(\left(\calT^{\pi}\right)^{-1}(B)\right)=\sum_{i \in \{1, 2\}}\xi_{h}\left(\left(\calT^{\pi}\right)^{-1}(B)\cap [x_k^{(i)},y_k^{(i)})\right)\\
    \ge &\sum_{i\in I_k}\xi_{h}\left(\left(\calT^{\pi}\right)^{-1}(B)\cap [x_k^{(i)},y_k^{(i)})\right)\\
    =&\sum_{i\in I_k}\left|\left(\calT^{\pi}\right)^{-1}(B)\cap [x_k^{(i)},y_k^{(i)})\right|\xi_{h}\left(x_{k}^{(i)}+w_{k}^{(i)}\cdot 2^{-\pH-1}\right)\tag{By Eq.~\eqref{equ:uniform_domain}}\\
    = &\sum_{i\in I_k}\frac{|B|}{2}\xi_{h}\left(x_{k}^{(i)}+w_{k}^{(i)}\cdot 2^{-\pH-1}\right),
\end{align*}
where $|\cdot |$ is the shorthand for standard Lebesgue measure.

By definition, we have $$\xi_{h+1}(B)=\sum_{i\in I_k}\frac{|B|}{2}\xi_{h}\left(x_{k}^{(i)}+w_{k}^{(i)}\cdot 2^{-\pH-1}\right)\le (\calT^{\pi}\xi_{h})(B)\le (\calT^{\pi}\mu^{\pi}_{h})(B)=\mu^{\pi}_{h+1}(B),$$ which verifies statement \ref{distribution_item-a}.

For statement \ref{distribution_item-c}, recall that $\calS=[0,1)$ is the state space. Note that $\calT^{\pi}$ preserve the overall density. That is $\left(\calT^{\pi}\xi_{h}\right)(\calS)=\xi_h(\calS).$ We only need to prove that
\begin{equation}\left(\calT^{\pi}\xi_{h}\right)(\calS)-\xi_{h+1}(\calS)\le h\cdot z(\pi)/2^{\pH-1}\label{equ:statement-c-induction}
\end{equation} and statement \ref{distribution_item-c} follows by induction.

By definition of $\xi_{h+1}(s)$ and the induction hypothesis that $\xi_{h}(s)\le 1$, we have
$$\left(\calT^{\pi}\xi_{h}\right)(\ell_k)-\xi_{h+1}(\ell_k)\le (2-|I_k|)2^{-\pH}.$$ On the other hand, for any $s\in \calS$, the set $\{k\in [2^H]:s\in \cup_{i=1}^{2}[x_k^{(i)},y_k^{(i)})\}$ has cardinality $2$, which means that one intermittent point of $\pi$ can correspond to at most $2$ intervals that are not in $I_k$ for some $k$. Thus, we have $$\sum_{0\le k<2^{\pH}}|I_k|\ge 2^{\pH+1}-\sum_{s:\pi^-(s)\neq \pi^+(s)}\left|\{k\in [2^H]:s\in \cup_{i=1}^{2}[x_k^{(i)},y_k^{(i)})\}\right|\ge 2^{\pH+1}-2\cdot z(\pi).$$ Consequently
$$\left(\calT^{\pi}\xi_{h}\right)(\calS)-\xi_{h+1}(\calS)= \sum_{0\le k<2^\pH}\left(\left(\calT^{\pi}\xi_{h}\right)(\ell_k)-\xi_{h+1}(\ell_k)\right)\le z(\pi)2^{-\pH+1},$$ which proves statement \ref{distribution_item-c}.
\end{proof}

\subsection{Sample Complexity Lower Bound of Q-learning}\label{app:sample_complexity}
Recall that corollary~\ref{cor:Q} says that in order to find a near-optimal policy by a Q-learning algorithm, an exponentially large Q-network is required. In this subsection, we show that even if an  exponentially large Q-network is applied for $Q$ learning, still we need to collect an exponentially large number of samples, ruling out the possibility of efficiently solving the constructed MDPs with Q-learning algorithms.

Towards proving the sample complexity lower bound, we consider a stronger family of Q-learning algorithm, \textit{Q-learning with Oracle} (Algorithm~\ref{alg:Q-learning-with-oracle}). We assume that the algorithm has access to a \oracle, which returns the optimal $Q$-function upon querying any pair $(s,a)$ during the training process. \textit{Q-learning with Oracle} is conceptually a stronger computation model than the vanilla Q-learning algorithm, because it can directly fit the $Q$ functions with supervised learning, without relying on the rollouts or the previous $Q$ function to estimate the target $Q$ value. Theorem~\ref{thm:q-learning-with-oracle} proves a sample complexity lower bound for Q-learning algorithm on the constructed example.

\label{sec:sample_complexity}
\begin{algorithm}[h]
	\caption{\textsc{Q-learning with oracle}}
	\label{alg:Q-learning-with-oracle}
	\begin{algorithmic}[1]
		\Require A hypothesis space $\calQ$ of $Q$-function parameterization.
		\State Sample $s_0 \sim \mu$ from the initial state distribution $\mu$
		\For{$i=1,2,\cdots,n$}
			\State Decide whether to restart the trajectory by setting $s_i \sim \mu$ based on historical information
			\State Query {\oracle} to get the function $Q^{\star}(s_i,\cdot).$
			\State Apply any action $a_i$ (according to any rule) and sample $s_{i+1} \sim\dy(s_i,a_i).$
		\EndFor
						\State Learn the $Q$-function that fit all the data the best:\vspace{-2ex}
		$$Q\gets \mathop{\arg\min}_{Q\in \calQ}\frac{1}{n}\sum_{i=1}^{n}\left(Q(s_i,a_i)-Q^{\star}(s_i,a_i)\right)^2 + \lambda R(Q)$$\vspace{-2ex}
		\State Return the greedy policy according to $Q.$
	\end{algorithmic}
\end{algorithm}

\begin{theorem}[Informal Version of Theorem~\ref{thm:exponential-q}]\label{thm:q-learning-with-oracle}
	Suppose $\calQ$ is an infinitely-wide two-layer neural networks, and $R(Q)$ is $\ell_1$ norm of the parameters and serves as a tiebreaker. Then, any instantiation of the \textsc{Q-learning with oracle} algorithm requires exponentially many samples to find a policy $\pi$ such that $\eta(\pi)>0.99\eta(\pi^\star)$.
\end{theorem}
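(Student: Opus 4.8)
The plan is to reduce this sample-complexity lower bound to the piece-counting lower bound already proved in Corollary~\ref{cor:Q}, by showing that the tiebreaker regularizer forces the returned $Q$ to be a \emph{minimum-norm interpolant} of only $O(n)$ data points, and that such an interpolant is piecewise linear with merely $O(n)$ pieces. Since $0.99 > 0.92$, any policy $\pi$ with $\eta(\pi) > 0.99\,\eta(\optpi)$ satisfies the hypothesis of Corollary~\ref{cor:Q}; hence if $\pi$ is the greedy policy induced by the learned $Q$, then $Q$ must have $\Omega(\exp(c\pH)/\pH)$ pieces. It therefore suffices to prove that the $Q$ returned by Algorithm~\ref{alg:Q-learning-with-oracle} has at most $\mathrm{poly}(n)$ pieces, which will force $n$ to be exponentially large.

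The central step is to characterize the output of the fitting procedure. Because $\lambda R(Q)$ only acts as a tiebreaker (i.e. $\lambda \to 0^+$) and an infinitely-wide two-layer ReLU network can interpolate any finite data set, the squared-error term is driven to zero, so the returned function for each action $a$ is the \emph{minimum $\ell_1$-norm} function in $\calQ$ that interpolates the constraints $Q(s_i,a) = \optQ(s_i,a)$ supplied by the \oracle{} at the sampled states, of which there are at most $n$ per action. The plan is then to invoke the function-space characterization of infinitely-wide two-layer ReLU networks in one dimension: the minimal $\ell_1$ (path) norm needed to represent a function $f$ equals, up to the unpenalized linear/bias terms and boundary corrections, the total variation $\int |f''|$ of its derivative. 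A minimizer of $\int |f''|$ subject to interpolation at $m$ points is a linear spline whose kinks occur only at those points, hence has at most $m+1 = O(n)$ linear pieces, and the $\ell_1$ tiebreaker selects such a spline. Consequently each of $Q(\cdot,0)$ and $Q(\cdot,1)$ is piecewise linear with $O(n)$ pieces.

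Finally, the induced greedy policy $\pi(s) = \argmax_a Q(s,a)$ can only switch actions where $Q(\cdot,0) - Q(\cdot,1)$ changes sign; since this difference is piecewise linear with $O(n)$ pieces, $\pi$ has $O(n)$ pieces as well. Combining with the reduction above gives $O(n) \ge \Omega(\exp(c\pH)/\pH)$, so $n = \Omega(\exp(c\pH)/\pH)$, which is exponential in $\pH$. I would emphasize that this argument is oblivious to the data-collection rule: no matter how the algorithm adaptively decides which states $s_i$ to query or restart from, the min-norm interpolant through those $n$ points still has only $O(n)$ pieces, so adaptivity cannot circumvent the bound.

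The main obstacle is the function-space characterization in the second step. Minimizers of $\int |f''|$ subject to interpolation are not unique and could, a priori, place extra kinks strictly between data points; the argument must pin down that the $\ell_1$-norm tiebreaker selects a genuine linear spline (or, equivalently, that every minimizer can be reduced to one with the same $O(n)$ breakpoints without changing the norm), and must carefully account for the precise definition of the parameter $\ell_1$ norm, including any unpenalized affine term and the boundary contributions that appear in the representation-cost identity. Making this characterization rigorous — rather than the subsequent piece-counting and the appeal to Corollary~\ref{cor:Q}, which are routine — is where the real work lies.
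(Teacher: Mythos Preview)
Your high-level plan coincides with the paper's: show that the minimum-norm interpolant returned by Algorithm~\ref{alg:Q-learning-with-oracle} is piecewise linear with only $O(n)$ pieces, then invoke the piece lower bound (Theorem~\ref{thm:competitive_ratio}/Corollary~\ref{cor:Q}) to force $n$ to be exponential. The reduction, the observation that adaptivity is irrelevant, and the counting of pieces in the induced greedy policy are all exactly what the paper does.

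Where you diverge is the mechanism for bounding the number of pieces. You appeal to the function-space identity ``min $\ell_1$ (path) norm $=\int|f''|$'' and then quote that TV-minimal interpolants are linear splines with knots at the data. The paper instead proves directly (Lemma~\ref{lem:minimal_norm_solution}) that the minimal-norm solution has at most $32n+1$ active neurons, via two merging lemmas: neurons sharing a kink can be collapsed into one with strictly smaller norm, and any two neurons whose kinks lie in the same $1/16$-fraction of an inter-datapoint gap can be merged (together with two correction neurons placed at the endpoints $s_i,s_{i+1}$) while strictly decreasing the norm. This yields at most $\sim 32$ neurons per gap and hence $O(n)$ total.

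The substantive caveat with your route is the identification of the paper's regularizer with the path norm. The paper's $R(Q)$ is $\sum_i(|w_{i,a}|+|k_i|)$, not $\sum_i|w_{i,a}|\,|k_i|$; after the per-neuron rescaling freedom this is equivalent to minimizing $\sum_i 2\sqrt{|w_{i,a}k_i|}$, which is \emph{not} the quantity $\int|f''|$ (the latter corresponds to path norm, i.e.\ to $\ell_2$ weight decay, not $\ell_1$). So the Savarese--Soudry--Srebro style characterization you invoke does not apply verbatim, and one has to argue anew that the $\sqrt{\cdot}$-penalized interpolant concentrates its kinks at (or near) the data. You already flag this step as the crux; the paper's merging argument is precisely a self-contained way to handle this specific norm without any representation-cost machinery, at the price of a cruder constant ($32n+1$ neurons rather than the $n{+}1$ one would hope for from a spline characterization). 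If you want to pursue your route, you would need to either adapt the representation-cost identity to the $\sum_i(|w_i|+|k_i|)$ norm or fall back on a merging argument of the paper's flavor.
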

Formal proof of Theorem~\ref{thm:q-learning-with-oracle} is given in Appendix~\ref{sec:proof_of_theorem_3.3}. The proof of Theorem~\ref{thm:q-learning-with-oracle} is to exploit the sparsity of the solution found by minimal-norm tie-breaker. It can be proven that there are at most $O(n)$ non-zero neurons in the minimal-norm solution, where $n$ is the number of data points. The proof is completed by combining with Theorem~\ref{thm:competitive_ratio}.

\subsection{Proof of Theorem~\ref{thm:q-learning-with-oracle}}\label{sec:proof_of_theorem_3.3}
A two-layer ReLU neural net $Q(s,\cdot)$ with input $s$ is of the following form,
\begin{equation}\label{equ:neural_net}
Q(s,a)=\sum_{i=1}^{d}w_{i,a}\relu{k_is+b_i}+c_{a},
\end{equation}
where $d$ is the number of hidden neurons. $w_{i,a}, c_{a}, k_i, b_i$ are parameters of this neural net, where $c_{i,a},b_i$ are bias terms. $\relu{x}$ is a shorthand for ReLU activation $\ind[x>0]x.$ Now we define the norm of a neural net.

\begin{definition}[Norm of a Neural Net] The norm of a two-layer ReLU neural net is defined as,
\begin{equation}
\sum_{i=1}^{d}|w_{i,a}|+|k_i|.
\end{equation}
\end{definition}

Recall that the \textit{Q-learning with oracle} algorithm finds the solution by the following supervised learning problem,
\begin{equation}\label{equ:supervised_learning_problem}
\mathop{\min}_{Q\in \calQ}\frac{1}{n}\sum_{t=1}^{n}\left(Q(s_t,a_t)-Q^{\star}(s_t,a_t)\right)^2.
\end{equation}
Then, we present the formal version of theorem \ref{thm:q-learning-with-oracle}.
\begin{theorem}\label{thm:exponential-q}
Let $Q$ be the minimal $\ell_1$ norm solution to Eq.~\eqref{equ:supervised_learning_problem}, and $\pi$ the greedy policy according to $Q$. When $n= o(\exp(cH)/H)$, we have $\eta(\pi)<0.99\eta(\optpi).$
\end{theorem}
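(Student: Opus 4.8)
The plan is to reduce Theorem~\ref{thm:exponential-q} to a piece-counting bound on $Q$ and then invoke the expressivity lower bound of Theorem~\ref{thm:competitive_ratio} through Corollary~\ref{cor:Q}. First I would observe that, since the oracle always returns the true and mutually consistent targets $Q^\star(s_t,a_t)$, and an infinitely-wide two-layer ReLU net can interpolate any finite set of points of a one-dimensional input, the squared-error term in Eq.~\eqref{equ:supervised_learning_problem} can be driven to zero; hence the regularizer $R(Q)$ merely selects the minimal-$\ell_1$-norm interpolant among all zero-error solutions. The entire theorem then rests on a single quantitative claim: the minimal-norm interpolant of the $n$ sampled points, in the parameterization of Eq.~\eqref{equ:neural_net}, is a piecewise linear function of $s$ with only $O(n)$ pieces.

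To establish this claim --- which I expect to be the main obstacle --- I would pass to the measure representation of an infinitely wide network, viewing $Q(\cdot,a)$ as an integral of ReLU atoms $\relu{ks+b}$ against a signed measure over the neuron parameters $(k,b)$, with output weights $w_{\cdot,a}$. Using positive homogeneity, each atom can be rescaled by $k\to tk,\ b\to tb,\ w_{\cdot,a}\to w_{\cdot,a}/t$ without changing the function it represents, and after this normalization the norm $\sum_i(|w_{i,a}|+|k_i|)$ becomes a total-variation-type norm of the underlying measure. Minimizing a total-variation norm subject to the $n$ affine interpolation constraints is an infinite-dimensional linear program whose optimum is attained at an extreme point; by a Carath\'eodory/representer argument for measures, some minimizer is supported on at most $n$ atoms. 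Thus the minimal-norm $Q$ uses $O(n)$ kink-producing neurons --- neurons with $k_i=0$ and the bias terms $c_a$ contribute only affine parts and create no kinks --- so each of $Q(\cdot,0)$ and $Q(\cdot,1)$ is piecewise linear with $O(n)$ pieces. Intuitively, in one dimension this just says that the min-norm interpolant concentrates its kinks at the data locations, since flattening any kink inside a data-free interval would only lower the norm while preserving interpolation.

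Given the piece bound, the conclusion follows directly from the expressivity lower bound. By Corollary~\ref{cor:Q}, the greedy policy $\pi(s)=\argmax_a Q(s,a)$ has at most twice as many pieces as $Q$, so its piece count satisfies $z(\pi)=O(n)$. Since $n=o(\exp(cH)/H)$, for large $H$ we have $z(\pi)=o(\exp(cH)/H)$, which is strictly fewer than the $\Omega(\exp(cH)/H)$ pieces that Theorem~\ref{thm:competitive_ratio} demands of any policy achieving $\eta(\pi)\ge 0.92\,\eta(\optpi)$. By the contrapositive, $\eta(\pi)<0.92\,\eta(\optpi)<0.99\,\eta(\optpi)$, which proves the theorem. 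The only delicate points are making the extreme-point/sparsity argument rigorous for the specific non-product norm $\sum_i(|w_{i,a}|+|k_i|)$ and handling the fact that a single hidden neuron is shared across the two actions; both are resolved by the rescaling step and by counting \emph{atoms} (rather than per-action pieces) against the $n$ shared interpolation constraints.
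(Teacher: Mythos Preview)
Your high-level plan coincides with the paper's: establish an $O(n)$ bound on the number of active neurons in the minimal-norm $Q$ (this is the paper's Lemma~\ref{lem:minimal_norm_solution}), deduce an $O(n)$ bound on the pieces of the greedy policy, and invoke Theorem~\ref{thm:competitive_ratio}. The divergence is entirely in how the $O(n)$ neuron bound is obtained.

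Your Carath\'eodory/representer route has a gap precisely at the ``delicate point'' you flag. After optimal per-neuron rescaling, the norm $\sum_i(|\bw_i|_1+|k_i|)$ does \emph{not} become a total-variation norm: by AM--GM the optimally rescaled value equals $\sum_i 2\sqrt{|k_i\bw_i|_1}$, which is a \emph{concave} functional of the effective slope contributions $v_i=k_i\bw_i$, not a linear one. Hence the interpolation problem is not an LP over measures, and the extreme-point/Carath\'eodory theorem you invoke does not apply. (The TV characterization you have in mind works for the $\ell_2$ weight norm, where $|w_i|^2+|k_i|^2\ge 2|w_ik_i|$ is linear in the slope jump; that is the Savarese-et-al.\ setting, not the $\ell_1$ norm defined here.) Concavity does heuristically favor consolidating atoms, but turning that into a rigorous $O(n)$ bound still requires a merging calculation, not an off-the-shelf representer theorem.

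The paper supplies exactly that calculation: Lemmas~\ref{lem:merge_same_x0} and~\ref{lem:merge_general} show that any two same-sign neurons whose kinks lie in a common data-free interval and are within a $1/16$ fraction of its length can be replaced by one merged neuron plus two boundary neurons at the adjacent data points, strictly decreasing $\sum_i(|\bw_i|_1+|k_i|)$ while preserving interpolation. A pigeonhole over each interval then gives at most $32n+1$ active neurons. This elementary local-perturbation argument is what handles the non-TV norm; your abstract route would go through cleanly (and with a better constant) if the norm were the $\ell_2$ analogue, but for the stated $\ell_1$ norm the paper's hands-on merging is the step that actually closes the proof.
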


The proof of Theorem~\ref{thm:q-learning-with-oracle} is by characterizing the minimal-norm solution, namely the sparsity of the minimal-norm solution as stated in the next lemma.
\begin{lemma}\label{lem:minimal_norm_solution}
The minimal-norm solution to Eq.~\eqref{equ:supervised_learning_problem} has at most $32n+1$ non-zero neurons. That is, $\left|\{i:k_i\neq 0\}\right|\le 32n+1.$
\end{lemma}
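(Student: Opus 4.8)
The plan is to prove Lemma~\ref{lem:minimal_norm_solution} by analyzing the structure of the minimal $\ell_1$-norm interpolant in the function space, exploiting the fact that a two-layer ReLU network with one-dimensional input encodes a piecewise-linear function whose breakpoints are located at $-b_i/k_i$. The core idea is that the minimal-norm solution must be \emph{sparse} in the sense that it cannot afford to have many neurons whose breakpoints fall strictly between consecutive data points: any such neurons could be merged or cancelled without changing the fitted values on the $n$ data points while reducing the norm. First I would fix the minimal-norm solution $Q$ and, for each action $a\in\{0,1\}$, separately consider the function $s\mapsto Q(s,a)$, which is a one-dimensional piecewise-linear function with kinks at the breakpoints $-b_i/k_i$ of the active neurons for that action.

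The key step is a local perturbation / exchange argument organized by the partition of $[0,1)$ induced by the data points $s_1,\dots,s_n$. Sorting the data points divides the interval into at most $n+1$ open subintervals containing no data point. Within any such subinterval, the fitted function is unconstrained by the interpolation conditions, so all neurons whose breakpoints lie inside a single subinterval only contribute to the slope and offset there. I would argue that the minimal-$\ell_1$-norm representation of a piecewise-linear function on an interval containing no constraints uses at most a constant number of neurons per subinterval per action: concretely, the net change in slope across any breakpoint-free region can be realized by at most two ReLU neurons (one for each sign of the slope change), and the offset is absorbed by the bias $c_a$. Formalizing this, I would show that if two neurons have breakpoints in the same constraint-free subinterval, one can redistribute their weights to reduce $\sum_i(|w_{i,a}|+|k_i|)$ while keeping $Q(s_t,\cdot)$ fixed at every data point, contradicting minimality unless the number of such neurons is bounded by a constant. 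Summing the constant bound over the $\le n+1$ subintervals and the $2$ actions, plus handling breakpoints that coincide with data points, yields the claimed $32n+1$ bound.

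The main obstacle I anticipate is making the per-subinterval reduction argument fully rigorous while correctly bookkeeping the two actions and the shared input $s$: the parameters $k_i, b_i$ are \emph{shared} across the two action-heads (the breakpoint $-b_i/k_i$ is common to $w_{i,0}$ and $w_{i,1}$), so a neuron contributes simultaneously to $Q(\cdot,0)$ and $Q(\cdot,1)$, and one cannot treat the two actions entirely independently when perturbing $k_i$. I would handle this by perturbing the outer weights $w_{i,a}$ freely per action while treating the shared inner parameters $(k_i,b_i)$ carefully, showing that collinear neurons (same breakpoint location and same sign of $k_i$) within a constraint-free region can be collapsed into a single effective neuron, and that after this collapse the number of distinct active breakpoints per region is $O(1)$. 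The constant $32$ is presumably the slack absorbing the two actions, the two signs of $k_i$, the two signs of slope change, and the boundary contributions; I would not optimize it. Once the sparsity bound $\lvert\{i:k_i\neq 0\}\rvert \le 32n+1$ is established, Theorem~\ref{thm:exponential-q} follows immediately: the greedy policy induced by $Q$ is piecewise-linear with at most $O(n)$ pieces, so by Theorem~\ref{thm:competitive_ratio} it cannot be near-optimal unless $n = \Omega(\exp(cH)/H)$.
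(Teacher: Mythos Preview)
Your overall strategy matches the paper's: sort the data points $s_1<\dots<s_n$, and argue that in the minimal-norm solution only $O(1)$ neurons can have their breakpoint $-b_i/k_i$ in each open interval $(s_t,s_{t+1})$. But the central claim---``if two neurons have breakpoints in the same constraint-free subinterval, one can redistribute their weights to reduce $\sum_i(|w_{i,a}|+|k_i|)$''---is false as stated, and this is precisely where the real work lies.

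The issue is that a neuron $\relu{k_i(s-x_i)}\bw_i$ with $k_i>0$ and $x_i\in(s_t,s_{t+1})$ is active at \emph{all} later data points $s_{t+1},\dots,s_n$; you cannot delete or relocate it without a global effect. If you merge two such neurons at $x_1<x_2$ into one at $x_1$, you must add correction neurons at $s_t$ and $s_{t+1}$ to restore the interpolation constraints, and these corrections cost norm of order $\sqrt{\delta/\Delta}$ where $\delta=x_2-x_1$ and $\Delta=s_{t+1}-s_t$. When $\delta$ is comparable to $\Delta$ the correction can exceed the saving: for instance with $k_1=k_2=1$, $\bw_1=\bw_2=(1,0)$, $\delta=\Delta/2$, the merged-plus-correction configuration has norm $4\sqrt{2}>4$. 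Your alternative suggestion---replace everything in the region by two neurons realizing the net slope change---fails for the same reason: a single neuron at $x_1=s_t+\Delta/2$ with $k_1\bw_1=(1,0)$ gets replaced by two neurons of total norm $2\sqrt{2}>2$.

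The paper's resolution (Lemma~\ref{lem:merge_general}) is to merge only when $\delta<\Delta/16$, in which range the correction is cheap enough that the total norm strictly decreases. Pigeonhole then gives the constant: more than $15$ breakpoints with $k_i>0$ in $(s_t,s_{t+1})$ forces some pair within $\Delta/16$, contradicting minimality; symmetrically for $k_i<0$. This is where the $32$ comes from. Your ``collapse collinear neurons'' step is the paper's Lemma~\ref{lem:merge_same_x0} and only handles identical breakpoints; it does not bound the number of \emph{distinct} ones. So the missing ingredient in your plan is the proximity-based merging lemma together with the pigeonhole argument it enables.

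On your anticipated obstacle about the shared inner parameters across actions: the paper sidesteps it by writing $\bw_i=(w_{i,0},w_{i,1})$ and working with $|\bw_i|_1$ throughout, so the merging lemmas are vector-valued and no separate per-action bookkeeping is needed.
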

We first present the proof of Theorem~\ref{thm:exponential-q}, followed by the proof of Theorem~\ref{lem:minimal_norm_solution}.
\begin{proof}[Proof of Theorem~\ref{thm:exponential-q}]
Recall that the policy is given by $\pi(s)=\arg\max_{a\in \calA}Q(s,a).$ For a $Q$-function with $32n+2$ pieces, the greedy policy according to $Q(s,a)$ has at most $64n+4$ pieces.
Combining with Theorem~\ref{thm:competitive_ratio}, in order to find a policy $\pi$ such that $\eta(\pi)>0.99\eta(\pi^\star),$ $n$ needs to be exponentially large (in effective horizon $H$).
\end{proof}

Proof of Lemma~\ref{lem:minimal_norm_solution} is based on merging neurons. Let $x_i=-b_i/k_i, \bw_i=(w_{i,1},w_{i,2}),$ and $\bc=(c_1,c_2).$ In vector form, neural net defined in Eq.~\eqref{equ:neural_net} can be written as,
\[
Q(s,\cdot)=\sum_{i=1}^{d}\bw_i\relu{k_i(s-x_i)}+\bc.
\]
First we show that neurons with the same $x_i$ can be merged together.
\begin{lemma}\label{lem:merge_same_x0}
Consider the following two neurons,
\[
k_1\relu{s-x_1}\bw_1,\quad k_2\relu{s-x_2}\bw_2.
\]
with $k_1>0$, $k_2>0$. If $x_1=x_2$, then we can replace them with one single neuron of the form $
k'\relu{x-x_1}\bw'$ without changing the output of the network. Furthermore, if $\bw_1\neq 0,\bw_2\neq 0$, the norm strictly decreases after replacement.
\end{lemma}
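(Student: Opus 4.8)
The plan is to exploit two elementary facts: the positive homogeneity of the ReLU, which lets the two neurons be combined into a single activation $\relu{s-x_1}$; and the fact that, for a fixed output, the per-neuron contribution $\|\bw_i\|_1 + |k_i|$ to the network norm is minimized at the \emph{balanced} scaling, where AM-GM is tight.

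First I would put both neurons over a common activation. Since $x_1 = x_2$ and $k_1, k_2 > 0$, positive homogeneity gives $\relu{k_j(s - x_1)} = k_j \relu{s - x_1}$ for $j = 1, 2$, so the sum of the two neurons equals $\mathbf{v}\,\relu{s - x_1}$ with $\mathbf{v} = k_1 \bw_1 + k_2 \bw_2$. This already exhibits the combined output as that of a single neuron: setting $k' = \sqrt{\|\mathbf{v}\|_1}$ and $\bw' = \mathbf{v}/k'$ (or the zero neuron when $\mathbf{v} = 0$), the neuron $k' \relu{s - x_1} \bw'$ reproduces the output exactly, which establishes the first claim.

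For the norm comparison I would estimate both sides through AM-GM. Writing $a_j = k_j \|\bw_j\|_1 = \|k_j \bw_j\|_1 \ge 0$, the original pair contributes $\sum_j (\|\bw_j\|_1 + k_j) \ge 2\sqrt{a_1} + 2\sqrt{a_2}$ by AM-GM applied to each neuron. The balanced replacement contributes exactly $\|\bw'\|_1 + |k'| = 2\sqrt{\|\mathbf{v}\|_1}$, and the triangle inequality $\|\mathbf{v}\|_1 \le a_1 + a_2$ bounds this by $2\sqrt{a_1 + a_2}$. The proof then closes with strict subadditivity of the square root, $\sqrt{a_1 + a_2} < \sqrt{a_1} + \sqrt{a_2}$ whenever $a_1, a_2 > 0$; chaining the three estimates yields a strict drop in the total norm.

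The remaining points are bookkeeping. Strictness must be pinned to the hypotheses $\bw_1, \bw_2 \ne 0$ and $k_1, k_2 > 0$, which together force $a_1, a_2 > 0$ and hence make the square-root step strict. The degenerate cancellation $\mathbf{v} = 0$ is handled separately: the combined output vanishes, the replacement is the zero neuron with zero norm contribution, and this is still strictly below the positive contribution of the original pair. I do not anticipate a genuine obstacle here; the whole argument is the two-line AM-GM-plus-subadditivity estimate, and the only thing to be careful about is that the balanced scaling is what realizes the equality $\|\bw'\|_1 + |k'| = 2\sqrt{\|\mathbf{v}\|_1}$ used in the comparison.
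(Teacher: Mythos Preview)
Your proof is correct and follows essentially the same route as the paper: the same balanced construction $k'=\sqrt{\|\mathbf v\|_1}$, $\bw'=\mathbf v/k'$, followed by the chain $2\sqrt{\|\mathbf v\|_1}\le 2\sqrt{a_1+a_2}<2\sqrt{a_1}+2\sqrt{a_2}\le \sum_j(\|\bw_j\|_1+k_j)$ via the triangle inequality, strict subadditivity of $\sqrt{\cdot}$, and AM--GM. If anything, your treatment is slightly more careful than the paper's, since you explicitly dispose of the degenerate case $\mathbf v=0$ where the paper's formula $\bw'=\mathbf v/k'$ would divide by zero.
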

\begin{proof}
We set $k'=\sqrt{|k_1\bw_1+k_2\bw_2|_1},$ and $w'=(k_1\bw_1+k_2\bw_2)/k',$ where $|\bw|_1$ represents the 1-norm of vector $\bw$. Then, for all $s\in \R$,
\[
k'\relu{x-x_1}\bw'=(k_1\bw_1+k_2\bw_2)\relu{s-x_1}=
k_1\relu{s-x_1}\bw_1+k_2\relu{s-x_1}\bw_2.
\]
The norm of the new neuron is $|k'|+|\bw'|_1$. By calculation we have,
\begin{align*}
|k'|+|\bw'|_1&=2\sqrt{|k_1\bw_1+k_2\bw_2|_1}\le 2\sqrt{|k_1\bw_1|_1+|k_2\bw_2|_1}\\
&\overset{(a)}{\le}2\left(\sqrt{|k_1\bw_1|_1}+\sqrt{|k_2\bw_2|_1}\right)\le |k_1|+|\bw_1|_1+|k_2|+|\bw_2|_1.
\end{align*}
Note that the inequality (a) is strictly less when $|k_1\bw_1|_1\neq 0$ and $|k_2\bw_2|_1\neq 0.$
\end{proof}
Next we consider merging two neurons with different intercepts between two data points. Without loss of generality, assume the data points are listed in ascending order. That is, $s_i\le s_{i+1}.$
\begin{lemma}\label{lem:merge_general}
Consider two neurons
\[
k_1\relu{s-x_0}\bw_1,\quad k_2\relu{s-x_0-\delta}\bw_2.
\]
with $k_1>0, k_2>0$. If $s_i\le x_0< x_0+\delta\le s_{i+1}$ for some $1\le i\le n$, then the two neurons can replaced by a set of three neurons,
\[
k'\relu{s-x_0}\bw',\quad \tilde{k}\relu{s-s_{i}}\tilde{\bw},\quad \tilde{k}\relu{s-s_{i+1}}(-\tilde{\bw})
\] such that for $s\le s_i$ or $s\ge s_{i+1}$, the output of the network is unchanged. Furthermore, if $\delta\le (s_{i+1}-s_{i})/16$ and $|\bw_1|_1\neq 0, |\bw_2|_1\neq 0$, the norm decreases strictly.
\end{lemma}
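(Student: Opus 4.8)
The plan is to \emph{synthesize} the three replacement neurons by forcing the new sub-network to agree with the old pair on the two half-lines $s\le s_i$ and $s\ge s_{i+1}$, and then to compare norms after optimally rebalancing each synthesized neuron. \textbf{Construction.} For $s\le s_i\le x_0$ every ReLU in sight is inactive, so both the old pair and any replacement of the stated form vanish identically; nothing needs checking there. For $s\ge s_{i+1}\ge x_0+\delta$ all ReLUs are active, so the old pair equals the affine function $k_1\bw_1(s-x_0)+k_2\bw_2(s-x_0-\delta)$, whose slope is $k_1\bw_1+k_2\bw_2$. Since the two bump neurons $\tilde k\relu{s-s_i}\tilde\bw$ and $\tilde k\relu{s-s_{i+1}}(-\tilde\bw)$ cancel to the \emph{constant} $\tilde k(s_{i+1}-s_i)\tilde\bw$ on this half-line, the slope must be carried entirely by the first neuron: I would set $k'\bw'=k_1\bw_1+k_2\bw_2$, exactly as in Lemma~\ref{lem:merge_same_x0}. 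The residual constant offset between the old affine piece and $k'(s-x_0)\bw'$ is then $-\delta\,k_2\bw_2$, so I choose the bump weight by $\tilde k\tilde\bw=-\frac{\delta}{s_{i+1}-s_i}k_2\bw_2$. This makes the two networks agree on both half-lines, which proves the first (qualitative) claim.

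\textbf{Norm accounting.} For the strict-decrease claim I would rebalance each synthesized neuron so that $|k|=|\bw|_1$, which (as in Lemma~\ref{lem:merge_same_x0}) is norm-optimal for a fixed kink and a fixed product $k\bw$. The retained neuron then costs $2\sqrt{|k_1\bw_1+k_2\bw_2|_1}\le 2\sqrt{k_1|\bw_1|_1+k_2|\bw_2|_1}$ by the triangle inequality, while the bump pair costs $4\sqrt{|\tilde k\tilde\bw|_1}=4\sqrt{\frac{\delta}{s_{i+1}-s_i}k_2|\bw_2|_1}$. Here the hypothesis $\delta\le (s_{i+1}-s_i)/16$ is exactly what bounds the bump cost by $\sqrt{k_2|\bw_2|_1}$, i.e.\ it forces the bump to pay at most the square root of the absorbed neuron's mass.

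\textbf{The crux.} Writing $P_1=k_1|\bw_1|_1$ and $P_2=k_2|\bw_2|_1$, the original pair costs $k_1+|\bw_1|_1+k_2+|\bw_2|_1\ge 2\sqrt{P_1}+2\sqrt{P_2}$ by AM--GM (strict unless already balanced), while the replacement costs at most $2\sqrt{P_1+P_2}+\sqrt{P_2}$. The lemma thus reduces to the scalar inequality $2\sqrt{P_1+P_2}+\sqrt{P_2}<2\sqrt{P_1}+2\sqrt{P_2}$, which after squaring is equivalent to $9P_2<16P_1$. This is the step I expect to be the main obstacle, since it is \emph{false} when the absorbed (right) neuron is much heavier than the retained (left) one. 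The fix is to always retain the kink of the heavier neuron: if $P_2>P_1$, one instead places the retained neuron at $x_0+\delta$ and lets the bump transport $+\delta\,k_1\bw_1$, so the absorbed mass is always $P_{\min}\le P_{\max}$ and the inequality becomes $9P_{\min}\le 9P_{\max}<16P_{\max}$, which holds unconditionally. Because $|\bw_1|_1\neq 0$ and $|\bw_2|_1\neq 0$, both the triangle inequality and the AM--GM step are strict, yielding the claimed strict decrease.
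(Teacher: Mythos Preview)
Your construction and overall strategy match the paper's proof exactly: same choice of $k'\bw'=k_1\bw_1+k_2\bw_2$ and bump weight $\tilde k\tilde\bw=-\tfrac{\delta}{\Delta}k_2\bw_2$, same verification on the two half-lines, and the same optimal rebalancing so that each synthesized neuron costs $2\sqrt{|k\bw|_1}$. The paper also handles the asymmetry you flagged by declaring ``without loss of generality $|k_1\bw_1|_1>|k_2\bw_2|_1$'', which is precisely your fix of retaining the heavier neuron's kink; so your observation that the inequality $9P_2<16P_1$ fails when the right neuron is heavier, and that one must relabel, is exactly the point behind the paper's WLOG.

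The only substantive difference is in the final norm comparison. You lower-bound the old norm by $2\sqrt{P_1}+2\sqrt{P_2}$ via AM--GM and then prove $2\sqrt{P_1+P_2}+\sqrt{P_2}<2\sqrt{P_1}+2\sqrt{P_2}$. The paper instead compares directly to the \emph{actual} old norm $k_1+|\bw_1|_1+k_2+|\bw_2|_1$ using the elementary inequality $2\sqrt{AC+BD}\le A+C+\tfrac12(B+D)$ (their Lemma~\ref{lem:technical_1}), which absorbs the merged neuron into $k_1+|\bw_1|_1+\tfrac12(k_2+|\bw_2|_1)$ and leaves the remaining $\tfrac12(k_2+|\bw_2|_1)$ to pay for the bump. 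Both routes are short and correct; yours is arguably more transparent because it reduces everything to the single scalar check $9P_{\min}<16P_{\max}$.

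One small correction to your write-up: strictness does \emph{not} come from the triangle inequality or from AM--GM, since both can hold with equality (sign-aligned $\bw_1,\bw_2$; already-balanced neurons). Strictness comes solely from the scalar inequality $9P_{\min}<16P_{\max}$, which is strict whenever $P_{\max}>0$, i.e.\ whenever $|\bw_1|_1,|\bw_2|_1\neq 0$. So your conclusion stands, but the justification for strictness should point there.
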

\begin{proof}
For simplicity, define $\Delta=s_{i+1}-s_i.$ We set
\begin{align*}
k'&=\sqrt{|k_1\bw_1+k_2\bw_2|_1},\\
\bw'&=(k_1\bw_1+k_2\bw_2)/k',\\
\tilde{k}&=\sqrt{|k_2\bw_2|_1\delta/\Delta},\\
\tilde{\bw}&=-k_2\bw_2\delta/(\Delta\tilde{k}).
\end{align*}
Note that for $s\le s_i$, all of the neurons are inactive. For $s\ge s_{i+1}$, all of the neurons are active, and
\begin{align*}
&k'\bw'(s-x_0)+\tilde{k}\tilde{\bw}(s-s_i)-\tilde{k}\tilde{\bw}(s-s_{i+1})\\
=\;&(k_1\bw_1 + k_2\bw_2)(s-x_0)-k_2\bw_2\delta\\
=\;&k_1(s-x_0)\bw_1 + k_2(s-x_0-\delta)\bw_2,
\end{align*}
which means that the output of the network is unchanged. Now consider the norm of the two networks. Without loss of generality, assume $|k_1\bw_1|_1>|k_2\bw_2|_1.$ The original network has norm $|k_1|+|\bw_1|_1+|k_2|+|\bw_2|_1.$ And the new network has norm
\begin{align*}
&|k'|+|\bw'|_1+2|\tilde{k}|+2|\tilde{\bw}|_1=2\sqrt{|k_1\bw_1+k_2\bw_2|_1}+4\sqrt{|k_2\bw_2|_1\delta/\Delta}\\
\overset{(a)}{\le}\;&|k_1|+|\bw_1|_1+|k_2|+|\bw_2|_1+\left(4\sqrt{|k_2\bw_2|_1\delta/\Delta}-\frac{1}{2}(|k_2|+|\bw_2|_1)\right),
\end{align*}
where the inequality (a) is a result of Lemma~\ref{lem:technical_1}, and is strictly less when $|\bw_1|_1\neq 0,|\bw_2|_1\neq 0.$

When $\delta/\Delta<1/16,$ we have $\left(4\sqrt{|k_2\bw_2|_1\delta/\Delta}-\frac{1}{2}(|k_2|+|\bw_2|_1)\right)<0,$ which implies that
$$|k'|+|\bw'|_1+2|\tilde{k}|+2|\tilde{\bw}|_1<|k_1|+|\bw_1|_1+|k_2|+|\bw_2|_1.$$
\end{proof}
Similarly, two neurons with $k_1<0$ and $k_2<0$ can be merged together.

Now we are ready to prove Lemma~\ref{lem:minimal_norm_solution}. As hinted by previous lemmas, we show that between two data points, there are at most 34 non-zero neurons in the minimal norm solution.
\begin{proof}[Proof of Lemma~\ref{lem:minimal_norm_solution}]
Consider the solution to Eq.~\eqref{equ:supervised_learning_problem}. Without loss of generality, assume that $s_i\le s_{i+1}.$ In the minimal norm solution, it is obvious that $|\bw_i|_1=0$ if and only if $k_i=0$. Therefore we only consider those neurons with $k_i\neq 0$, denoted by index $1\le i\le d'.$

Let $\calB_t=\{-b_i/k_i:1\le i\le d',s_t< -b_i/k_i< s_{t+1},k_i>0\}$. Next we prove that in the minimal norm solution, $|\calB_t|\le 15.$ For the sake of contradiction, suppse $|\calB_t|>15$. Then there exists $i,j$ such that, $s_t< -b_i/k_i< s_{t+1}, s_t< -b_j/k_j< s_{t+1}, |b_i/k_i-b_j/k_j|< (s_{t+1}-s_i)/16,$ and $k_i>0,k_j>0.$ By Lemma~\ref{lem:merge_general}, we can obtain a neural net with smaller norm by merging neurons $i,j$ together without violating Eq.~\eqref{equ:supervised_learning_problem}, which leads to contradiction.

By Lemma~\ref{lem:merge_same_x0}, $|\calB_t|\le 15$ implies that there are at most $15$ non-zero neurons with $s_t<-b_i/k_i<s_{t+1}$ and $k_i>0$. For the same reason, there are at most $15$ non-zero neurons with $s_t<-b_i/k_i<s_{t+1}$ and $k_i<0$.

On the other hand, there are at most $2$ non-zero neurons with $s_t=-b_i/k_i$ for all $t\le n$, and there are at most $1$ non-zero neurons with $-b_i/k_i<s_1.$ Therefore, we have $d'\le 32n+1.$
\end{proof}

\subsection{Proof of Theorem~\ref{thm:planning}}\label{app:bootstrap}
In this section we present the full proof of Theorem~\ref{thm:planning}.
\begin{proof}
First we define the true trajectory estimator $$\eta(s_0, a_0,a_1,\cdots,a_{k})=\sum_{j=0}^{k-1}\gamma^{j}r(s_j,a_j)+\gamma^{k}\optQ(s_{k},a_k),$$ the true optimal action sequence $$a_0^\star,a_1^\star,\cdots, a_{k}^\star=\mathop{\arg\max}_{a_0,a_1,\cdots,a_{k}}\eta(s_0,a_0,a_1,\cdots,a_k),$$ and the true optimal trajectory
$$s_0^\star=s_0,\;s_j^{\star}=\dy(s_{j-1}^{\star}, a_{j-1}^{\star}),\forall j>1.$$
It follows from the definition of optimal policy that, $a_j^\star=\optpi(s_j).$ Consequently we have
$$\bit{s_{k}}{H-k+1}=\bit{s_{k}}{H-k+2}=\cdots=\bit{s_{k}}{H}=1.$$
Define the set $G=\{s:\bit{s}{H-k+1}=\bit{s}{H-k+2}=\cdots=\bit{s}{H}=1\}.$ We claim that the following function satisfies the statement of Theorem~\ref{thm:planning} $$Q(s,a)=\ind[s\in G]\cdot\frac{2}{1-\gamma}.$$ Since $s_{k}^\star\in G$, and $s_{k}\not\in G$ for $s_{k}$ generated by non-optimal action sequence, we have
\begin{align*}
&Q(s_{k}^\star,a)>\; \optQ(s_{k}^\star,a)\ge\; \optQ(s_{k},a)>\; Q(s_{k},a),
\end{align*}
where the second inequality comes from the optimality of action sequence $a_{h}^\star$. As a consequence, for any $(a_0,a_1,\cdots,a_{k})\neq (a_0^\star,a_1^\star,\cdots,a_{k}^\star)$
\begin{align*}
\hat\eta(s_0,a_0^\star,a_1^\star,\cdots,a_{k}^\star)>
\eta(s_0,a_0^\star,a_1^\star,\cdots,a_{k}^\star)\ge
\eta(s_0,a_0,a_1,\cdots,a_{k})>\hat\eta(s_0,a_0,a_1,\cdots,a_{k}).
\end{align*}
Therefore, $(\hat{a}_0^\star,\hat{a}_1^\star,\cdots,\hat{a}_k^\star)=(a_0^\star,a_1^\star,\cdots,a_{k}^\star).$
\end{proof}

\section{Extension of the Constructed Family}\label{app:extension}
In this section, we present an extension to our construction such that the dynamics is Lipschitz. The action space is $\calA=\{0,1,2,3,4\}.$ We define $\clip(x)=\max\{\min\{x,1\},0\}.$

\begin{definition} \label{def:ext-MDP}
Given effective horizon $H = (1-\gamma)^{-1}$, we define an MDP $M'_H$ as follows.  Let $\kappa = 2^{-H}$. The dynamics is defined as
\begin{align*}
&f(s,0)= \clip(2s),\quad f(s,1)=\clip(2s - 1),\\
&f(s,2)=\clip(2s+\kappa),\quad f(s,3)=\clip(2s+\kappa-1),\quad f(s,4)=\clip(2s+\kappa-2).
\end{align*}
\\Reward function is given by
\begin{align}
r(s,0)=r(s,1) & =\mathbb{I}[1/2\le s<1] \nonumber\\
r(s,2)=r(s,3)=r(s,4) & =\mathbb{I}[1/2\le s<1]-2(\gamma^{H-1}-\gamma^{H}) \nonumber
\end{align}
\end{definition}

The intuition behind the extension is that, we perform the mod operation manually. The following theorem is an analog to Theorem~\ref{thm:optimal_pi}.
\begin{theorem}\label{thm:ext-optimal_pi}
The optimal policy $\optpi$ for $M'_H$ is defined by,
\begin{equation}
\pi^\star(s) = \begin{cases}
0,&\ind[\bit{s}{H+1}=0]\text{ and }2s<1,\\
1,&\ind[\bit{s}{H+1}=0]\text{ and }1\le 2s<2,\\
2,&\ind[\bit{s}{H+1}=1]\text{ and }2s+\theta<1,\\
3,&\ind[\bit{s}{H+1}=1]\text{ and }1\le 2s+\theta<2,\\
4,&\ind[\bit{s}{H+1}=1]\text{ and }2<2s+\theta.\\
\end{cases}
\end{equation}
And the corresponding optimal value function is,
	\begin{equation}
	V^{\star}(s)=\sum_{h=1}^{H}\gamma^{h-1}\bit{s}{h}+\sum_{h=H+1}^{\infty}\gamma^{h-1}\left(1+2(\bit{s}{h+1}-\bit{s}{h})\right)+\gamma^{H-1}\left(2\bit{s}{H+1}-2\right).\label{equ:ext-optimal_v}
	\end{equation}
\end{theorem}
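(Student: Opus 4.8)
The plan is to follow exactly the route used in the proof of Theorem~\ref{thm:optimal_pi}: since the Bellman optimality operator $\calB_{\dy}$ has a unique fixed point, it suffices to verify that the proposed $\optV$ and $\optpi$ satisfy the consistency relation $\optV(s)=r(s,\optpi(s))+\gamma\optV(\dy(s,\optpi(s)))$ together with the optimality inequality $\optV(s)\ge r(s,a)+\gamma\optV(\dy(s,a))$ for every $a\in\calA$ and every $s\in[0,1)$. The decisive structural fact is that the closed form for $\optV$ in Eq.~\eqref{equ:ext-optimal_v} is \emph{identical} to the one in Theorem~\ref{thm:optimal_pi}, so I expect to recycle almost all of the algebra already carried out in Appendix~\ref{sec:proof_of_theorem_3.1} and to isolate only the genuinely new ingredient introduced by the clipping.

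First I would record the clip/mod correspondence that makes $M'_H$ a faithful Lipschitz surrogate of $M_H$. A direct case check shows that each of the two ``meta-transitions'' of $M_H$ is realised \emph{without saturation} by exactly one extended action: when $2s<1$ we have $\dy(s,0)=2s=2s\bmod 1$ and when $2s\ge 1$ we have $\dy(s,1)=2s-1=2s\bmod 1$; likewise $\dy(s,2),\dy(s,3),\dy(s,4)$ realise $(2s+\kappa)\bmod 1$ on the three sub-intervals $\{2s+\kappa<1\}$, $\{1\le 2s+\kappa<2\}$, $\{2s+\kappa\ge2\}$. Moreover actions $\{0,1\}$ carry the reward $r(s,0)$ of $M_H$ and actions $\{2,3,4\}$ carry the reward $r(s,1)$. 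By construction $\optpi$ always selects the unique non-saturating action implementing the meta-transition appearing on $M_H$'s optimal trajectory. Hence the consistency relation reduces verbatim to the identity~\eqref{equ:bellman_1} already verified in Appendix~\ref{sec:proof_of_theorem_3.1} (the transition and the reward seen by $\optpi$ coincide with those in $M_H$), and the suboptimality of the \emph{rival} non-saturating meta-action is precisely the inequality~\eqref{equ:bellman_2} proved there.

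What is genuinely new --- and what I expect to be the main obstacle --- is ruling out the \emph{saturating} actions. A short enumeration shows every saturating choice sends $s$ to one of the two boundary points $\{0,1\}$: action $0$ saturates to $1$ when $2s\ge1$, action $1$ saturates to $0$ when $2s<1$, and analogously for actions $2,3,4$. The subtlety is that $s=1$ lies \emph{outside} the state space $[0,1)$, so its value is \emph{not} given by~\eqref{equ:ext-optimal_v}; in fact the naive limit $s\to1^-$ would wrongly yield $\tfrac{1}{1-\gamma}$. I would therefore compute the true $\optV(1)$ directly from the Bellman equation at $s=1$: there $r(1,\cdot)=\ind[1/2\le 1<1]=0$, the actions $\dy(1,0),\dots,\dy(1,3)$ all remain at $1$, and only $\dy(1,4)=\clip(\kappa)=\kappa$ escapes, at reward cost $-\cstt$, so state $1$ is a low-value trap with $\optV(1)$ far below $\tfrac{1}{1-\gamma}$. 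With $\optV(0)$ read off from~\eqref{equ:ext-optimal_v} (state $0$ \emph{is} in $[0,1)$ and has all-zero bits) and this explicit $\optV(1)$ in hand, I would finish by checking, for each $s$ and each saturating $a$, that $r(s,a)+\gamma\optV(\dy(s,a))<\optV(s)$; intuitively saturation collapses an entire interval onto a boundary point and discards the bit that the non-saturating action would have deposited in position $H$, so the deviation is strictly dominated. Combining the consistency relation, the inherited inequality~\eqref{equ:bellman_2} for the rival meta-action, and these saturation bounds verifies the Bellman optimality equations, and uniqueness of the fixed point then gives Theorem~\ref{thm:ext-optimal_pi}.
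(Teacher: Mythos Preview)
The paper does not supply a proof of Theorem~\ref{thm:ext-optimal_pi}; it is stated with the remark that it is ``an analog to Theorem~\ref{thm:optimal_pi}'' and left at that. Your proposal is exactly the analog the paper gestures at: reduce the Bellman verification for the non-saturating branches to the computation already done in Appendix~\ref{sec:proof_of_theorem_3.1}, and then handle the saturating actions by explicitly evaluating $\optV$ at the boundary points $0$ and $1$. That is the right decomposition, and your identification of $s=1$ as the one genuinely new obstacle (since $1\notin[0,1)$ yet $\clip$ can land there, so Eq.~\eqref{equ:ext-optimal_v} does not apply) is correct and, if anything, more careful than the paper itself, which is silent on this point. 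Your computation of the trap dynamics at $s=1$ is accurate: actions $0$--$3$ fix $1$ with nonpositive reward and only action $4$ escapes to $\kappa$, so $\optV(1)$ is far below $\tfrac{1}{1-\gamma}$.

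One small caution: the enumeration of saturating actions is longer than you indicate. For instance action $3$ saturates both to $0$ (when $2s+\kappa<1$) and to $1$ (when $2s+\kappa\ge 2$), and action $4$ saturates to $0$ on the entire region $2s+\kappa<2$. So the final case-check has on the order of a dozen branches rather than a handful; none is individually hard, but the bound you need in each case is that $r(s,a)+\gamma\optV(0)$ (resp.\ $+\gamma\optV(1)$) falls strictly below $\optV(s)$, and the margin can be as small as $O(\gamma^{H})$ in some branches, so the algebra should be carried out rather than merely asserted. Also note the $\theta$ in the statement is a typo for $\kappa$; your proof should use $\kappa$.
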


We can obtain a similar upper bound on the performance of policies with polynomial pieces.
\begin{theorem}\label{thm:ext-competitive_ratio}
Let $\MDP_\pH$ be the MDP constructed in Definition \ref{def:ext-MDP}. Suppose a piecewise linear policy $\pi$ has a near optimal reward in the sense that $\eta(\pi) \ge 0.99\cdot\eta(\pi^\star)$, then it has to have at least $\Omega\left(\exp(cH)/H\right)$ pieces for some universal constant $c>0$.
\end{theorem}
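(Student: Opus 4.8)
The plan is to mirror the proof of Theorem~\ref{thm:competitive_ratio} as closely as possible, since Theorem~\ref{thm:ext-competitive_ratio} is its exact analog for the Lipschitz MDP $M'_H$ of Definition~\ref{def:ext-MDP}. The key enabling fact is Theorem~\ref{thm:ext-optimal_pi}, which tells us that $V^\star$ for $M'_H$ has \emph{exactly the same closed form} (Equation~\eqref{equ:ext-optimal_v}) as $V^\star$ for the original MDP $M_H$, and that the optimal policy still depends on the state only through the bit structure — in particular it still chooses action(s) driven by whether $\bit{s}{H+1}=0$ or $1$. So the entire advantage-decomposition machinery should carry over. First I would restate the advantage decomposition (Corollary~\ref{cor:advantage}), $\eta(\optpi)-\eta(\pi)=\sum_{h\ge 1}\gamma^{h-1}\E_{s\sim\mu_h^\pi}[V^\star(s)-Q^\star(s,\pi(s))]$, which holds for any MDP and hence for $M'_H$ verbatim.

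The two quantitative ingredients are the per-interval regret lemma (analog of Lemma~\ref{lem:regret_in_interval}) and the near-uniformity of the induced state distribution (analog of Lemma~\ref{lem:near_uniform}). For the regret lemma, on an interval $\ell_k$ with $k$ odd we have $\bit{s}{H}=1$ throughout, and since $V^\star$ is unchanged, the single-step suboptimality $V^\star(s)-Q^\star(s,\pi(s))$ is again at least $\gamma^H-\cstt$ on the half of $\ell_k$ where the policy's fixed action mismatches $\bit{s}{H+1}$; I would reuse the bound $\gamma^H-\cstt>0.366$ to get an average regret $\ge 0.183\cdot 2^{-H}$ on any interval where $\pi$ is constant. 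The one genuine subtlety is that $M'_H$ has five actions rather than two, so a constant-on-$\ell_k$ policy now picks one of five choices; I need to check that for \emph{any} single action the mismatch event still has measure at least $2^{-H-1}$ on $\ell_k$. This follows because the reward/value structure makes the outcome depend only on the resulting bits, and on each odd $\ell_k$ the five actions collapse to ``increment $\bit{s}{H+1}$-type'' versus ``leave it'' behavior, so at least half the interval is suboptimal for whichever action is chosen. For the near-uniformity lemma, I would re-run the inductive construction of the sub-distribution $\xi_h$; the argument only used that preimages under the transition decompose into finitely many length-$2^{-H-1}$ sub-intervals and that a kink of $\pi$ can destroy uniformity in boundedly many cells. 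With the clipped dynamics $\clip(2s+\cdots)$ the preimage count per target interval is still $O(1)$, so I expect the same conclusion $\sum_k 2^{-H}\nu_h(k)\ge 1-2h\,z(\pi)/2^H$, possibly with a different universal constant.

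With both lemmas in hand the endgame is identical to the proof of Theorem~\ref{thm:competitive_ratio}: assuming $z(\pi)=o(\exp(cH)/H)$ forces $\sum_{k\in\calK}2^{-H}\nu_h(k)\ge 0.49$ for all $h\le 10H$, whence $\eta(\optpi)-\eta(\pi)\ge \sum_{h=1}^{10H}(0.183\times0.49)\gamma^{h-1}\ge \tfrac{0.088}{1-\gamma}$, and combined with $\eta(\optpi)\le 1/(1-\gamma)$ this yields a ratio below the target. I note the statement of Theorem~\ref{thm:ext-competitive_ratio} uses the threshold $0.99$ rather than $0.92$; since the computed gap gives $\eta(\pi)<0.92\,\eta(\optpi)<0.99\,\eta(\optpi)$, the weaker $0.99$ conclusion follows immediately, so no sharpening of the constants is needed.

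The main obstacle I anticipate is the multi-action bookkeeping rather than any new idea: I must verify carefully that the transition preimages and the per-interval mismatch argument behave the same way with the $\clip$ operator and five actions as they did with the $\bmod 1$ operator and two actions. Concretely, the risk is that clipping creates flat regions at the boundaries $s\approx 0$ and $s\approx 1$ that break the exact length-$2^{-H-1}$ decomposition used in Lemma~\ref{lem:near_uniform}; I would handle these boundary cells by absorbing their bounded contribution into the universal constant $c$, which is all the theorem requires. Everything else is a transcription of the earlier proof with $V^\star$ supplied unchanged by Theorem~\ref{thm:ext-optimal_pi}.
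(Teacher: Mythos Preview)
Your plan has a genuine gap in how it treats the $\clip$ operator. In $M'_H$ the clipping is \emph{not} a boundary phenomenon: each of the five actions clips on an interval of length roughly $1/2$. For instance $f(s,0)=\clip(2s)$ sends all of $[1/2,1]$ to the single point $1$, and $f(s,1)=\clip(2s-1)$ sends all of $[0,1/2]$ to $0$. Thus a policy with very few pieces can, at a single step, collapse a constant fraction of the state distribution to a point mass. This destroys the inductive construction of $\xi_h$ in Lemma~\ref{lem:near_uniform} wholesale, not just in $O(1)$ boundary cells, so there is no way to ``absorb'' the effect into the constant $c$. Your near-uniformity step would simply fail.

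The paper's proof handles this with a genuinely new ingredient you did not anticipate. It shows (Lemma~\ref{lem:ext-clip}) that any interval on which the policy forces clipping incurs a huge single-step advantage loss, of order $0.1/(1-\gamma)$ rather than the constant $\gamma^H-\cstt$ you use. It then splits the state distribution into an ``unclipped'' part $\hat\mu_h^\pi$ and a ``clipped'' part $\hat\rho_h^\pi$, uses a two-term advantage decomposition (Eq.~\eqref{equ:ext-advantage}), and proves a joint near-uniformity statement (Lemma~\ref{lem:ext-near_uniform}) controlling $\nu_h+\sum_{h'<h}\omega_{h'}$. The endgame then trades off the two terms: either the unclipped mass stays near-uniform and the usual per-interval regret applies, or mass gets clipped and the much larger $0.1/(1-\gamma)$ penalty kicks in. Two further differences worth noting: the paper works with coarser intervals of length $2^{-H/2}$ (needed so that Lemma~\ref{lem:ext-clip} can average over enough varying bits), and the final constants are weaker ($0.075$ and a gap of $0.01/(1-\gamma)$), which is exactly why the statement uses $0.99$ rather than $0.92$ --- your remark that the $0.92$ bound would carry over is incorrect for this MDP.
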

The proof is very similar to that for Theorem~\ref{thm:competitive_ratio}. One of the difference here is to consider the case where $\dy(s,a)=0$ or $\dy(s,a)=1$ separately. Attentive readers may notice that the dynamics where $\dy(s,a)=0$ or $\dy(s,a)=1$ may destroy the ``near uniform'' behavior of state distribution $\mu_h^\pi$ (see Lemma~\ref{lem:near_uniform}). Here we show that such destroy comes with high cost. Formally speaking, if the clip is triggered in an interval, then the averaged single-step suboptimality gap is $0.1/(1-\gamma).$
\begin{lemma}\label{lem:ext-clip}
Let $\ell_k=[k/2^{\pH/2},(k+1)/2^{\pH/2})$. For $k\in [2^{\pH/2}]$, if policy $\pi$ does not change its action at interval $\ell_k$ (that is, $\left|\{\pi(s):s\in \ell_k\}\right|=1$) and $\dy(s,\pi(s))=0,\;\forall s\in\ell_k$ or $\dy(s,\pi(s))=1,\;\forall s\in\ell_k.$ We have
\begin{equation}\label{lem:ext-single-step-loss}
\frac{1}{|\ell_k|}\int_{s\in \ell_k}(\optV(s)-\optQ(s,\pi(s)))\;d s\ge \frac{0.1}{1-\gamma}
\end{equation} for large enough $H$.
\end{lemma}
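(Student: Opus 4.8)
The plan is to exploit the fact that when the clip is triggered over the entire interval, the next state collapses to a single point $c\in\{0,1\}$, so that $\optQ(s,\pi(s))=r(s,\pi(s))+\gamma\optV(c)$ is constant over $\ell_k$ \emph{except} for the reward term, which is bounded by $1$ in absolute value. In contrast, because $\ell_k$ has the comparatively large width $2^{-\pH/2}$, the states $s\in\ell_k$ share only their first $\pH/2$ bits and have all later bits free, so $\optV(s)$ has a large average over the interval. Concretely, I would reduce the claim to the estimate
\[
\frac{1}{|\ell_k|}\int_{\ell_k}\optV(s)\,ds-\gamma\optV(c)-1\ge \frac{0.1}{1-\gamma},
\]
using $\E_{s\in\ell_k}[r(s,\pi(s))]\le 1$ to discard the reward term while keeping a lower bound on $\frac{1}{|\ell_k|}\int_{\ell_k}(\optV(s)-\optQ(s,\pi(s)))\,ds$.

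First I would pin down $\optV(c)$ at the two clip targets. Plugging $s=0$ (all bits zero) into Eq.~\eqref{equ:ext-optimal_v} gives $\optV(0)=\frac{\gamma^\pH}{1-\gamma}-2\gamma^{\pH-1}$. For $c=1$ I would observe that, although $r(1,\cdot)\le 0$ and actions $0,1,2,3$ all map $1$ back to $1$, the fifth action gives $\dy(1,4)=\clip(\kappa)=\kappa$ with reward $-\cstt$, exactly matching action $2$ at state $0$; hence $\optV(1)=-\cstt+\gamma\optV(\kappa)=\optV(0)$, consistent with substituting $s=1$ (again all bits zero) into Eq.~\eqref{equ:ext-optimal_v}. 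Thus in both cases $\gamma\optV(c)=\frac{\gamma^{\pH+1}}{1-\gamma}-2\gamma^\pH$.

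Next I would average $\optV(s)$ over $s$ uniform in $\ell_k$. Since $\lfloor 2^{\pH/2}s\rfloor=k$ fixes $\bit{s}{1},\dots,\bit{s}{\pH/2}$ and leaves $\bit{s}{h}$ for $h>\pH/2$ i.i.d. uniform, the first $\pH/2$ terms of the leading sum in Eq.~\eqref{equ:ext-optimal_v} are nonnegative, each later bit averages to $1/2$, and every difference $\bit{s}{h+1}-\bit{s}{h}$ with $h\ge \pH+1$ averages to $0$. This yields
\[
\E_{s\in\ell_k}[\optV(s)]\ge \tfrac12\cdot\frac{\gamma^{\pH/2}-\gamma^\pH}{1-\gamma}+\frac{\gamma^\pH}{1-\gamma}-\gamma^{\pH-1}.
\]
Subtracting $\gamma\optV(c)$ and the reward bound $1$, the coefficient of $\frac{1}{1-\gamma}$ becomes $\tfrac12\gamma^{\pH/2}+\tfrac12\gamma^\pH-\gamma^{\pH+1}$, which tends to $\tfrac12(e^{-1/2}-e^{-1})\approx 0.119$ as $\pH\to\infty$ (using $\gamma=1-1/\pH$), while all remaining terms are $O(1)$. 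Since $\frac{0.119}{1-\gamma}=0.119\,\pH$ dominates the $O(1)$ slack, the average gap exceeds $\frac{0.1}{1-\gamma}$ once $\pH$ is large enough, which is exactly the claim.

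The main obstacle is the clip-to-$1$ case: one must recognize that state $1$ is \emph{not} an absorbing trap, precisely because the fifth action permits an escape to $\kappa$, so that $\optV(1)=\optV(0)$ and the formula Eq.~\eqref{equ:ext-optimal_v} extends correctly to the boundary value. The rest is the bit-averaging bookkeeping, whose only delicate point is verifying that the surviving $\Theta(1/(1-\gamma))$ contribution is driven by the $\gamma^{\pH/2}$ scale (the width of $\ell_k$) and clears the threshold $0.1$ with enough room to absorb the reward term and the lower-order $O(1)$ corrections.
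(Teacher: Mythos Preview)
Your proposal is correct and follows essentially the same strategy as the paper: both arguments reduce to showing that the average of $\optV$ over $\ell_k$ exceeds $\gamma\optV(c)$ by $\Theta(1/(1-\gamma))$ via bit-averaging, with the key observation that bits $\bit{s}{h}$ for $h>\pH/2$ are uniform on $\ell_k$ and contribute the dominant $\gamma^{\pH/2}/(1-\gamma)$ term. The only notable difference is that the paper routes through $\hat s=\dy(s,\optpi(s))$ and bounds $\optV(\hat s)-\optV(0)\ge\sum_{i=1}^{H}\gamma^{i-1}\bit{\hat s}{i}$ (absorbing the reward gap as $-\cstt$), whereas you average $\optV(s)$ directly and bound the reward by $1$; your explicit verification that $\optV(1)=\optV(0)$ via action~$4$ is more careful than the paper's ``essentially the same'' for the clip-to-$1$ case.
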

\begin{proof}
Without loss of generality, we consider the case where $\dy(s,\pi(s))=0.$ The proof for $\dy(s,\pi(s))=1$ is essentially the same.

By elementary manipulation, we have $$\optV(s)-\optV(0)\ge \sum_{i=1}^{H}\gamma^{i-1}\bit{s}{i}.$$

Let $\ns=\dy(s,\optpi(s)).$ It follows from Bellman equation~\eqref{equ:bellman_equation} that
\begin{align*}
\optV(s)&=r(s,\optpi(s))+\gamma \optV(\ns),\\
\optQ(s,\pi(s))&=r(s,\pi(s))+\gamma \optV(0).
\end{align*}
Recall that we define $\epsilon=2\left(\gamma^{H-1}-\gamma^H\right).$ As a consequence,
\begin{align*}
(\optV(s)-\optQ(s,\pi(s)))&>r(s,\optpi(s))-r(s,\pi(s))+\gamma(\optV(\ns)-\optV(0))\\
&\ge -\epsilon+\gamma\sum_{i=1}^{H}\gamma^{i-1}\bit{\ns}{i}.
\end{align*}
Plugging into Eq~\eqref{lem:ext-single-step-loss}, we have
\begin{align*}
&\frac{1}{|\ell_k|}\int_{s\in \ell_k}(\optV(s)-\optQ(s,\pi(s)))\;d s \ge \;-\epsilon+\frac{1}{|\ell_k|}\int_{s\in \ell_k}\left(\sum_{i=1}^{H}\gamma^{i}\right)\bit{\ns}{i}\;d s\\
\ge &\;-\epsilon+\sum_{i=1}^{H}\gamma^{i}\left(\frac{1}{|\ell_k|}\int_{s\in \ell_k}\bit{\ns}{i}\;d s\right)
\ge -\epsilon+\frac{\gamma^{H/2}-\gamma^H}{1-\gamma}.
\end{align*}
Lemma~\ref{lem:ext-single-step-loss} is proved by noticing for large enough $H$, $$-\epsilon+\frac{\gamma^{H/2}-\gamma^H}{1-\gamma}>\frac{0.1}{1-\gamma}.$$
\end{proof}

Let $D=\{0,1\}$ for simplicity. For any policy $\pi$, we define a transition operator $\hat{\calT}^{\pi}$, such that $$\left(\hat{\calT}^{\pi}\mu\right)(Z)=\mu\left(\{s:p(s,a)\in Z,\dy(s,\pi(s))\not\in D\right),$$ and the state distribution induced by it, defined recursively by
\begin{align*}
&\hat\mu_{1}^{\pi}(s)=1,\\
&\hat\mu_{h}^{\pi}=\hat{\calT}^{\pi}\mu_{h-1}^{\pi}.
\end{align*}
We also define the density function for states that are truncated as follows,
\begin{align*}
&\hat\rho_{h}^{\pi}(s)=\ind[\dy(s,\pi(s))\in D]\hat\mu_{h}^{\pi}\left(s\right).\\
\end{align*}

Following advantage decomposition lemma (Corollary~\ref{cor:advantage}), the key step for proving Theorem~\ref{thm:ext-competitive_ratio} is
\begin{equation}\label{equ:ext-advantage}
\eta(\pi^\star)-\eta(\pi)\ge\sum_{h=1}^{\infty}\gamma^{h-1}\E_{s\sim \hat\mu_h^{\pi}}\left[V^\star(s)-Q^\star(s,\pi(s))\right]+\sum_{h=1}^{\infty}\gamma^{h}\E_{s\sim \rho_{h}^{\pi}}\left[V^\star(s)-Q^\star(s,\pi(s))\right].
\end{equation}

Similar to Lemma~\ref{lem:near_uniform}, the following lemma shows that the density for most of the small intervals is either uniformly clipped, or uniformly spread over this interval.
\begin{lemma}\label{lem:ext-near_uniform}
Let $z(\pi)$ be the number of pieces of policy $\pi$. For $k\in [2^{\pH/2}]$, define interval $\ell_k=[k/2^{\pH/2},(k+1)/2^{\pH/2}).$ Let $\nu_h(k)=\inf_{s\in \ell_k}\hat{\mu}_{h}^{\pi}(s)$ and $\omega_h(k)=\inf_{s\in \ell_k}\hat{\rho}_h^\pi(s).$
If the initial state distribution $\mu$ is uniform distribution, then for any $h\ge 1,$
\begin{equation}
\sum_{k=0}^{2^{\pH/2}}2^{-{\pH/2}}\cdot \nu_h(k)+\sum_{h'=1}^{h-1}\sum_{k=0}^{2^{\pH/2}}2^{-{\pH/2}}\cdot \omega_{h'}(k)\ge 1-2h\frac{z(\pi)+10}{2^{\pH/2}}.
\end{equation}
\end{lemma}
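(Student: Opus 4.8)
The plan is to mimic the inductive construction in the proof of Lemma~\ref{lem:near_uniform}, but now to track two piecewise-constant densities at the coarser resolution $2^{-\pH/2}$: a function $\xi_h$ lower-bounding the never-clipped mass $\hat\mu_h^\pi$, and a family of functions $\psi_{h'}$ lower-bounding the about-to-be-clipped mass $\hat\rho_{h'}^\pi$. Concretely, I would maintain by induction on $h$ that (a) $0\le \xi_h(s)\le \min\{\hat\mu_h^\pi(s),1\}$ with $\xi_h$ constant on each interval $\ell_k$; (b) $0\le \psi_h(s)\le \hat\rho_h^\pi(s)$ with $\psi_h$ constant on each $\ell_k$; and (c) the surviving uniform mass together with the clipped mass collected so far satisfies $\int \xi_h + \sum_{h'=1}^{h-1}\int \psi_{h'}\ge 1-2h(z(\pi)+10)/2^{\pH/2}$. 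Since $\nu_h(k)=\inf_{s\in\ell_k}\hat\mu_h^\pi$ and $\omega_h(k)=\inf_{s\in\ell_k}\hat\rho_h^\pi$ dominate the per-interval values of $\xi_h$ and $\psi_h$, statement (c) immediately yields the claim.

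The key structural facts I would establish are two. First, an exact conservation identity: because the non-clipped map $s\mapsto 2s+c$ has Jacobian $2$ with two preimages, it preserves total mass on the non-clipped region, so $\int\hat\mu_{h+1}^\pi=\int\hat\mu_h^\pi-\int\hat\rho_h^\pi$, and hence $\int\hat\mu_h^\pi+\sum_{h'=1}^{h-1}\int\hat\rho_{h'}^\pi=1$ exactly. Thus the lemma is really a statement that the \emph{uniform} (per-interval-infimum) parts capture all but a $2h(z(\pi)+10)/2^{\pH/2}$ fraction of this unit mass. Second, an almost-unanimity fact for clipping: whether $\clip(2s+c)\in D=\{0,1\}$ is triggered is constant across each interval $\ell_k$, \emph{except} on the intervals containing a clip threshold. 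The clip thresholds are the points where $2s+c\in\{0,1\}$ for one of the five actions, a constant number of points, plus the kinks of $\pi$; together these spoil at most $z(\pi)+10$ intervals, which is the source of the constant $10$. On every unspoiled interval, both the surviving density and the clipped density are constant (by the induction hypothesis and the preimage-alignment argument, since the preimages of an $\ell_k$ under $s\mapsto 2s+c$ are half-grid intervals of width $2^{-\pH/2-1}$, exactly as in Lemma~\ref{lem:near_uniform}).

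The inductive step then proceeds as before: for each target $\ell_k$ I would transfer density uniformly from those preimage subintervals on which both $\pi$ and the clip indicator act unanimously, define $\xi_{h+1}$ by this uniform transfer, and define $\psi_h$ as the uniform part of the mass that $\hat\rho_h^\pi$ removes; the density on spoiled intervals is discarded. Counting as in Lemma~\ref{lem:near_uniform} (each kink of $\pi$ and each clip threshold belongs to the preimage neighborhood of at most two target intervals), the discarded mass per step is at most $2(z(\pi)+10)/2^{\pH/2}$, which closes the induction. I expect the main obstacle to be the bookkeeping in the inductive step that simultaneously routes mass into $\xi_{h+1}$ and into $\psi_h$ while keeping both piecewise constant: one must verify that a spoiled interval contributes to the loss only once (not to both densities), and that on the exceptional threshold intervals the crude bound $z(\pi)+10$ genuinely covers all five actions' clip boundaries. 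Once this is handled, the estimate and the union over intervals are routine and parallel to the original argument.
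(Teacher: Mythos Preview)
Your proposal is correct and matches the approach the paper intends: the paper's own proof of Lemma~\ref{lem:ext-near_uniform} is simply ``Omitted. The proof is similar to Lemma~\ref{lem:near_uniform},'' and your inductive construction---tracking a piecewise-constant lower bound $\xi_h$ for $\hat\mu_h^\pi$ together with the clipped-mass functions $\psi_{h'}$ for $\hat\rho_{h'}^\pi$, and identifying the additive $+10$ as arising from the at most ten clip thresholds across the five actions---is exactly the natural extension of that argument. Your bookkeeping concerns (that a spoiled interval is charged only once, and that the constant $10$ genuinely covers all clip boundaries) are the right things to verify, but they are routine and pose no real obstacle.
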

\begin{proof}
Omitted. The proof is similar to Lemma~\ref{lem:near_uniform}.
\end{proof}

Now we present the proof for Theorem~\ref{thm:ext-competitive_ratio}.
\begin{proof}[Proof of Theorem~\ref{thm:ext-competitive_ratio}]
For any $k\in [2^{\pH/2}]$, consider the interval $\ell_k=[k/2^{\pH/2},(k+1)/2^{\pH/2}).$. If $\pi$ does not change at interval $\ell_k$ (that is, $\left|\{\pi(s):s\in \ell_k\}\right|=1$), by Lemma~\ref{lem:regret_in_interval} we have
\begin{equation}\label{equ:ext-regret}
\int_{s\in \ell_k}(\optV(s)-\optQ(s,\pi(s)))\;d s\ge 0.075\cdot 2^{-{\pH/2}}.
\end{equation}

By Eq~\eqref{equ:ext-advantage}, Eq~\eqref{equ:ext-regret} and Lemma~\eqref{lem:ext-single-step-loss}, we have
\begin{align}
&\eta(\optpi)-\eta(\pi)\notag\\
\ge\;&\sum_{h=1}^{H}\gamma^{h-1}\left(\sum_{k=0}^{2^{\pH/2}}0.075\cdot 2^{-H/2}\cdot \nu_h(k)\right)+\sum_{h=1}^{H}\sum_{k=0}^{2^{\pH/2}}\gamma^{h}\cdot 2^{-H/2}\cdot\omega_h(k)\cdot \frac{0.1}{1-\gamma}.\label{equ:ext-step1}
\end{align}

By Lemma~\ref{lem:ext-near_uniform}, we get
\begin{equation}
\sum_{k=0}^{2^{\pH/2}}2^{-{\pH/2}}\cdot \nu_h(k)+\sum_{h'=1}^{h-1}\sum_{k=0}^{2^{\pH/2}}2^{-{\pH/2}}\cdot \omega_{h'}(k)\ge 1-2h\frac{z(\pi)+10}{2^{\pH/2}}.
\end{equation}
For the sake of contradiction, we assume $z(\pi)=o\left(\exp(cH)/H\right)$, then for large enough $H$ we have,
$$1-2\frac{\pH z(\pi)+10}{2^{\pH/2}}> 0.8.$$ Consequently,
\begin{equation}
\sum_{k=0}^{2^{\pH/2}}2^{-{\pH/2}}\cdot \nu_h(k)>0.8-\sum_{h'=1}^{h-1}\sum_{k=0}^{2^{\pH/2}}2^{-{\pH/2}}\cdot \omega_{h'}(k).
\end{equation}
Plugging in Eq~\eqref{equ:ext-step1}, we get
\begin{align*}
&\eta(\optpi)-\eta(\pi)\notag\\
\ge\;&\sum_{h=1}^{H}0.075\gamma^{h-1}\left(\sum_{k=0}^{2^{\pH/2}} 2^{-H/2}\nu_h(k)\right)+\sum_{h=1}^{H}\sum_{k=0}^{2^{\pH/2}}\gamma^{h}\cdot 2^{-H/2}\cdot\omega_h(k)\cdot \frac{0.1}{1-\gamma}.\\
\ge\;&\sum_{h=1}^{H}0.075\gamma^{h-1}\left(0.8-\sum_{h'=1}^{h-1}\sum_{k=0}^{2^{\pH/2}}2^{-{\pH/2}}\cdot \omega_{h'}(k)\right)+\sum_{h=1}^{H}\sum_{k=0}^{2^{\pH/2}}\gamma^{h}\cdot 2^{-H/2}\cdot\omega_h(k)\cdot \frac{0.1}{1-\gamma}\\
\ge\;&0.06\frac{1-\gamma^{H}}{1-\gamma}+\sum_{h=1}^{H}\sum_{k=0}^{2^{\pH/2}}\cdot 2^{-H/2}\cdot\omega_h(k)\left(\frac{0.1\gamma^{h}}{1-\gamma}-0.075\sum_{h'=h}^{H}\gamma^{h'-1}\right)\\
\ge\;&0.06\frac{1-\gamma^{H}}{1-\gamma}+\sum_{h=1}^{H}\sum_{k=0}^{2^{\pH/2}}\cdot 2^{-H/2}\cdot\omega_h(k)\frac{\gamma^{h-1}}{1-\gamma}\left(0.1\gamma-0.075\left(1-\gamma^{H-h}\right)\right)
\end{align*}
When $\gamma>1/4,$ we have $0.1\gamma-0.075(1-\gamma^{H-h})>0.$ As a consequence,
$$\eta(\pi^\star)-\eta(\pi)>0.06\frac{1-\gamma^{H}}{1-\gamma}\ge \frac{0.01}{1-\gamma}.$$

Now, since $\eta(\optpi)\le 1/(1-\gamma),$ we have $\eta(\pi)<0.99\eta(\optpi).$ Therefore for near-optimal policy $\pi$, $z(\pi)=\Omega\left(\exp(cH)/H\right).$
\end{proof}

\section{Omitted Details of Empirical Results in the Toy Example}\label{app:empirical}
\subsection{Two Methods to Generate MDPs}\label{sec:pseudo-random}

In this section we present two methods of generating MDPs. In both methods, the dynamics $p(s,a)$ has three pieces and is Lipschitz. The dynamics is generated by connecting kinks by linear lines.
\paragraph{RAND method.} As stated in Section~\ref{sec:random_mdp}, the RAND method generates kinks $\{x_i\}$ and the corresponding values $\{x_i'\}$ randomly. In this method, the generated MDPs are with less structure. The details are shown as follows.
\begin{itemize}
    \item State space $\calS=[0,1).$
    \item Action space $\calA=\{0,1\}.$
    \item Number of pieces is fixed to $3$. The positions of the kinks are generated by, $x_i\sim U(0,1)$ for $i=1,2$ and $x_0=0,x_1=1.$ The values are generated by $x_i'\sim U(0,1).$
    \item The reward function is given by $r(s,a)=s,\;\forall s\in \calS,a\in \calA$.
    \item The horizon is fixed as $H=10$.
    \item Initial state distribution is $U(0,1).$
\end{itemize}
Figure~\ref{fig:truly_random} visualizes one of the RAND-generated MDPs with complex Q-functions.

\paragraph{SEMI-RAND method.} In this method, we add some structures to the dynamics, resulting in a more significant probability that the optimal policy is complex. We generate dynamics with fix and shared kinks, generate the output at the kinks to make the functions fluctuating. The details are shown as follows.
\begin{itemize}
    \item State space $\calS=[0,1).$
    \item Action space $\calA=\{0,1\}.$
    \item Number of pieces is fixed to $3$. The positions of the kinks are generated by, $x_i=i/3,\;\forall 0\le i\le 3.$ And the values are generated by $x_i'\sim 0.65\times\ind[i\mod 2 = 0] + 0.35 \times U(0,1).$
    \item The reward function is $r(s,a)=s$ for all $a\in \calA$.
    \item The horizon is fixed as $H=10$.
    \item Initial state distribution is $U(0,1).$
\end{itemize}

Figure~\ref{fig:truly_random} visualizes one of the MDPs generated by SEMI-RAND method.
\iffalse
\begin{figure}[h]
	\centering
	\begin{subfigure}[b]{0.4\textwidth}
		\includegraphics[width=\textwidth]{dynamics.png}
		\caption{Visualization of dynamics.}
		\label{fig:empirical_a}
	\end{subfigure}
	\quad
	\begin{subfigure}[b]{0.4\textwidth}
		\includegraphics[width=\textwidth]{Q-function.png}
		\caption{Visualization of $Q$-function.}
		\label{fig:empirical_b}
	\end{subfigure}
	\caption{{\bf Left:} the dynamics of a randomly-generated MDPs from the SEMI-RAND method outlined in Section~\ref{sec:random_mdp}. {\bf Right: } the corresponding $Q$-function.}
	\label{fig:pseudo_random}
\end{figure}
\fi

\subsection{The Complexity of Optimal Policies in Randomly Generated MDPs}\label{app:random_generated_mdps}

We randomly generate $10^3$ 1-dimensional MDPs whose dynamics has constant number of pieces. The histogram of number of pieces in optimal policy $\optpi$ is plotted. As shown in Figure~\ref{fig:histogram}, even for horizon $H=10$, the optimal policy tends to have much more pieces than the dynamics.

\begin{figure}[h]
		\centering
		\begin{subfigure}[b]{0.45\textwidth}
            \includegraphics[width=\textwidth]{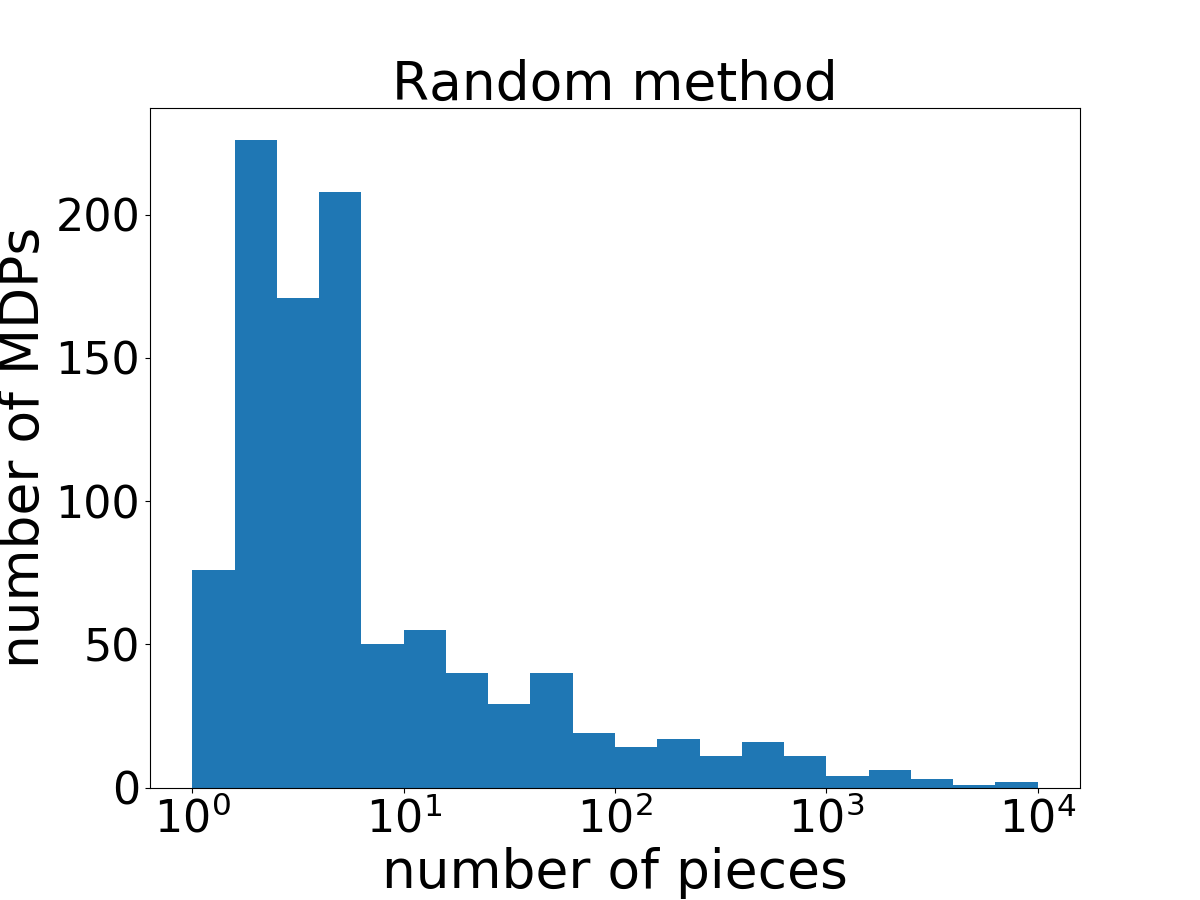}
        \end{subfigure}
        \quad
        \begin{subfigure}[b]{0.45\textwidth}
            \includegraphics[width=\textwidth]{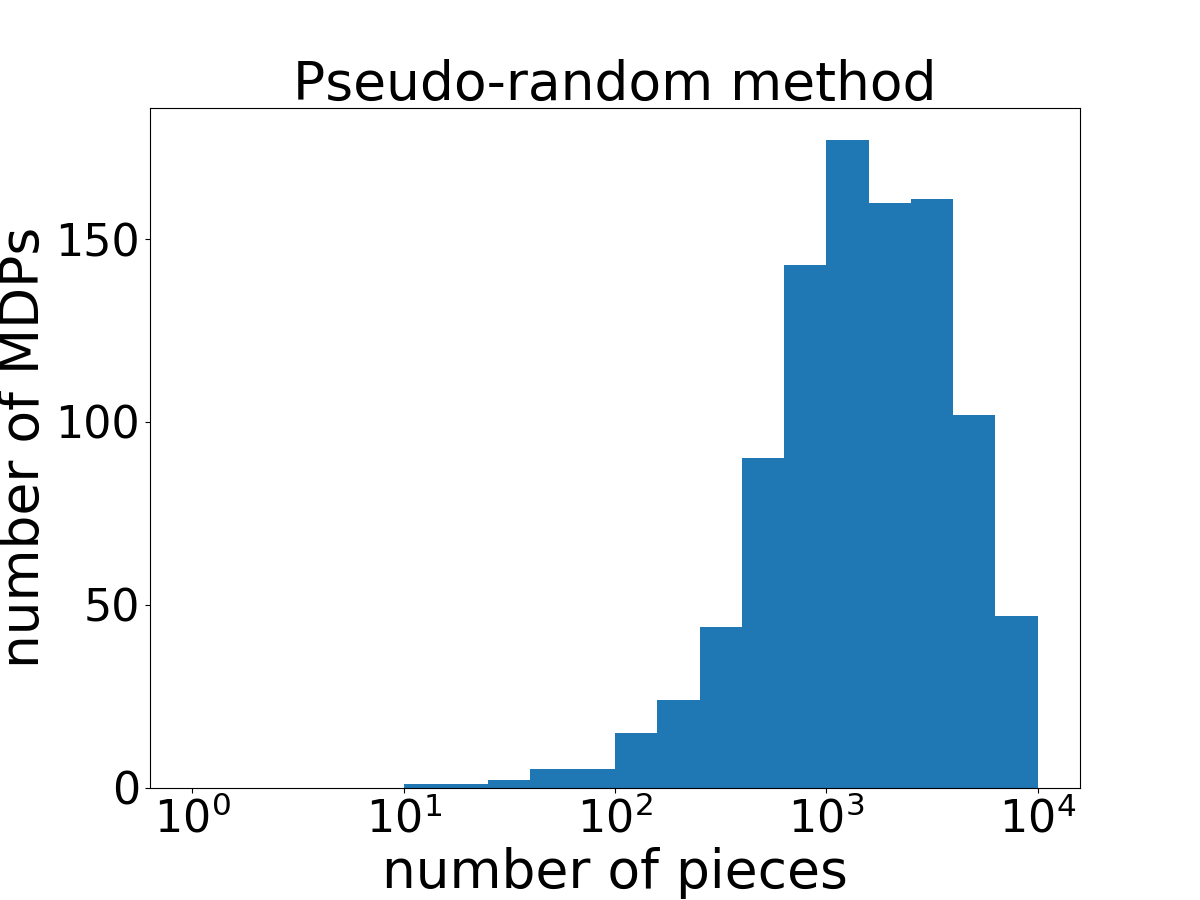}
        \end{subfigure}
		\caption{The histogram of number of pieces in optimal policy $\optpi$ in random method (left) and semi-random method(right). }
        \label{fig:histogram}
\end{figure}

\subsection{Implementation Details of Algorithms in Randomly Generated MDP}\label{app:implement_toy}
\paragraph{SEMI-RAND MDP}
The MDP where we run the experiment is given by the SEMI-RAND method, described in Section~\ref{sec:pseudo-random}. We list the dynamics of this MDP in the following.
\begingroup
\allowdisplaybreaks
\begin{align*}
r(s,a)&=s,\quad \forall s\in \calS, a\in \calA,\\
\dy(s,0)&=\begin{cases}
(0.131-0.690) \cdot x / 0.333 + 0.690,&0\le x<0.333,\\
(0.907-0.131) \cdot (x - 0.333) / 0.334 + 0.131,&0.333\le x<0.667,\\
(0.079-0.907) \cdot (x - 0.667) / 0.333 + 0.907,&0.667\le x,\\
\end{cases}\\
\dy(s,1)&=\begin{cases}
(0.134-0.865) \cdot x / 0.333 + 0.865,&0\le x<0.333,\\
(0.750-0.134) \cdot (x - 0.333) / 0.334 + 0.134,&0.333\le x<0.667,\\
(0.053-0.750) \cdot (x - 0.667) / 0.333 + 0.750,&0.667\le x,\\
\end{cases}
\end{align*}
\endgroup
\paragraph{Implementation details of DQN algorithm}
We present the hyper-parameters of DQN algorithm. Our implementation is based on PyTorch tutorials\footnote{\url{https://pytorch.org/tutorials/intermediate/reinforcement_q_learning.html}}.
\begin{itemize}
    \item The Q-network is a fully connected neural net with one hidden-layer. The width of the hidden-layer is varying.
    \item The optimizer is SGD with learning rate $0.001$ and momentum $0.9.$
    \item The size of replay buffer is $10^4$.
    \item Target-net update frequency is $50$.
    \item Batch size in policy optimization is $128.$
    \item The behavior policy is greedy policy according to the current Q-network with $\epsilon$-greedy. $\epsilon$ exponentially decays from $0.9$ to $0.01$. Specifically, $\epsilon=0.01+0.89\exp(-t/200)$  at the $t$-th episode.
\end{itemize}

\paragraph{Implementation details of MBPO algorithm}
For the model-learning step, we use $\ell_2$ loss to train our model, and we use
Soft Actor-Critic (SAC) \citep{haarnoja2018soft} in the policy optimization step. The parameters
are set as,
\begin{itemize}
	\item number of hidden neurons in model-net: $32$,
	\item number of hidden neurons in value-net: $512$,
	\item optimizer for model-learning: Adam with learning rate $0.001.$
	\item temperature: $\tau=0.01$,
	\item the model rollout steps: $M=5$,
	\item the length of the rollout: $k=5,$
	\item number of policy optimization step: $G=5.$
\end{itemize}
Other hyper-parameters are kept the same as DQN algorithm.

\paragraph{Implementation details of TRPO algorithm} For the model-learning step, we use $\ell_2$ loss to train our model. Instead of TRPO \citep{schulman2015trust}, we use PPO \citep{schulman2017proximal} as policy optimizer. The parameters are set as,
\begin{itemize}
	\item number of hidden neurons in model-net: $32$,
	\item number of hidden neurons in policy-net: $512$,
	\item number of hidden neurons in value-net: $512$,
	\item optimizer: Adam with learning rate $0.001,$
	\item number of policy optimization step: $5.$
	\item The behavior policy is $\epsilon$-greedy policy according to the current policy network. $\epsilon$ exponential decays from $0.9$ to $0.01$. Specifically, $\epsilon=0.01+0.89\exp(-t/20000)$  at the $t$-th episode.
\end{itemize}

\paragraph{Implementation details of Model-based Planning algorithm} The perfect model-based planning algorithm iterates between learning the dynamics from sampled trajectories, and planning with the learned dynamics (with an exponential time algorithm which enumerates all the possible future sequence of actions). The parameters are set as,
\begin{itemize}
	\item number of hidden neurons in model-net: $32$,
	\item optimizer for model-learning: Adam with learning rate $0.001.$
\end{itemize}

\paragraph{Implementation details of bootstrapping} The training time behavior of the algorithm is exactly like DQN algorithm, except that the number of hidden neurons in the Q-net is set to $64$. Other parameters are set as,
\begin{itemize}
	\item number of hidden neurons in model-net: $32$,
	\item optimizer for model-learning: Adam with learning rate $0.001.$
	\item planning horizon varies.
\end{itemize}

\section{Technical Lemmas}\label{sec:technical}
In this section, we present the technical lemmas used in this paper.

\begin{lemma}\label{lem:technical_1}
For $A,B,C,D\ge 0$ and $AC\ge BD$, we have
\[
A+C+\frac{1}{2}(B+D)\ge 2\sqrt{AC+BD}.
\]
Furthermore, when $BD>0$, the inequality is strict.
\end{lemma}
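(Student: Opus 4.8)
The plan is to peel off the square root on the right-hand side by splitting the left-hand side into two AM--GM bounds and then reducing everything to a single one-variable inequality that is driven by the hypothesis $AC \ge BD$.

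First I would apply the AM--GM inequality to the two natural groupings: since $A,C\ge 0$ we have $A+C \ge 2\sqrt{AC}$, and since $B,D\ge 0$ we have $\tfrac12(B+D)\ge \sqrt{BD}$. Adding these, it suffices to prove the sharper-looking claim
\[
2\sqrt{AC}+\sqrt{BD}\ge 2\sqrt{AC+BD}.
\]
Writing $u=AC$ and $v=BD$, the hypothesis becomes $u\ge v\ge 0$, and since both sides are nonnegative I can square. The left side squares to $4u+4\sqrt{uv}+v$ and the right to $4u+4v$, so the displayed inequality is equivalent to $4\sqrt{uv}\ge 3v$. When $v=0$ this is trivial, and when $v>0$ it is equivalent (after dividing by $\sqrt{v}$ and squaring) to $16u\ge 9v$, which holds because $16u\ge 16v\ge 9v$.

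For the strict part, I would assume $BD=v>0$. Then $16u\ge 16v>9v$ strictly, so $4\sqrt{uv}>3v$ and hence the reduced inequality $2\sqrt{AC}+\sqrt{BD}> 2\sqrt{AC+BD}$ is strict; chaining this with the (possibly non-strict) AM--GM steps still yields a strict overall inequality. There is essentially no hard step here: the only thing to get right is the choice of groupings for AM--GM and the bookkeeping of where strictness actually enters, namely in the single-variable step $16u>9v$ rather than in the AM--GM bounds themselves (which can be tight, e.g.\ when $A=C$ and $B=D$).
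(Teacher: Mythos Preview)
Your proof is correct and follows essentially the same approach as the paper: both reduce via AM--GM to the intermediate inequality $2\sqrt{AC}+\sqrt{BD}\ge 2\sqrt{AC+BD}$, then square and use $AC\ge BD$ to finish (the paper writes the squared difference as $4\sqrt{AC\cdot BD}-3BD\ge BD\ge 0$, which is exactly your $4\sqrt{uv}\ge 3v$). Your handling of strictness is also the same in spirit, and your remark that strictness enters in the single-variable step rather than in the AM--GM bounds is a nice clarification.
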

\begin{proof}
Note that $A+B+\frac{1}{2}(C+D)\ge 2\sqrt{AC}+\sqrt{BD}.$ And we have,
\[
\left(2\sqrt{AC}+\sqrt{BD}\right)^2-\left(2\sqrt{AC+BD}\right)^2=4\sqrt{AC\cdot BD}-3BD\ge BD\ge 0.
\]
And when $BD>0$, the inequality is strict.
\end{proof}

\end{document}